\newcommand{\norm}[1]{\lVert#1\rVert}
\newcommand{\diverg}[1]{\operatorname{div}\left( #1 \right)}
\newcommand{\inprod}[2]{\langle #1, #2 \rangle}
\newtheorem{proposition}{Proposition}
\newcommand{\eqnum}{\leavevmode\hfill\refstepcounter{equation}\textup{\tagform@{\theequation}}}
\begin{document}

\title{A Finite Element Computational Framework for Active Contours on Graphs}

\author{Nikolaos~Kolotouros and~Petros~Maragos
	\thanks{This work was primarily performed while N. Kolotouros was at the National Technical University of Athens.}
	\thanks{N. Kolotouros is with the Department of Computer and Information Science, University of Pennsylvania, Philadelphia, PA 19104 USA (e-mail: nkolot@seas.upenn.edu)}
	\thanks{P. Maragos is with the School of Electrical and Computer Engineering,
		National Technical University of Athens, Athens, Greece (e-mail: maragos@cs.ntua.gr)}
}

\maketitle

\begin{abstract}
In this paper we present a new framework for the solution of active contour models on graphs. With the use of the Finite Element Method we generalize active contour models on graphs and reduce the problem from a partial differential equation to the solution of a sparse non-linear system. Additionally, we extend the proposed framework to solve models where the curve evolution is locally constrained around its current location. Based on the previous extension, we propose a fast algorithm for the solution of a wide range active contour models. Last, we present a supervised extension of Geodesic Active Contours for image segmentation and provide experimental evidence for the effectiveness of our framework.
\end{abstract}

\begin{IEEEkeywords}
Active contours, finite element analysis, image segmentation, graph segmentation
\end{IEEEkeywords}

\IEEEpeerreviewmaketitle

\section{Introduction}\label{intro}
Graph-based methods have become very popular in computer vision and image processing. Graph theory provides an efficient and rigorous framework to model complex relationship between data and has been used for various tasks such image segmentation, motion detection and pattern matching. Typically, images are represented as graphs with the image pixels treated as the graph vertices, whereas the graph edges can be chosen from a wide range of options, depending on the suitability for each particular application. Examples of popular graph-based methods include \emph{Graph Cuts} \cite{boykov_jolly,boykov_veksler_zabih,kolmogorov_zabih}, \emph{Random Walker} \cite{grady}, \cite{bampis} and \emph{Power Watershed} \cite{couprie_grady_najman_talbot,cousty_bertrand_najman_couprie_2009}.

Active Contours form a class of methods that try to minimize continuous energy functionals associated with a particular problem. As implied by their name, they are curves that evolve in the domain of an image and are used, among other applications, for object detection and image segmentation. They were introduced by Kass et al. in the form of ``snakes'' \cite{kass}. Widely used active contour models include \emph{Geodesic Active Contours} \cite{cas_kim_sap_97} and \emph{Active Contours Without Edges} \cite{chan_vese}.

In this paper we will focus on the problem of active contours on graphs. We will generalize the notion of active contours on graphs and present a computational framework that allows the solution of a wide range of active contour models that can be modeled with levelset equations. Our approach is based on the Finite Element method that simplifies the form of the problem from a partial differential equation to a sparse non-linear system. We will also present an extension to our model that under reasonable assumptions can be used to speed up the curve evolution on large graphs. Our analysis will focus mainly on segmentation applications, however its scope is much more general.

Our main contributions are:
\begin{itemize}
\item We develop a novel theoretical and computational framework for the solution of general active contour models on graphs using the Finite Element method.
\item We generalize narrow band levelset methods on graphs and present an efficient algorithm for active contour evolution on large graphs.
\item We present an extension of the Geodesic Active Contour model that incorporates statistical region information, which achieves results that are within state-of-the-art.
\end{itemize}

In Section \ref{section:acog} we present a novel approach for the solution of general active contour models on graphs using the Finite Element method. In Section \ref{section:constrained_acog} we provide an extension to our framework that can solve locally constrained active contour models and can also be used for fast contour evolution on graphs. In Section \ref{section:gar} we propose a modification of the Geodesic Active Contours model that includes statistical region information. In Section \ref{section:experimental} we provide experimental evidence for the efficiency of our method, focusing on segmentation applications and last, in Section \ref{section:concl} we make some concluding remarks.

\section{Active Contours on Graphs}\label{section:acog}
\subsection{Problem Formulation}
In this section we will present a novel method for solving Active Contour evolution equations on graphs employing the Finite Element method. Finite Element Analysis is a powerful framework that enables us to solve complex partial differential equations in a simple and elegant manner. The other popular family of methods for the solution of PDEs is the Finite Difference Method. The main difference between these 2 approaches is that Finite Difference methods try to approximate the differential operators using discrete schemes whereas in the Finite Element Method we approximate the solution of the equation with functions belonging to finite dimensional function spaces. This can be useful, especially in the case of small graphs where the accurate approximation of differential operators is a challenging problem. An example of a Finite Difference approach for the solution of active contour models on graphs is proposed in \cite{sakaridis}.

In this paper, we will consider general Active Contour models that can be modeled with levelset equations of the form

\begin{gather}
\label{eq:levelset_general}
F(u)\frac{\partial u}{\partial t} = \diverg{G(u)\nabla u} + H(u)\\
\label{eq:levelset_general_boundary_cond}
\nabla u \cdot \mathbf{n} = 0 \text{ on the boundary } \partial \Omega\\
u(x,y,0) = \operatorname{dist^\ast}(x,y)
\end{gather}
where $A$, $B$ and $C$ are functionals of $u$ and $\operatorname{dist^\ast}(x,y)$ is the signed distance function from the initial curve.

Popular active contour models that can be expressed in the above form are
\begin{itemize}
\item \emph{Erosion/Dilation}:
\begin{equation}
F(u) = \frac{1}{\norm{\nabla u}}, \quad G(u) = 0, \quad H(u) = c
\end{equation}
\item \emph{Geometric Active Contours}:
\begin{equation}
F(u) = \frac{1}{g_I\norm{\nabla u}}, \quad G(u) = \frac{1}{\norm{\nabla u}}, \quad H(u) = 0
\end{equation}
\item \emph{Geodesic Active Contours}:
\begin{equation}
F(u) = \frac{1}{\norm{\nabla u}}, \quad G(u) = \frac{g_I}{\norm{\nabla u}}, \quad H(u) = \beta
\end{equation}
\end{itemize}
where $g_I(x,y)$ is the edge stopping function defined as
\begin{equation}
g_I = \frac{1}{1+\lambda \norm{\nabla I}^2}
\label{eq:edge_stopping_function}
\end{equation}
As a first step, we will convert \eqref{eq:levelset_general}' into an ``equivalent'' integral equation. To do this we multiply both parts of \eqref{eq:levelset_general} with a function $\phi \in H^1(\Omega)$ and integrate in $\Omega$. $H^1(\Omega)$ is the Sobolev space consisting of all functions defined in $\Omega$ whose first order derivatives --in the distributional sense-- belong in $L^2(\Omega)$, the space of Lebesgue square-integrable functions.
Thus we have
\begin{equation}
\begin{aligned}
\iint\limits_{\Omega}F(u)\frac{\partial u}{\partial t}\phi \, {d}x {d}y
&= \iint\limits_{\Omega}\diverg{G(u)\nabla u}\phi \, {d}x {d}y\\
&+ \iint\limits_{\Omega}H(u)\phi \, {d}x {d}y
\end{aligned}
\end{equation}
Using Green's identity
\begin{equation}
\iint\limits_{\Omega}f\diverg{\mathbf{F}}\, {d}x {d}y = - \iint\limits_{\Omega}\nabla f \cdot \mathbf{F}\, {d}x {d}y+ \oint\limits_{\partial \Omega}f~\mathbf{F}\cdot \mathbf{n}\, dr
\end{equation}
and the boundary condition \eqref{eq:levelset_general_boundary_cond} we can obtain the final form of the equation
\begin{equation}
\begin{aligned}
\iint\limits_{\Omega}F(u)\frac{\partial u}{\partial t}\phi\, {d}x {d}y
&= -\iint\limits_{\Omega}G(u)\nabla u \cdot \nabla \phi\, {d}x {d}y\\
&+ \iint\limits_{\Omega}H(u)\phi\, {d}x {d}y
\end{aligned}
\label{eq:weak_form}
\end{equation}
or more generally in functional form
\begin{equation}
\mathcal{F}(u,\dot{u},\phi) = 0
\label{eq:functional_form}
\end{equation}
We then demand that \eqref{eq:functional_form} holds for all $\phi \in H^1(\Omega)$. This is called the weak form of \eqref{eq:levelset_general}.
\subsection{Galerkin Approximation}
Until now we have not made a numerical approximation to the problem. We only converted it to an integral form that we expect to be equivalent to the original problem in some sense. We will try to approximate the solution using the Galerkin method. The core of the Finite Element analysis is that we do not approximate continuous differential operators using finite differences but instead approximate the solution of the equations with functions belonging to finite dimensional subspaces of $H^1(\Omega)$.

Let $V_N$ a N-dimensional subspace of $H^1(\Omega)$ and $\{\phi_i\}_{i=1}^n$ a basis of $V_N$. The solution approximation $\bar{u}$ can be written as a linear combination of the basis functions and since we have a time-dependent problem we allow the coefficients of the linear combination to be functions of time.
Thus we have
\begin{equation}
\bar{u}(x,y,t) = \sum_{i=1}^{N}c_i(t)\phi_i(x,y)
\end{equation}

We demand that \eqref{eq:weak_form} holds at least for all the functions that belong in the subspace $V_N$. Since \eqref{eq:functional_form} is a linear functional of $\phi$ and $\{\phi_i\}_{i=1}^N$ form a basis of $V_N$, this is equivalent to demanding that it holds for each of the basis functions $\phi_i$.
If we use the fact that
\begin{equation}
\frac{\partial \bar{u}}{\partial t}(x,y,t) = \sum_{i=1}^{N}\dot{c}_i(t)\phi_i(x,y)
\end{equation}
and substitute in \eqref{eq:weak_form} we get
\begin{equation}
\begin{aligned}
\sum_{i=1}^{N}\dot{c}_i\iint\limits_{\Omega}F(\bar{u})\phi_i\phi_j\,dxdy
&= -\sum_{i=1}^{N}\iint\limits_{\Omega}G(\bar{u}) \nabla \phi_i \cdot \nabla \phi_j\,dxdy\\
&+ \iint\limits_{\Omega}H(\bar{u})\phi_j, \, j=1,\dots n
\end{aligned}
\end{equation}
where
\begin{equation}
\dot{c}_i = \frac{{d} c_i}{{d} t}
\end{equation}

This is a non-linear system of ODEs that can be written in the form
\begin{equation}
\mathbf{A}(\mathbf{c})\dot{\mathbf{c}} = \mathbf{b}(\mathbf{c})
\label{eq:non_linear_ode}
\end{equation}
where
$\mathbf{A} = \{\mathrm{A}_{ij}\}$ a $N \times N$ matrix that depends on $\mathbf{c}$ and $\mathbf{b} = \{b_i\}_{1}^{N}$, $\mathbf{c} = \{c_i\}_{1}^{N}$ $N$-dimensional vectors with
\begin{equation}
\mathrm{A}_{ij}(\mathbf{c}) = \mathrm{A}_{ji}(\mathbf{c}) = \iint_{\Omega}F\left(\sum_{k=1}^{N}c_k\phi_k\right)\phi_i \phi_j\, dxdy
\label{eq:A_elements}
\end{equation}
and
\begin{equation}
\begin{aligned}
{b}_i(\mathbf{c}) &=  -\sum_{j=1}^{N}\iint\limits_{\Omega}G\left(\sum_{k=1}^{N}c_k \phi_k\right)\nabla \phi_i \cdot \nabla \phi_j\, dxdy\\
&+ \iint\limits_{\Omega}H\left(\sum_{k=1}^{N}c_k\phi_k\right)\phi_i\, dxdy
\end{aligned}
\label{eq:b_elements}
\end{equation}

The initial condition for the above system of differential equations can be obtained from the projection $\bar{u}(x,y,0)$ of $u(x,y,0)$ in the subspace $V_N$, i.e.
\begin{equation}
\bar{u}(x,y,0) = \sum_{i=1}^{N}\inprod{u(x,y,0)}{\phi_i}\phi_i
\end{equation}
and thus
\begin{equation}
c_i(0) = \inprod{u(x,y,0)}{\phi_i}
\label{eq:c0}
\end{equation}
where $\inprod{\cdot}{\cdot}$ is the usual inner product defined on $H^1(\Omega)$ with
\begin{equation}
\inprod{f}{g} =\iint\limits_{\Omega}fg\, dx dy + \iint\limits_{\Omega}\nabla f\cdot\nabla g\, dx dy
\end{equation}
\subsection{Generalization on Graphs}
Consider a graph $G(V,E)$ where $V$ is the set of vertices and $E$ the set of edges. We assume that the graph has a total of $N$ vertices and each vertex $v_i$ is a point in $\Omega$ and can be described by its respective planar coordinates $(x_i, y_i)$. Note that every finite set of points in the plane can be mapped into the unit square by applying a translation followed by a scaling. Thus if we apply an appropriate geometrical transformation we can map the set of vertices to a subset of $\Omega$.  For each vertex $v_i \in V$ we choose a function $\phi_i$ that belongs in $H^1(\Omega)$ that has the following properties

\begin{itemize}
\item $ \{\phi_i\}_{i=1}^N$ are linearly independent \eqnum\label{eq_phi_prop_1}
\item  for $v_i \nsim v_j$, $\operatorname{supp}(\phi_i) \cap \operatorname{supp}(\phi_j) = \emptyset$ \eqnum\label{eq_phi_prop_2}
\end{itemize}

We can see that the functions $  \{\phi_i\}_{i=1}^n$ form a basis of some subspace $S$ of $H^1(\Omega)$. Also, for reasonably smooth functions $\phi_i$, $\operatorname{supp}(\nabla \phi_i) \backslash \operatorname{supp}(\phi_i)$ is a null set\footnote{If $\phi_i$ is zero in an interval, then its derivative is also zero. Thus, if $\phi_i$ has at most countably infinite isolated roots, the above holds.}  and thus $\operatorname{supp}(\nabla \phi_i) \cap \operatorname{supp}(\nabla \phi_j)$ is also a null set. Thus we can conclude that if $v_i$ and $v_j$ are two non-adjacent vertices of $G$, then $\inprod{\phi_i}{\phi_v} = 0$ and thus $\phi_i, \phi_j$ are orthogonal.

\begin{proposition}
For any graph $G(V,E)$ with $v_i = (x_i, y_i) \in \Omega$ there is at least one set of functions $ \{\phi_i\}_{i=1}^N$ with the properties \eqref{eq_phi_prop_1} and \eqref{eq_phi_prop_2}.
\end{proposition}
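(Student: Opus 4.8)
\emph{Proof idea.} The plan is to give an explicit construction: place a narrow, smooth, compactly supported bump at each vertex, making the bumps so thin that their supports are \emph{pairwise} disjoint. Pairwise disjointness is far more than \eqref{eq_phi_prop_2} requires, and nonzero functions with disjoint supports are automatically linearly independent, so \eqref{eq_phi_prop_1} comes for free. The graph structure plays no role in establishing mere existence; it only becomes relevant when one additionally wants the supports of \emph{adjacent} vertices to overlap, which is not demanded here.

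First I would use that $V$ is finite and the $v_i$ are distinct to set $\rho = \tfrac13 \min_{i \neq j}\norm{v_i - v_j} > 0$; then any two of the closed disks $\overline{B(v_i,\rho)}$ are disjoint, since their centers are at distance at least $3\rho$. For each $i$ the set $U_i := B(v_i,\rho)\cap\Omega$ is a nonempty open subset of $\Omega$ — if a vertex happens to lie on $\partial\Omega$ one still has an open half-disk of room there, or alternatively one may compose the affine normalization with a slight further contraction so that every $v_i$ lies in the interior of the unit square. I would then choose any $\phi_i \in C^\infty_c(U_i)$ with $\phi_i \not\equiv 0$ — for instance a translated, rescaled copy of the standard mollifier $\psi(\xi) = \exp\!\big(-1/(1-\norm{\xi}^2)\big)$ for $\norm{\xi}<1$ and $\psi(\xi)=0$ otherwise — extended by zero to all of $\Omega$. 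Being smooth and compactly supported, $\phi_i \in H^1(\Omega)$.

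It then remains to verify the two properties, both of which are immediate. For \eqref{eq_phi_prop_2}: $\operatorname{supp}(\phi_i)\subseteq\overline{B(v_i,\rho)}$, and these closures are pairwise disjoint by the choice of $\rho$, so $\operatorname{supp}(\phi_i)\cap\operatorname{supp}(\phi_j)=\emptyset$ for every $i\neq j$, in particular whenever $v_i\nsim v_j$. For \eqref{eq_phi_prop_1}: suppose $\sum_i a_i\phi_i\equiv 0$ on $\Omega$; fix $k$ and pick a point where $\phi_k>0$ (such a point exists since $\phi_k$ is continuous and not identically zero). On a small neighborhood of that point all $\phi_j$ with $j\neq k$ vanish, so $a_k\phi_k\equiv 0$ there with $\phi_k>0$, forcing $a_k=0$; as $k$ was arbitrary, the $\phi_i$ are linearly independent.

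I do not expect a genuine obstacle. The only points requiring a little care are (i) that the chosen bumps genuinely lie in $H^1(\Omega)$ — which is why I would insist on smooth compactly supported representatives (piecewise-linear hat functions would serve just as well) — and (ii) vertices on $\partial\Omega$, dealt with by intersecting with $\Omega$ or by shrinking the normalizing map. I would close with the remark that this minimal construction makes the Galerkin matrices in \eqref{eq:A_elements}–\eqref{eq:b_elements} block diagonal and hence blind to the connectivity of $G$; in applications one instead wants $\operatorname{supp}(\phi_i)\cap\operatorname{supp}(\phi_j)\neq\emptyset$ precisely when $v_i\sim v_j$, which can be arranged by a suitable choice of finite-element basis functions adapted to the drawing of $G$, but the present statement asserts only existence, for which the disjoint bumps suffice.
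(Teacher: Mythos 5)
Your construction is essentially the same as the paper's: both proofs place a compactly supported bump at each vertex with radius controlled by the minimum pairwise vertex distance, obtain pairwise disjoint supports (stronger than \eqref{eq_phi_prop_2} requires), and deduce linear independence from disjointness of the supports of nonzero functions. The only substantive difference is that your radius $\rho = \tfrac13\min_{i\neq j}\norm{v_i-v_j}$ actually guarantees disjointness, whereas the paper's choice of radius $d = \min_{i\neq j}\norm{v_i-v_j}$ makes the supports of two vertices at distance exactly $d$ overlap, so your version is in fact the more careful one.
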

\begin{proof}
Let $ d = \operatorname{min}\limits_{i\neq j}\norm{v_i - v_j}$. We define the functions
\begin{equation}
\phi_i(x,y)= (d^2-\norm{(x,y) - (x_i,y_i)}_2^2) \cdot \mathds{1}_{\{\norm{(x,y) - (x_i,y_i)}_2 < d\}}
\end{equation}
where $\mathds{1}_A$ is the indicator function of set $A$.
Each function $\phi_i$ is differentiable with
\begin{equation}
\nabla \phi_i(x,y) = 2\cdot\mathds{1}_{\{\norm{(x,y) - (x_i,y_i)}_2 < d\}}\cdot(x_i -x, y_i -y)
\end{equation}
Moreover we can see that $\operatorname{supp}(\phi_i) \cap \operatorname{supp}(\phi_j) = \emptyset$ for all $v_i, v_j \in V$ and since $\phi_i$ is not zero everywhere then $ \{\phi_i\}_{i=1}^N$ are linearly independent.
\end{proof}

Next we will describe how we can obtain the solution of \eqref{eq:non_linear_ode} on a graph $G(V,E)$ equipped with the functions $ \{\phi_i\}_{i=1}^n$. First, from \eqref{eq:A_elements} we can see that $\mathrm{A}_{ij}(\mathbf{c}) = 0$ if $v_i \nsim v_j$. If the number of edges of each vertex is small compared to the total number of vertices then $\mathbf{A}(\mathbf{c})$ is sparse. Additionally the summation in \eqref{eq:b_elements} reduced to a summation in the neighborhood of $v_i$ and thus
\begin{equation}
\begin{aligned}
{b}_i(\mathbf{c}) &=  -\sum_{v_j \in N_i \cup \{v_i\}}\iint\limits_{\Omega}G\left(\sum_{k=1}^{N}c_k\phi_k\right) \nabla \phi_i \cdot \nabla \phi_j\, dxdy\\
&+ \iint\limits_{\Omega}H\left(\sum_{k=1}^{N}c_k\phi_k\right)\phi_i\, dxdy
\end{aligned}
\label{eq:b_elements_neighborhood}
\end{equation}
where $N_i$ is the neighborhood of vertex $v_i$.
\subsection{Delaunay Graphs}
In the previous paragraphs we described how we can generalize active contour models on graphs with arbitrary structure. In the rest of this paper we will limit our analysis in the case of planar graphs and more specifically the case of Delaunay graphs that are constructed from the Delaunay triangulation of a finite set of points in the unit square.

Delaunay triangulation is a method of dividing the convex hull of a set points into triangles that tends to avoid creating sharp triangles, a property that ensures good convergence results for the solution of PDEs. For more details about the Delaunay triangulation and the algorithms used to produce, we refer the reader to the work of Persson and Strang in \cite{persson_strang}.

In \figurename{\ref{fig:delaunay_graphs}}(b) we can see an example of a Delaunay graph with 20 vertices. The vertices were chosen randomly according to the uniform distribution in the unit square. For a Delaunay graph $G$ we will denote the vertices, edges and triangles of the graph by $V(G)$, $E(G)$ and $T(G)$ respectively.

Images can be thought as a special case of Delaunay graphs with the vertices being the pixels of the image. In \figurename{\ref{fig:delaunay_graphs}}(a) we can see the triangulation of the domain of a $7 \times 7$ image and the corresponding Delaunay graph. In this special case, i.e. a Delaunay graph created from a regular grid of points, each non-boundary pixel is connected with an edge with 6 of its adjacent pixels, forming an asymmetric hexagonal neighborhood.

In the case of graphs $I$ is not an image function in the usual sense. Instead we define it to be a function $I: V(G)\to \mathbb{R}$. For the case of the image gradient $\nabla I$, however, since we will be using numerical integration it is more convenient to define it as 2-D function defined in $\Omega$. This function will piecewise constant in each triangle of $T(G)$ and its value is the gradient of the plane that is formed by the values of $I$ in the three vertices of the triangle. Similarly, $g_I = g(\norm{\nabla I})$ will also be constant in each triangle of the triangulation.

\begin{figure}[!t]
\centering
\subfloat[]{
\includegraphics[width=.35\columnwidth]{./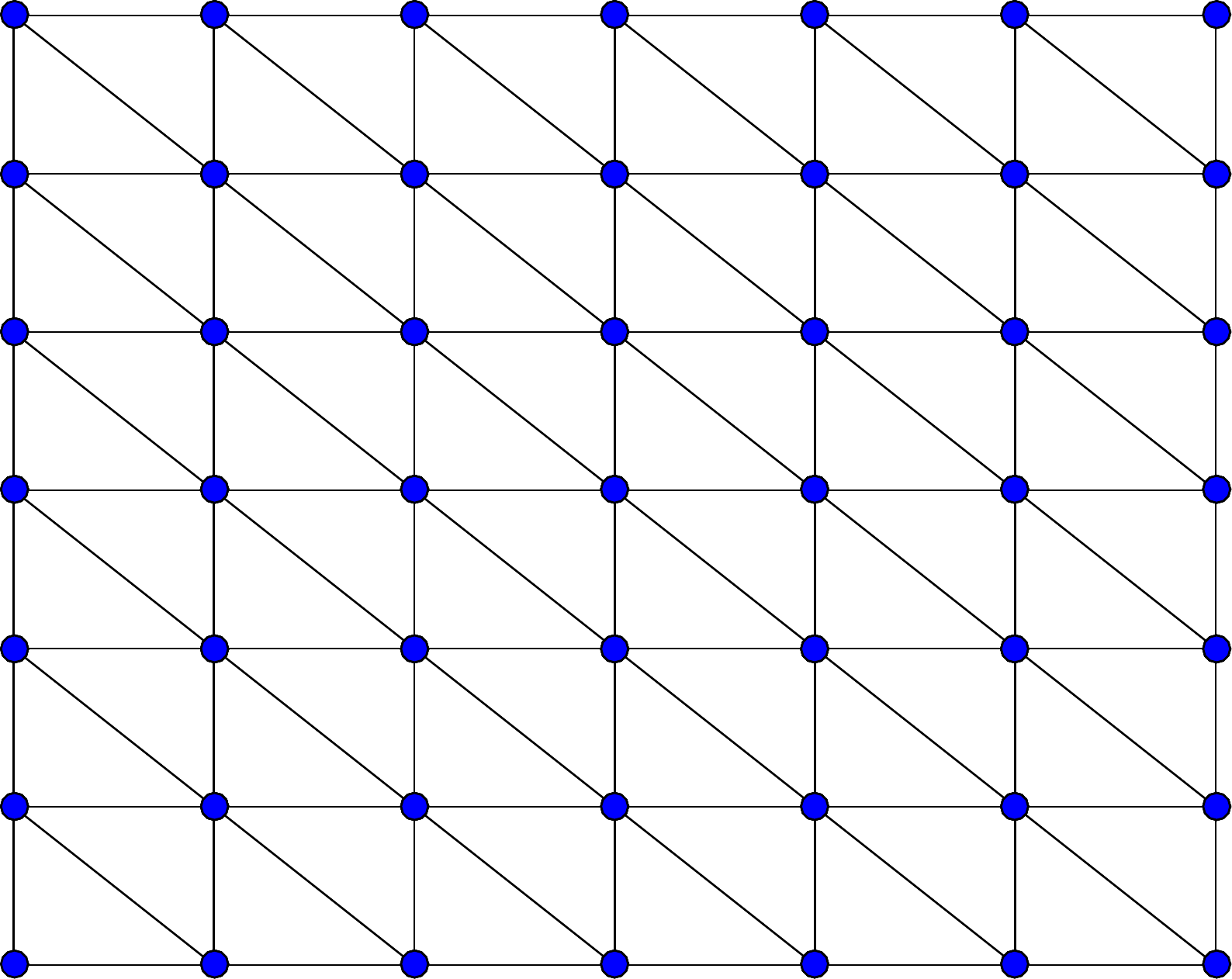}}
\hfil
\subfloat[]{
\includegraphics[width=.35\columnwidth]{./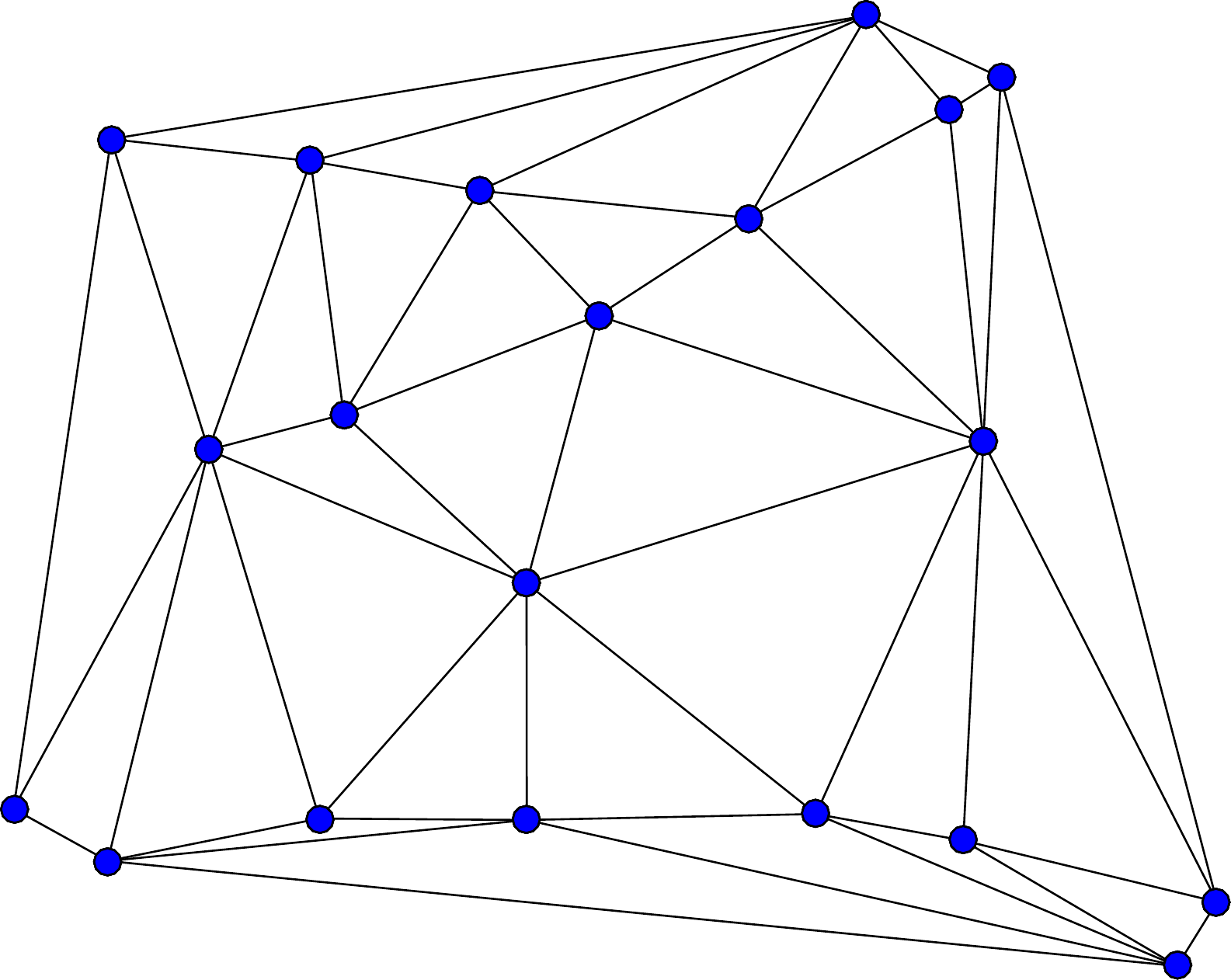}}
\caption{(a): Delaunay Graph with 20 vertices. (b) Delaunay Triangulation of a $7 \times 7$ image }
\hfil
\label{fig:delaunay_graphs}
\end{figure}

\subsection{Choice of subspace and basis}
One of the most important parts in our method is the choice of a subspace of $H^1(\Omega)$ and a set of basis functions $\{\phi_i\}_{i=1}^N$.

Let $\{T_i\}_{i=1}^{M}$ be the triangles of the Delaunay Triangulation. We will try to approximate the solution with a function belonging to the space of continuous functions in $\Omega$ which are linear in each triangle of the triangulation. We will denote this space as $S_{lin}$. More formally
\begin{equation}
S_{lin} = \{f \in C(\Omega): f|_{T_i} = a_i x + b_i y + c_i, \text{ για } i=1,\dots,M\}
\label{linear_space}
\end{equation}
This choice of subspace provides good convergence properties and also simplifies calculations as we shall see later on.

After the selection of the subspace we have to choose a basis of $S_{lin}$.

A possible choice satisfying the properties previously are the pyramid functions. For each vertex $v_i$ of the graph we define the function $\phi_i$ as
\begin{itemize}
\item $\phi_i(v_i) = 1$
\item $\phi_i(v_j) = 0$, for  $j \neq i$
\item $\phi_i$ is linear in each triangle
\end{itemize}

It easy to verify that $\operatorname{supp}(\phi_i) \cap \operatorname{supp}(\phi_j) = \emptyset$ if $v_i \nsim v_j$ and $\phi_i \in S_{lin}$. It can also be proven that the $\{\phi_i\}_{i=1}^{N}$ are linearly independent and thus they form a basis of $S_{lin}$.

In \figurename{\ref{fig:phi}} we can see the form of the $\phi_i$ functions for the case of a rectangular grid. With this choice we can see that $\phi_i$ overlaps only with the 6 other $\phi_j$ that correspond to the vertices $v_j$ that are adjacent to $v_i$. Thus each row of $\mathbf{A}$ will contains at most 7 non zero elements and at the same time only 7 different coefficients $c_i$ will appear in these expressions.
\begin{figure}[!t]
\centering
\includegraphics[width=.5\columnwidth]{./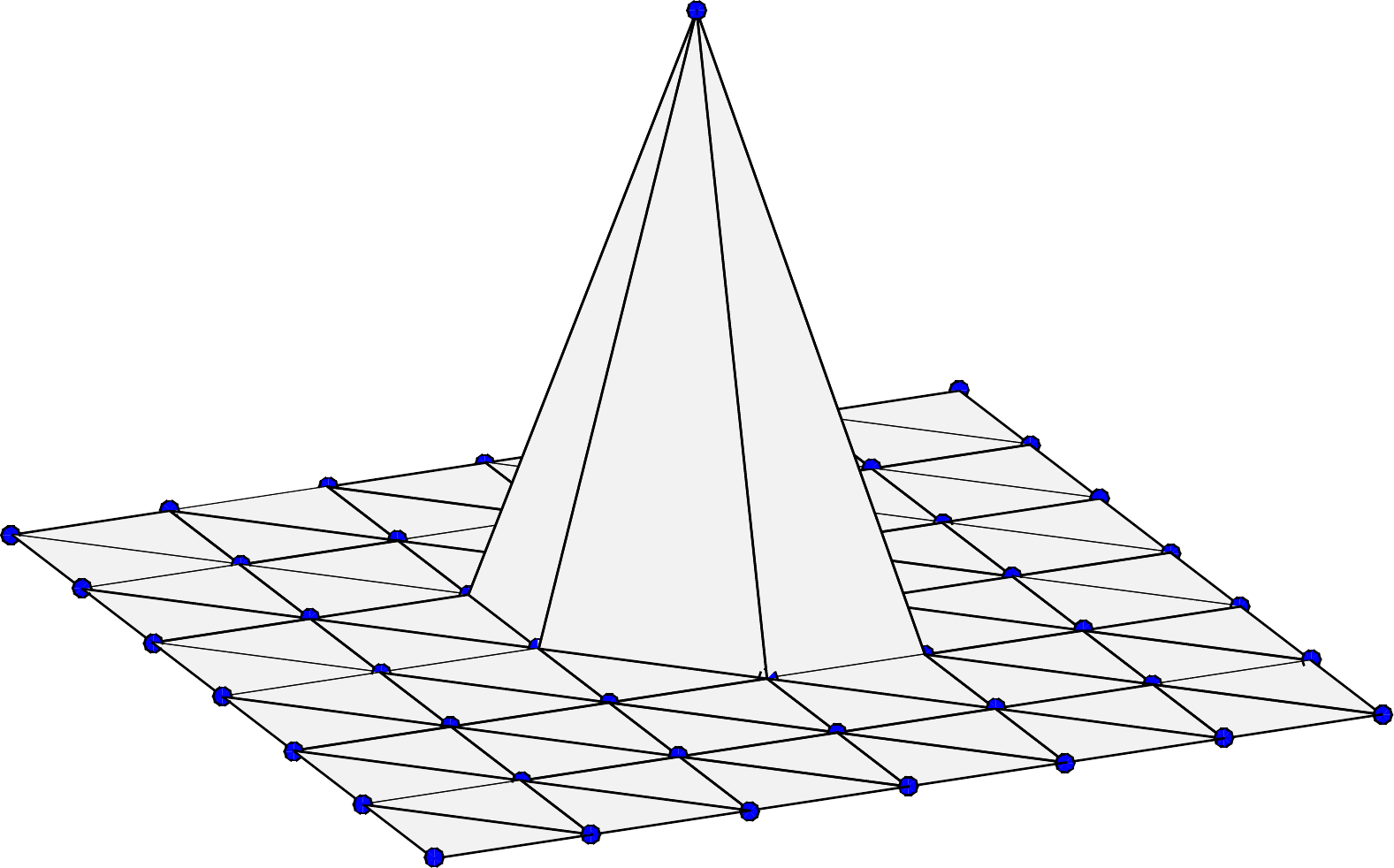}
\caption{Pyramid function $\phi_i$ centered in node $v_i$ for a rectangular grid}
\label{fig:phi}
\end{figure}

We can think of the $\phi_i$ as functions that interpolate discrete data in a rectangular grid. Assume for example that we have samples of a 2D function $f$ at $N$ points $(x_i, y_i)$ in the plane. By creating the Delaunay graph that corresponds to the above set of points we can construct a continuous function $\bar{f}$ that approximates $f$ as
\begin{equation}
\bar{f}(x,y) = \sum_{i=1}^{N}f(x_i,y_i)\phi_i(x,y)
\end{equation}
From the above construction and the properties of $\phi_i$, it is easy to verify that $\bar{f}(x_i,y_i) = f(x_i,y_i)$ and in each triangle of the triangulation we perform a linear interpolation. In \figurename \ref{fig:interpolation} we can see an example of an interpolation of discrete data in a rectangular grid.
\begin{figure}[!htb]
\centering
\includegraphics[width=.5\columnwidth]{./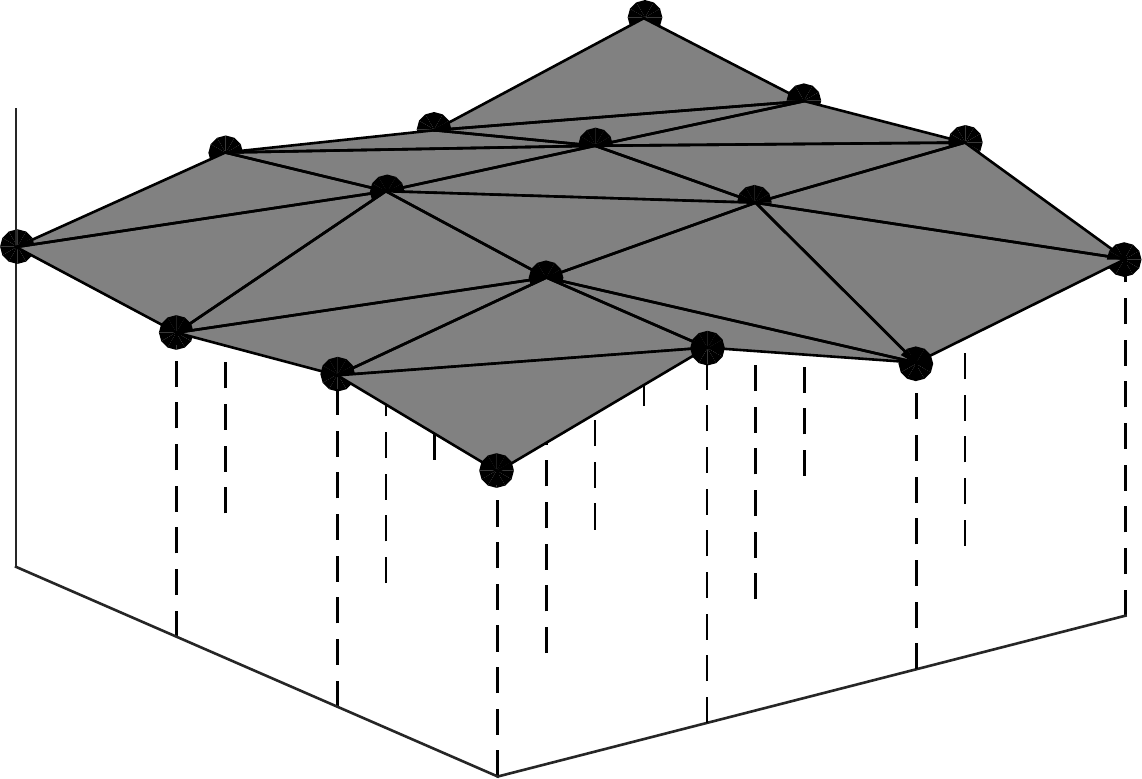}
\caption{Interpolation of discrete data in a $4 \times 4$ rectangular grid using the $\phi_i$ functions. With solid circles we represent the original discrete data}
\label{fig:interpolation}
\end{figure}
\subsection{Time Evolution}
The final step that remains is the discretization in time. For a system of ODEs we have several options for the approximation of derivatives, each one with its advantages and disadvantages and the choice of a particular method depends on the specific properties of each problem.

We want to caclulate the solution in a subset of the time interval $[0,+\infty)$ starting from the initial condition until we reach convergence. Let $t_k, \, k\geq 0$ be the sequence of time points at which we calculate the solution with $t_0=0$ and $t_k = k\Delta t$. Here we assume that we use a fixed time step $\Delta t$. Also with $\mathbf{c}_k$ we will denote the approximation of $\mathbf{c}(t_k)$.

\subsubsection{Explicit Euler method}
Here we approximate the time derivative with the forward difference
\begin{equation}
\dot{\mathbf{c}}(t_k) = \frac{\mathbf{c}_{k+1}-\mathbf{c}_k}{\Delta t}
\end{equation}
If we substitute this into \eqref{eq:non_linear_ode} we obtain
\begin{equation}
\mathbf{A}(\mathbf{c}_k)\mathbf{c}_{k+1} =  \mathbf{A}(\mathbf{c}_k)\mathbf{c}_{k} + \Delta t \cdot \mathbb{b}(\mathbf{c}_k)
\label{eq:explicit_euler}
\end{equation}
This means that we need to solve a linear system for each time step. This method is easy to implement but puts limits on the choice of time step $\Delta t$. We need to choose a very small time step --often in the order of $1/N^2$ to ensure that the curve evolution is stable.
\subsubsection{Implicit Euler method}
In this method we approximate the time derivative with the backward difference
\begin{equation}
\dot{\mathbf{c}}(t_k) = \frac{\mathbf{c}_{k}-\mathbf{c}_{k-1}}{\Delta t}
\end{equation}
If we substitute again the above relationship into \eqref{eq:non_linear_ode} we obtain
\begin{equation}
\mathbf{A}(\mathbf{c}_{k+1})\left(\mathbf{c}_{k+1}-\mathbf{c}_{k}\right) - \Delta t \cdot \mathbf{b}(\mathbf{c}_{k+1}) = 0
\end{equation}
which is a non linear system of equations. This adds a further computational burden in each time step but its main advantages is that it allows us to use a relatively larger time step than the explicit Euler since it has a larger region of stability.
\subsection{Algorithmic Complexity}
In this part we will try to provide an estimate for the complexity of the presented algorithm using the explicit Euler method for the time derivative approximation. For the sake of simplicity let us consider the case of the Delaunay graphs that correspond to images similar to the one depicted in \figurename{\ref{fig:delaunay_graphs}}(a). We assume that we have a $m \times n$ image. The resulting graph will have a total of $N=mn$ vertices. Since each row of $\mathbf{A}$ contains at most 7 nonzero elements $\mathbf{A}(\mathbf{c}_k)$ can be computed in linear time in each time step. Also due to its sparsity, matrix multiplication can also be done in linear time. Observing that $\mathbf{b}(\mathbf{c}_k)$ can also be computed in linear time the right hand side needs $O(N)$ time in each step. With an appropriate node labeling, $\mathbf{A}(\mathbf{c}_k)$ takes the form of a band matrix with a band length of $\min(m,n)$. In the case of square $n\times n$ images this is equal to $\sqrt{N}$. For rectangular images it is safe to assume that $m$ and $n$ are of comparable size and thus this holds also for $\sqrt{N}$ and $\min(m,n)$, so the results obtained for square images will be also valid in the more general case.
We can see the sparsity pattern in \figurename{\ref{fig:sparsity_pattern}}.
\begin{figure}[!t]
\centering
\includegraphics[width=.35\columnwidth]{./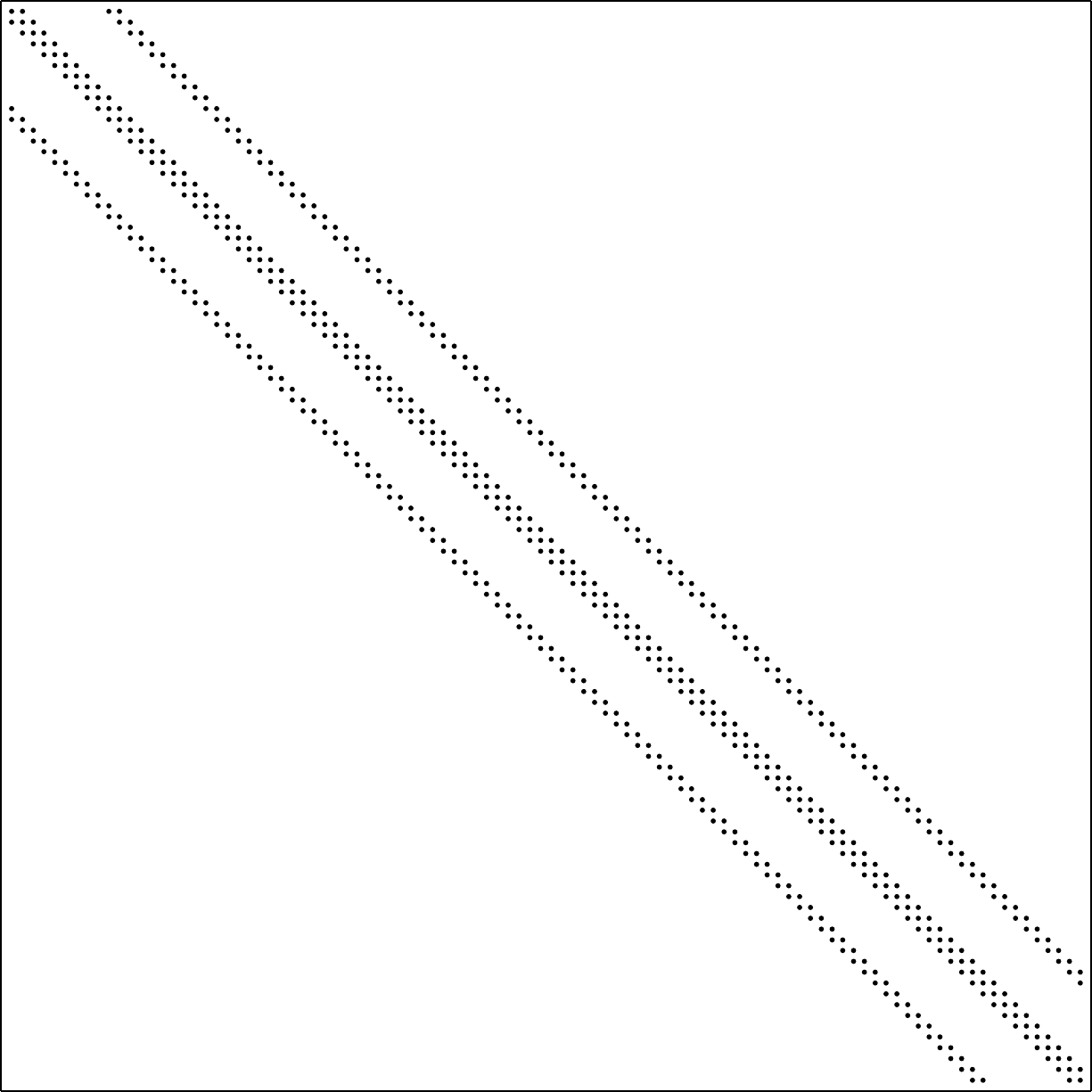}
\caption{Sparsity pattern of matrix A}
\label{fig:sparsity_pattern}
\end{figure}

For a $N\times N$ band matrix with band length $d$ the solution of a linear system needs $O(Nd^2)$ operations. Thus in the case of a $n\times n$ image we need to do $O(N^2) = O(n^4)$ operations in each time step, which is prohibitive for large images. 

As far as it concerns the time evolution, the total number of steps until convergence cannot be specified a priori since it depends on the shape and position of the initial curve. If the initial curve is close to the object boundaries only a few time steps are needed until convergence. Thus the computational complexity of the method can is proportional to the total number of steps until convergence.

\subsection{Numerical Calculations}

Here we will briefly describe how we can compute the elements of the matrix $\mathbf{A}$ and vector $\mathbf{b}$. First note that since $\Omega = \cup_{i=1}^M T_i$, the equations \eqref{eq:A_elements} and \eqref{eq:b_elements} can be rewritten as
\begin{equation}
\mathbf{A}_{ij}(\mathbf{c}) = \sum_{k=1}^{M}\iint_{T_k}F\left(\sum_{k=1}^{N}c_k\phi_k\right)\phi_i \phi_j\, dxdy
\label{eq:A_elements_t}
\end{equation}
and
\begin{equation}
\begin{aligned}
\mathbf{b}_j(\mathbf{c}) =  &-\sum_{k=1}^{M}\sum_{i=1}^{N}\iint\limits_{T_k}G\left(\sum_{k=1}^{N}c_k\phi_k\right) \nabla \phi_i \cdot \nabla \phi_j\, dxdy\\
&+ \sum_{k=1}^{M}\iint\limits_{T_k}H\left(\sum_{k=1}^{N}c_k\phi_k\right)\phi_j\, dxdy
\end{aligned}
\label{eq:b_elements_t}
\end{equation}
For each $i$, $\phi_i$ is zero everywhere except for the triangles formed by the vertex $v_i$ where it is linear. Thus $\nabla \phi_i$ is piecewise constant in each triangle with $\nabla \phi_ι = \mathbf{0}$ at least where $\phi_i = 0$. If we want to be mathematically precise, $\phi_i$ is differentiable everywhere except for the edges of the triangles, but since lines are sets of zero measure in $\mathbb{R}^2$ the values at the edges do not contribute to the value of the integral.

Taking all these into account we conclude that to calculate $\mathbf{A}_{ij}$ and $\mathbf{b}_j$ we have to sum the contributions of the integrals in a small number of triangles. In the case of the diagonal elements $\mathbf{A}_{ii}$, we have to sum over all the triangles that have $v_i$ as a vertex, whereas for the non-diagonal terms $\mathbf{A}_{ij}$ the summation is done over at most 2 triangles, the shared triangles between the vertices $v_i$ and $v_j$.

If $T$ is a triangle with vertices $P_1$, $P_2$ and $P_3$ and $f$ a polynomial of degree at most 2 then it holds that
\begin{equation}
\iint\limits_{T}f(x,y){d}x{d}y = \frac{E(T)}{3}(f(P_1)+f(P_2)+f(P_3))
\label{eq:numerical_triangle}
\end{equation}
where $E(T)$ is the area of the triangle. Thus using the above formula we can obtain closed-form formulas to compute equations \eqref{eq:A_elements_t} and \eqref{eq:b_elements_t}.

Finally, as far as it concerns the initial condition we can approximate \eqref{eq:c0} by setting
\begin{equation}
{c}_i(0) = u(x_i, y_i, 0)
\label{eq:c_initial_cond}
\end{equation}
 and thus
\begin{equation}
\bar{u}(x,y,0) = \sum_{i=1}^{N} u(x_i, y_i, 0)\phi_i(x,y) 
\end{equation}
Under reasonable assumptions we expect that the two expressions for the initial condition give similar values and thus we can adopt this more straightforward approach that does not involve complex integrations.

\section{Locally Constrained Contour Evolution}\label{section:constrained_acog}
\subsection{Method Description}
In this section we will extend the previous framework to solve curve evolution models of the form
\begin{equation}
\begin{gathered}
F(u)\frac{\partial u}{\partial t} = \delta_\epsilon(u)\left(\diverg{G(u)\nabla u} + H(u)\right)\\
\nabla u \cdot \mathbf{n} = 0 \text{ on the boundary } \partial \Omega\\
u(x,y,0) = \operatorname{dist^\ast}(x,y)
\end{gathered}
\label{eq:levelset_general_constrained}
\end{equation}
where $\delta_\epsilon(x)$ is an approximation of the Dirac $\delta$ function, with $\delta_\epsilon \to \delta$, as $\epsilon \to 0$. A typical choice for $\delta_\epsilon$ is the piecewise constant approximation
\begin{equation}
\delta_\epsilon(x) = \begin{cases}
1 / \epsilon,&|x| \leq \epsilon,\\
0, &|x| > \epsilon
\end{cases}
\label{eq:delta_approximation}
\end{equation}
The term $\delta_\epsilon$ constrains the curve evolution in a small area near the current position of the curve, i.e the 0-levelset. One popular active contour model that can be modeled using \eqref{eq:levelset_general_constrained} is the Active Contour Without Edges (ACWE), presented by Chan and Vese in \cite{chan_vese}. In the ACWE model, if we use the same notation with \cite{chan_vese},
\begin{equation}
\begin{gathered}
F(u) = 1, \quad G(u) = \frac{\mu}{\norm{\nabla u}}\\
H(u) = -\nu - \lambda_1(u_0 - c_1)^2 + \lambda_2(u_0-c_2)^2
\end{gathered}
\end{equation}
The constants $c_1$ and $c_2$ are the mean values of the image function $u_0$ on the inside and the outside of the contour respectively and $\lambda_1$, $\lambda_2$ and $\mu$ positive constants.
Also, as will be shown later, constrained curve evolution can be used to speed up the solution of active contour models that can take the form of \eqref{eq:levelset_general}.

First, we will describe how \eqref{eq:levelset_general_constrained} can be approximated on graphs. For this reason we define the sets
\begin{equation}
\mathcal{U}_t^+ = \{(x,y) \in V(G) : u(x,y,t) > 0\}
\end{equation}
\begin{equation}
\mathcal{U}_t^- = \{(x,y) \in V(G) : u(x,y,t) < 0\}
\end{equation}
and
\begin{equation}
\mathcal{U}_t^0 = \{(x,y) \in V(G) : u(x,y,t) = 0\}
\end{equation}
$\mathcal{U}_t^+$, $\mathcal{U}_t^-$ and $\mathcal{U}_t^0$ are the set of points that are outside, inside and on the curve at time $t$.
We also define the set of \emph{active points} at time $t$
\begin{equation}
\mathcal{U}_t^\ast = \partial U_t^+ \cup \partial U_t^- \cup \mathcal{U}_t^0
\label{eq:active_points}
\end{equation}
where with $\partial S$ we denote the boundary of the set $S \subseteq V(G)$ that can be calculated as
\begin{equation}
\partial S = \{v \in S : \exists v' \in V\backslash S \text{ s.t. } v' \sim v\}
\end{equation}
The set of active points represent the points that are within ``unit'' distance from the current position of the active contour. In \figurename{\ref{fig:graph_active_points}} we depict visually how the active points are computed.

\begin{figure}[!t]
\centering
\includegraphics[width=.5\columnwidth]{./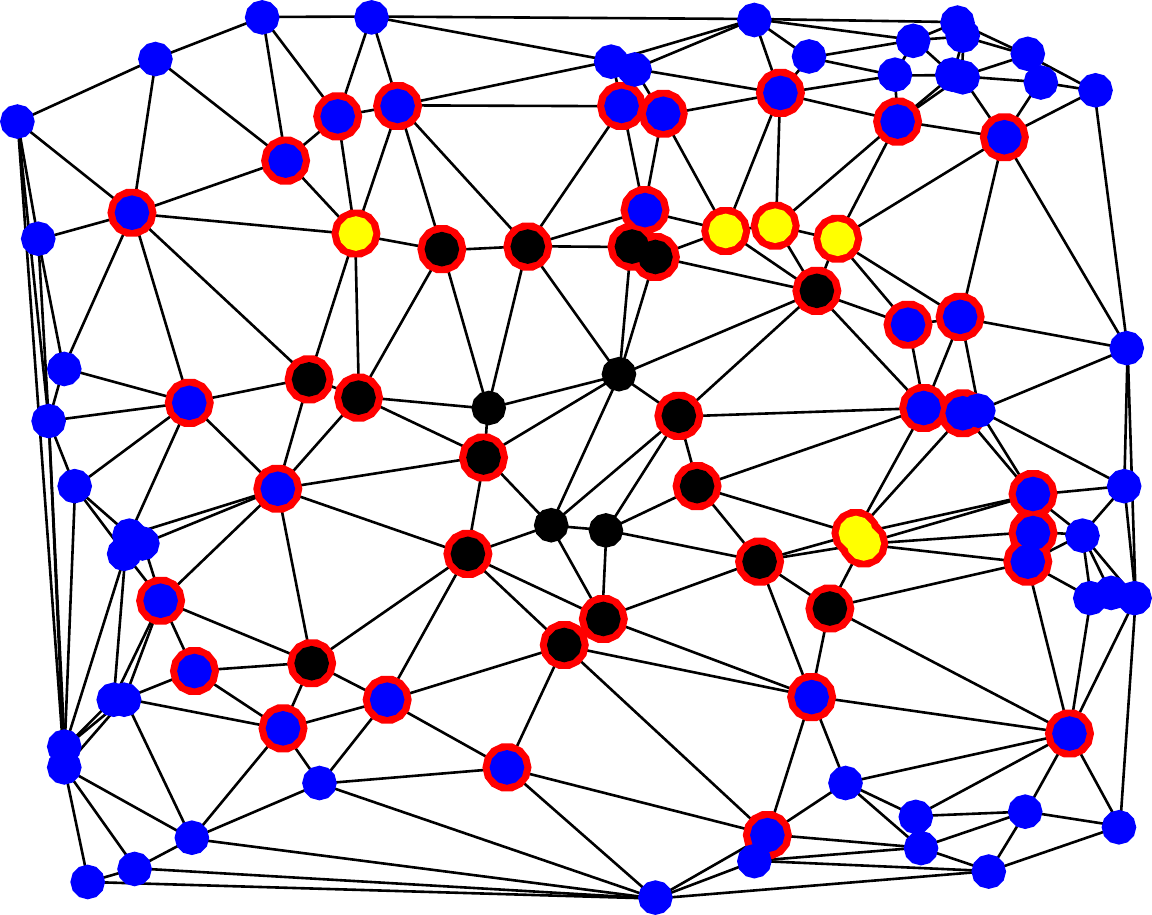}
\caption{Illustration of active points for a Delaunay graph with 100 vertices. Blue nodes: Vertices outside the contour. Black nodes: Vertices inside the contour. Yellow nodes: Vertices on the contour. Nodes with red boundary: Active points }
\label{fig:graph_active_points}
\end{figure}

Then ---up to a positive scaling factor--- we approximate $\delta_\epsilon(u)$ on $G$ at time $t$ with
\begin{equation}
\delta^G_t(v) \sim \begin{cases}
1, &\text{ if } v \in \mathcal{U}_t^\ast\\
0, &\text{ elsewhere}
\end{cases}
\end{equation}
Using the above definition, the evolution equation for \eqref{eq:levelset_general_constrained} can be approximated as
\begin{equation}
\mathbf{\tilde{A}}(\mathbf{c})\dot{\mathbf{c}} =\mathbf{\tilde{b}}(\mathbf{c})
\label{eq:non_linear_ode_constrained}
\end{equation}
where $\mathbf{\tilde{A}}(\mathbf{c})$ is a $N \times N$ matrix with
\begin{equation*}
\mathrm{\tilde{A}}_{ij}(\mathbf{c}) = \begin{cases}
\mathrm{{A}}_{ij}(\mathbf{c}), &\text{ if } v_i \in \mathcal{U}_t^\ast\\
1, &\text{ if } v_i \notin \mathcal{U}_t^\ast \text{ and } j=i\\
0, &\text{ elsewhere}
\end{cases}
\end{equation*}
and $\mathbf{\tilde{b}}(\mathbf{c})$ a $N$-dimensional vector with
\begin{equation}
\tilde{b}_i(\mathbf{c}) = \begin{cases}
b_i(\mathbf{c}), &\text{ if } v_i \in \mathcal{U}_t^\ast\\
0, &\text{ elsewhere}
\end{cases}
\end{equation}
It is easy to observe that for all non-active vertices $v_i$, $\dot{c}_i = 0$ and thus the curve evolution is indeed constrained at the subset $\mathcal{U}_t^\ast$.

The above formulation will help us reduce significantly the computational complexity of each time step. Consider the explicit Euler approximation of \eqref{eq:non_linear_ode_constrained}
\begin{equation}
\mathbf{\tilde{A}(\mathbf{c}_{k})}\left(\mathbf{c}_{k+1} - \mathbf{c}_{k}\right) = \Delta t_k\mathbf{\tilde{b}(\mathbf{c}_k)}
\end{equation}
For all vertices $v_i$ that are not in the set of active points $(c_i)_{k+1} - (c_i)_{k} = 0$. So, for all $v_i \in \mathcal{U}_t^\ast$, $(c_i)_{k+1}$ does not depend on the values $(c_j)_{k}$, where $v_j \notin \mathcal{U}_{t_k}^\ast$. Thus the corresponding values $\mathrm{A}_{ij}(\mathbf{c}_k)$ have no influence on the solution of the linear system, so they can be set to $0$.

Let $\mathbf{A}_{k}^\ast$ be a $|\mathcal{U}_{t_k}^\ast| \times |\mathcal{U}_{t_k}^\ast|$ submatrix of $\mathbf{\tilde{A}}(\mathbf{c_k})$ that is constructed by keeping only the elements $\mathrm{A}_{ij}$ with $v_i$ and $v_j \in \mathcal{U}_{t_k}^\ast$. Similarly $\mathbf{b}^\ast_k$ is a $|\mathcal{U}_t^\ast|$-dimensional vector that is derived from $\mathbf{b}(\mathbf{c}_k)$ by discarding all the elements $v_i$ with $v_i \notin \mathcal{U}_t^\ast$.
If we set $\Delta\mathbf{c}_k = \mathbf{c}_{k+1} - \mathbf{c}_k$ and $\Delta\mathbf{c}_k^\ast = \mathbf{I}_{k}^\ast\Delta\mathbf{c}_k$, 
where $\mathbf{I}_k$ is the $|\mathcal{U}_{t_k}^\ast| \times N$ projection matrix with $\mathrm{I}_{ii} = 1$ and $\mathrm{I}_{ij} = 0$ if $i \neq j$
then we get the equivalent linear system
\begin{equation}
\mathbf{A}_{k}^\ast \Delta\mathbf{c}_k^\ast = \Delta t_k\mathbf{b}^\ast_k
\label{eq:constrained_linear_system}
\end{equation}
After computing $\Delta\mathbf{c}_k$ from \eqref{eq:constrained_linear_system} the update rule for the coefficients $\mathbf{c}$ in matrix notation becomes
\begin{equation}
\mathbf{c}_{k+1} = \mathbf{c}_{k} + \mathbf{I}_k^{\mathrm{T}}\Delta\mathbf{c}_k^\ast
\label{eq:update_rule_constrained}
\end{equation}

To ensure the stability of the numerical method and make it possible to use large time steps, we normalize the levelset function after each time step. Specifically, $\bar{u}(x,y,t)$ and consequently the vector of coefficients $\mathbf{c}_k$ is saturated outside the interval $[-r,r]$. A typical choice of $r$ is $r = \max |u_0(x,y)|$ or equivalently $r = \max|\mathbf{c}_0|$. With the use of normalization we can choose a time step $\Delta t_k \sim 1/N$, which is significantly larger than the time step used for the solution of \eqref{eq:levelset_general}. A similar approach cannot be adopted for the solution of the full levelset equation because in each time step the evolution domain covers the whole graph and although the levelset function will be bounded, large time steps will produce noisy artifacts.

\begin{algorithm}
\caption{Constrained Curve Evolution on Graphs}
\label{algo:cceg}
\begin{algorithmic}
\State Initialize $\mathbf{c}_0$ from the initial condition using \eqref{eq:c_initial_cond}
\State Set threshold $r = \max{|\mathbf{c}_0|}$
\For{$k = 0$ until convergence}
\State Calculate the active points $\mathcal{U}^\ast_{t_k} = \partial U_{t_k}^+ \cup \partial U_{t_k}^- \cup \mathcal{U}_{t_k}^0$
\State Calculate $\mathbf{A}^\ast_k$ and $\mathbf{b}^\ast_k$
\State Solve the reduced linear system $\mathbf{A}_{k}^\ast \Delta\mathbf{c}_k^\ast = \Delta t_k\mathbf{b}^\ast_k$
\State Compute $\mathbf{c}_{k+1} = \min\left(\max\left(\mathbf{c}_{k} + \mathbf{I}_k^{\mathrm{T}}\Delta\mathbf{c}_k^\ast,-r\right),r\right)$ 
\EndFor
\end{algorithmic}
\end{algorithm}

\subsection{Estimation of Algorithmic Complexity}
As in the previous section, we will provide an estimate for the computational complexity of the proposed algorithm for the case of Delaunay graphs that correspond to images. We expect that under reasonable assumptions, similar results hold for more general Delaunay graphs.

For a $n \times n$ image with $N=n^2$ pixels, each of $\mathcal{U}^+_{t_k}$, $\mathcal{U}^-_{t_k}$ and $\mathcal{U}^0_{t_k}$ is the union of a number of 1D curves. Generally, typical 1D curves in a $n \times n$ grid contain $O(n)$ pixels. Thus, in almost all practical cases $\mathcal{U}^\ast_{t_k}$ will contain $O(n) = O(\sqrt{N})$ pixels.
Since the number of edges in the planar Delaunay graph is bounded by $3N-6$ a naive calculation of $\mathcal{U}^\ast_{t_k}$ using the definitions of its components requires $O(N)$ operations. Additionally, it is easy to verify that given the set of active points, the elements of $\mathbf{A}^\ast_k$ and $\mathbf{b}^\ast_k$ can be computed in $O(N)$ time.

Since $\mathbf{{A}(\mathbf{c}_k)}$ is a sparse band matrix, $\mathbf{A^\ast_k}$ will also be a band matrix, but its band length often can be $O(|\mathcal{U}^\ast_{t_k}|)$. However, using the Reverse Cuthill-McKee algorithm \cite{george_liu} which can be implemented in $O(|\mathcal{U}^\ast_{t_k}|)$ time we can obtain a permuted matrix with a band length of $O(\sqrt{|\mathcal{U}^\ast_{t_k}|})$ on average. Thus the solution of the constrained linear system is expected to require $O(n(\sqrt{n})^2) = O(n^2) = O(N)$ operations.

Thus, we have shown that on average, each time step of the algorithm requires $O(N)$ operations. If we compare this result with the number of operations required for the full curve evolution, we can see that the constrained curve evolution algorithm is faster by an order of magnitude. Its linear complexity with respect to the number of graph vertices makes it feasible to be used in practical applications.

\subsection{Fast Solution of Active Contour Models}
The results from the previous section hint towards a fast implementation of general active contour models that can take the form of \eqref{eq:levelset_general}. The levelset approach enabled us to handle topological changes in the curve in a solid and efficient manner but it introduces an extra computational burden because we have to evolve a 2D function instead of a curve. The answer to this problem is to try to limit our focus in a small area of interest near the curve --often referred as band-- and evolve the levelset function in this subset of the image. Similar methods, called narrow-band methods, were described for the case of images in \cite{adalstein, whitaker, peng_et_al}.

In this paper we will extend the previous ideas and generalize them on graphs. Instead of evaluating the curve evolution in the whole graph we will constrain the curve evolution near the curve using the method of Section \ref{section:constrained_acog}. Thus approximation is based on the assumption that the levelset function evolves ``isotropically'', thus points outside or inside the curve will not change status at least until the moment that the active contour reaches them. The above assumption is valid for the majority of the active contour methods, such as the simple Erosion/Dilation and the Geometric and Geodesic Active Contours. 

Generally, the set of active points is comprised of vertices that are either outside, inside or on the contour. For certain active contour models such as those mentioned above where the curves either expand or shrink, the set of active points can be further reduced. In the case of expanding contours, if a point is inside the contour then it will always remain inside it, so it suffices to choose $\mathcal{U}_t^\ast = \mathcal{U}_t^+$. Following the same logic, for shrinking contours $\mathcal{U}_t^\ast = \mathcal{U}_t^-$. Generally, this observation allows us to reduce the size of the $A^\ast$ by a factor of 2.

\section{Geodesic Active Regions}\label{section:gar}
The classic Geodesic Active Contours method tries to detect objects of interest using only edge information. This poses a serious limitation to the range of images where it can be used. Whilst it gives excellent results when applied in images where the background is smooth and the objects are easily distinguishable, we cannot expect to apply it successfully in complex real world images in which object boundaries are not always well-defined. Thus we need to incorporate more features in our model, such as the foreground and background color distributions that will help us separate the objects of interest from the background. Similar efforts that try to include region information in Active Contour Methods were presented in \cite{zhu_yuille} and \cite{paragios}.

In this section, we will discuss a modification of the GAC algorithm resembling the Geodesic Active Regions \cite{paragios} that accounts for these issues. We have to note that GACs belong to the class of semi-supervised methods; the user only needs to specify a curve that surrounds the object. Here we propose a supervised method in which the user needs to provide additional seed points inside the object(s) he wants to detect. The concept of our method is similar to that of GrabCut \cite{rother_et_al}; iteratively segment and reestimate region statistics until convergence.

Given a set of points in the plane, we form the corresponding Delaunay graph. First, the user specifies a region of interest that contains the object that is to be detected. The points outside this region are assumed to belong to the background $B$. The user also specifies one or more closed curves inside the objects of interest. The points that are inside the regions enclosed by these curves belong to the foreground $F$. Optionally, the user can specify an additional set of points that wants to be labeled as background. The remaining points $U$ are temporarily classified as undefined and our purpose is to determine their labeling.

Using the information provided above we will try to model the color distributions of the foreground and background. We build two 3-D histograms, one for the foreground and one for the background and train the 2 models using $F$ and $B$ respectively. Histograms are used instead of Gaussian Mixture Models because they tend to be faster and in our case yield slightly better results.

Next, for each vertex $v$ we compute the likelihood that it belongs to the foreground and the background based on the constructed histograms. We will denote these probabilities by $H_F(v)$ and $H_B(v)$ respectively. We also define the quantity
\begin{equation}
L = \ln\frac{H_F}{H_B}
\end{equation}

If $L(v)>0$ then $H_F(v) > H_B(v)$ which in turn means that the vertex $v$ is more likely to belong to the foreground, whereas if $L(v)<0$ then $v$ is more likely to belong to the background. In order to sharpen $L$ and result in better separation of foreground and background pixels we apply soft thresholding on $L$ using a sigmoid function and obtain
\begin{equation}
\mathcal{L} = \sigma(L) = \sigma\left(\ln\frac{H_F}{H_B}\right)
\label{eq:region_l}
\end{equation}
In this paper, we chose $\sigma$ to be the common logistic function
\begin{equation}
\sigma(x) = \frac{1}{1+\exp(-\alpha x)}
\end{equation}
where $\alpha$ a sufficiently large positive constant. The role of $\sigma$ is to push faster positive values of $L$ to $1$ and negative values to $0$ producing a near-binary result which will be useful for the curve evolution.

In the GAC model the initial contour can only shrink. Our model can include both shrinking and expanding contours and enables us to combine region and edge information in a straightforward way. If we choose $g_I$ to be the usual edge stopping function defined in \eqref{eq:edge_stopping_function} our modified Geodesic Active Contour model becomes
\begin{equation}
\frac{\partial u}{\partial t} = \left(\diverg{\mathcal{P}\frac{\nabla u}{\norm{\nabla u}}}+\beta g_I \mathcal{P}\right)\norm{\nabla u}
\label{eq:region_gac}
\end{equation}
where 
\begin{equation}
\mathcal{P} = 
\begin{cases}
\mathcal{L}, &\text{ for expanding contours}\\
1-\mathcal{L}, &\text{ for shrinking contours}\\
\end{cases}
\label{eq:statistical_curve_evolution}
\end{equation}
The use of $\mathcal{P}$ is to stop the contour evolution at the desired regions. In the case of expanding contours, we expand the curve until it reaches the background, where $\mathcal{P} = \mathcal{L} \approx 0$. Conversely, in the case of shrinking contours, we shrink the curve until it reaches the foreground, where $\mathcal{P} = 1 -\mathcal{L} \approx 0$. Concurrently, the balloon force is also controlled by $g_I$ which vanishes at strong edges.

The core principle our algorithm is to alternate between expanding and shrinking curves to successfully extract the objects of interest. In the first iteration of the algorithm we train the color models using the initial seed points and expand the object seed boundaries according to \eqref{eq:statistical_curve_evolution} until convergence. We then obtain new estimates for the foreground and background and use them to retrain the color histograms. Vertices in the undefined region are not used for model training. The main reason for this choice is that it could result in a biased estimation of the statistics. Next we shrink the curves using the new estimations in order to make sure that points that were falsely classified as foreground in the first iteration are discarded. This process is repeated until convergence, i.e. alternating between expanding and shrinking contours and retraining the models after each step until the segmentation result between two successive iterations is sufficiently similar.

The last step of the algorithm will be to provide a refinement of the segmentation result we obtained. We will try to alter the position of the contour in order to better capture the object boundaries. This process is different from the various matting techniques (see for example Bayesian matting in \cite{chuang_et_al}) because our goal is to produce only a binary result and not continuous alpha-values. To do this we limit our focus in a small band $B$ around the contour. For each pixel in this area we compute the likelihoods $H_F$ and $H_B$ as we defined them previously, but now the foreground and background model are trained using only vertices that are near the region boundaries. Then for each vertex $v \in B$, if $H_F(v) \geq H_B(v)$ $p$ is assigned to the foreground, else it is marked as background. However, since this process can create a noisy output resembling salt-and-pepper noise, we apply median filtering to the result obtained.

\begin{algorithm}
\caption{Geodesic Active Regions}
\label{algo:rbgac}
\begin{algorithmic}
\State Initialize $F$, $B$ and $U$ and initial contour from user labeling.
\While{not converged}
\State Compute region statistics $\mathcal{L}$ using \eqref{eq:region_l}
\State Expand contours using \eqref{eq:region_gac} with $\mathcal{P} = \mathcal{L}$
\State Compute region statistics $\mathcal{L}$ using \eqref{eq:region_l}
\State Expand contours using \eqref{eq:region_gac} with $\mathcal{P} = 1-\mathcal{L}$
\EndWhile
\State Refine segmentation near the curve
\end{algorithmic}
\end{algorithm}

\section{Applications to Segmentation}\label{section:experimental}
In this section we will provide experimental evidence for the effectiveness of the proposed framework. Our analysis will focus on segmentation applications, which is one of the main areas that active contours have traditionally been used in. We will use three different active contour models as representatives; the Chan-Vese ACWE \cite{chan_vese}, the GAC \cite{cas_kim_sap_97} and the Geodesic Active Regions (GAR) variant that is proposed in this paper. Our framework will be tested on Delaunay graphs formed from regular grids as well as on more general Delaunay graphs. For the case of images where the number of graph vertices is of the order of $10^5$, the evolution equation of the GAR was solved using the Constrained Contour Evolution framework described in Section \ref{section:constrained_acog}.

\subsection{Image Segmentation}
In this part, we will present how our framework can be used to solve active contour models for image segmentation. As an example, in \figurename{\ref{fig:example_results_chanvese}} we depict the curve evolution of the ACWE model \cite{chan_vese} for grayscale image segmentation using the proposed framework. Also, in \figurename{\ref{fig:example_results_rbgac}} we can see an application of the GAR model that is also solved with the computational framework of Section \ref{section:constrained_acog}.

\begin{figure}[!thb]

\centering
\subfloat[Initial Contour]{\includegraphics[width=.22\columnwidth]{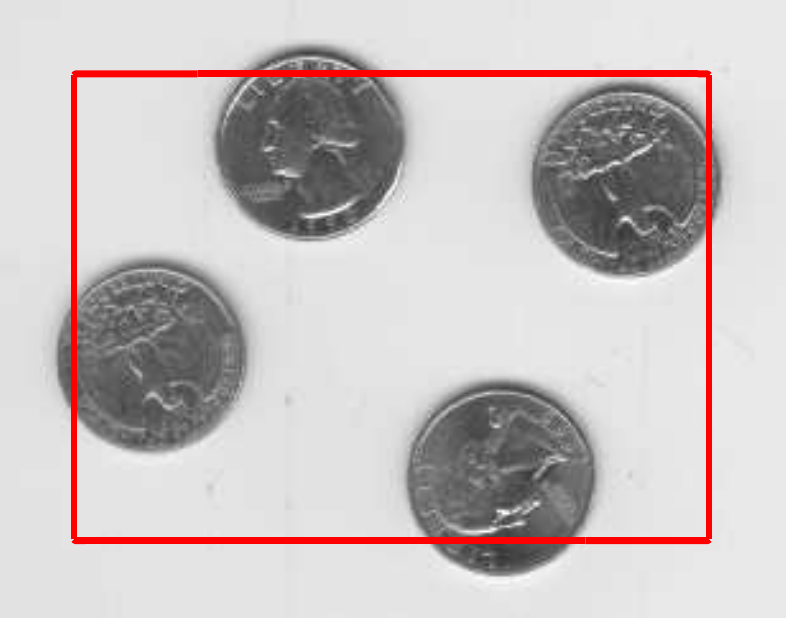}}
~
\subfloat[30 iterations]{\includegraphics[width=.22\columnwidth]{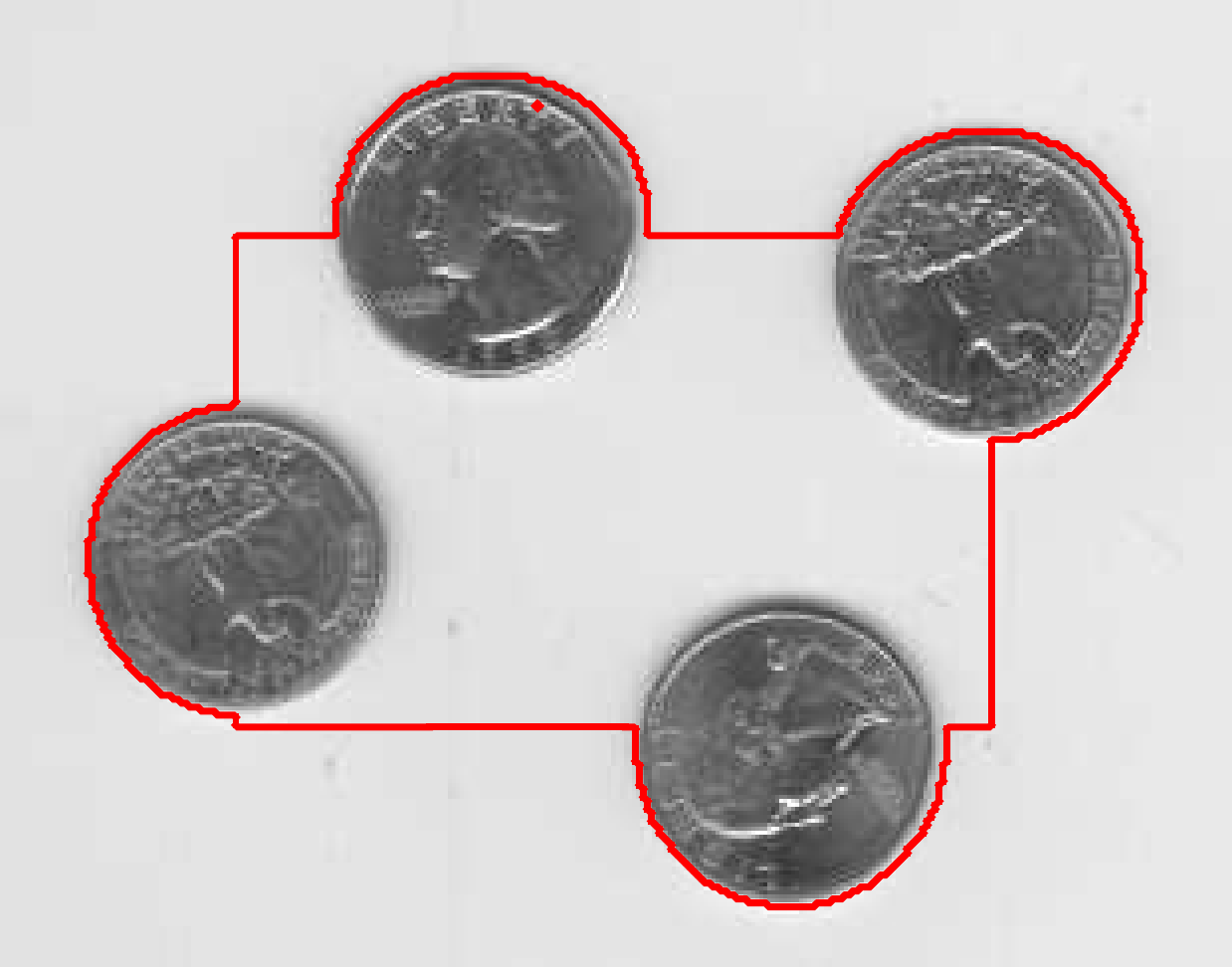}}
\hfil
\subfloat[80 iterations]{\includegraphics[width=.22\columnwidth]{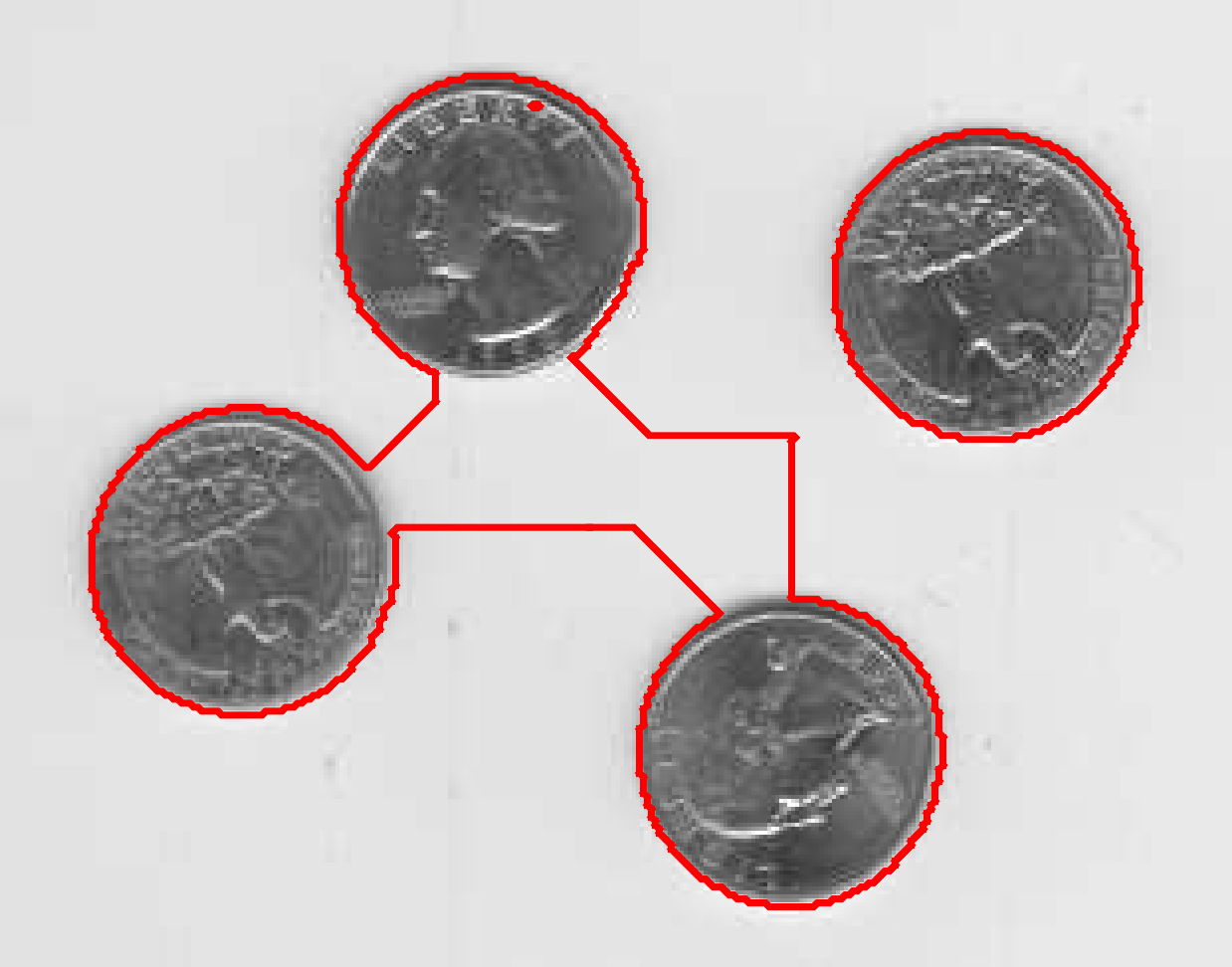}}
~
\subfloat[Converged after 107 iterations]{\includegraphics[width=.22\columnwidth]{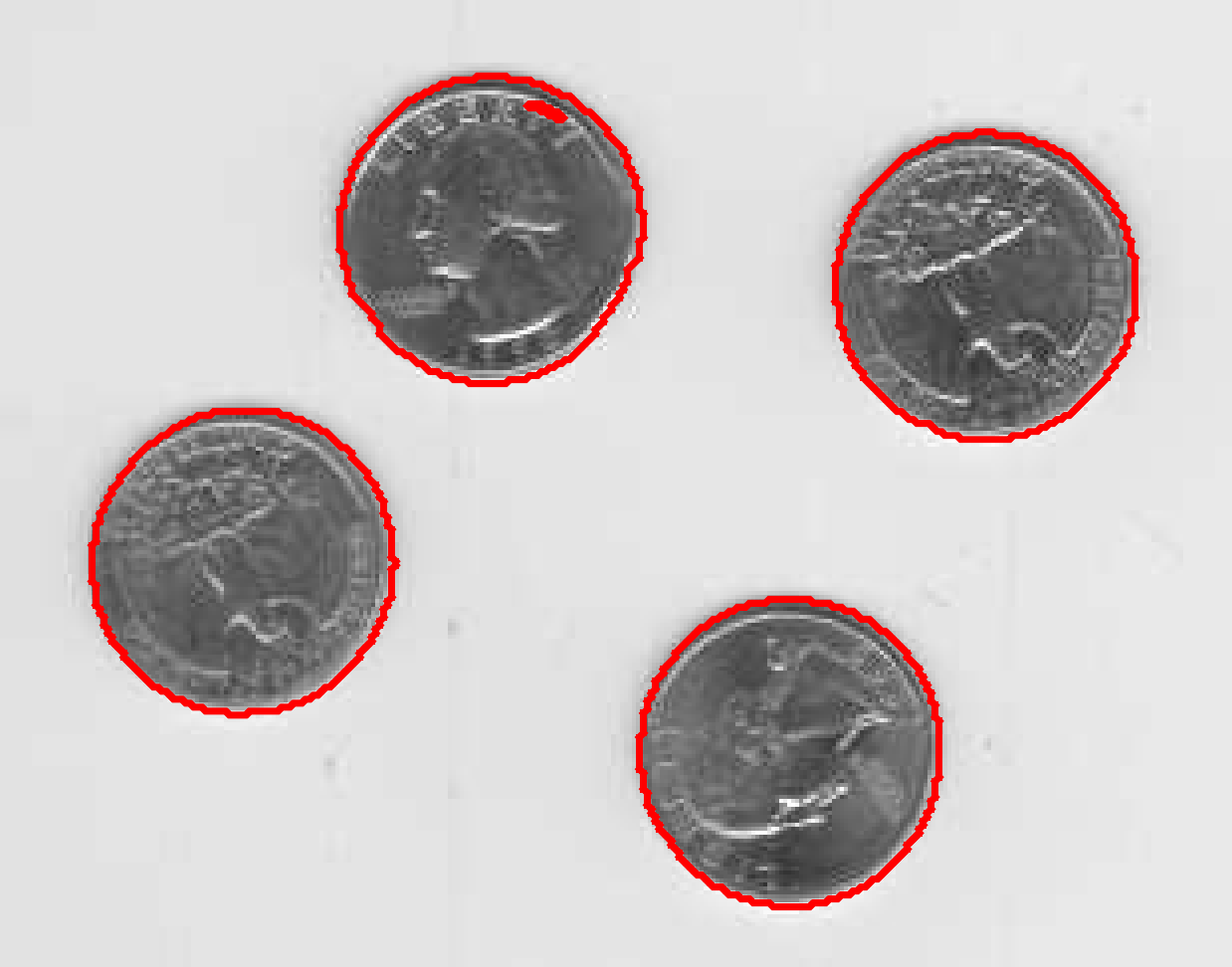}}
\hfil
\caption{Curve evolution of ACWE \cite{chan_vese} for grayscale image segmentation using the proposed Locally Constrained Curve Evolution framework.}
\label{fig:example_results_chanvese}
\end{figure}

\begin{figure}[!thb]

\centering
\subfloat{\includegraphics[width=.3\columnwidth]{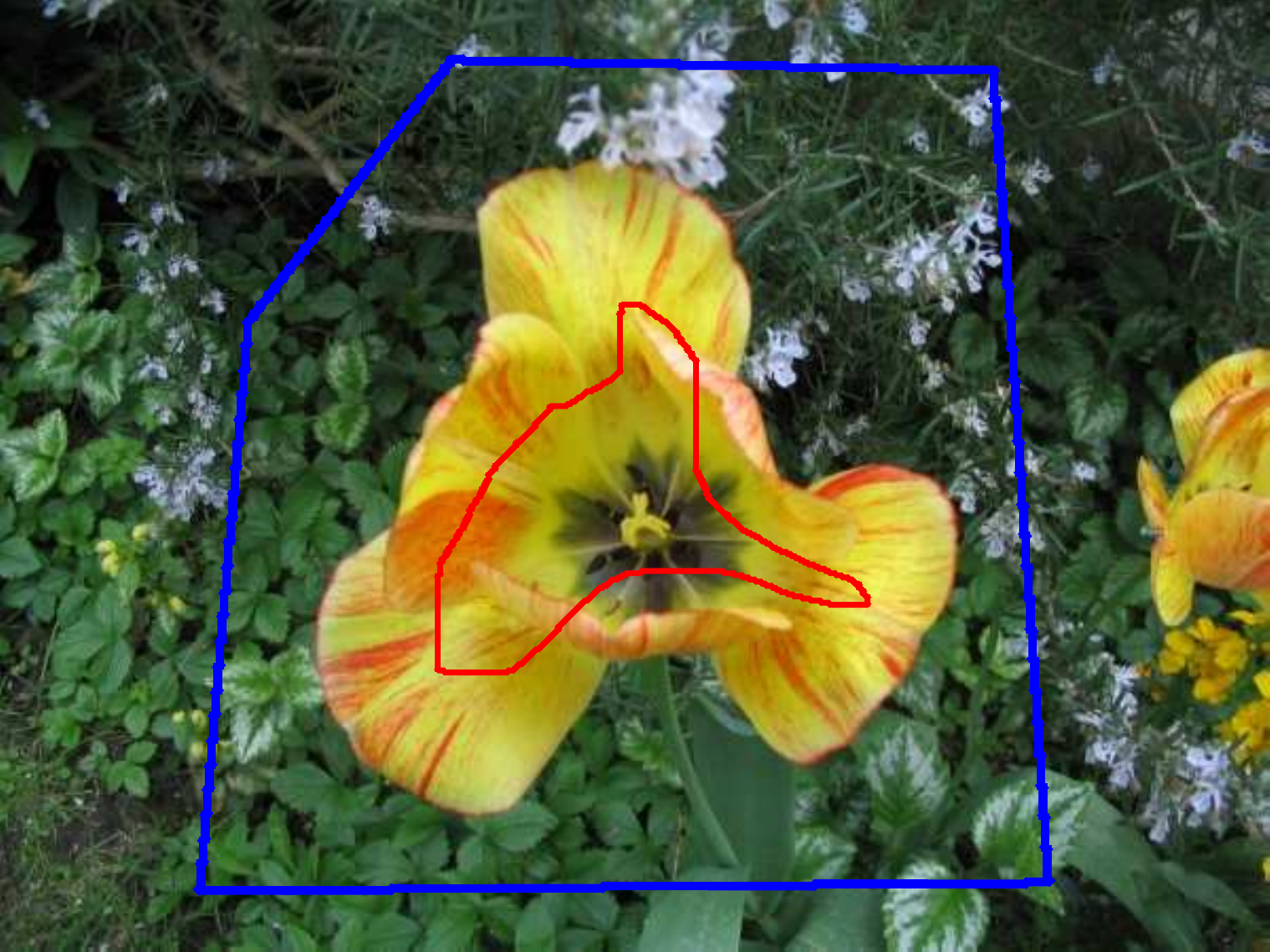}}
~
\subfloat{
{
\setlength{\fboxsep}{0pt}
\setlength{\fboxrule}{.1pt}
\fbox{\includegraphics[width=.3\columnwidth]{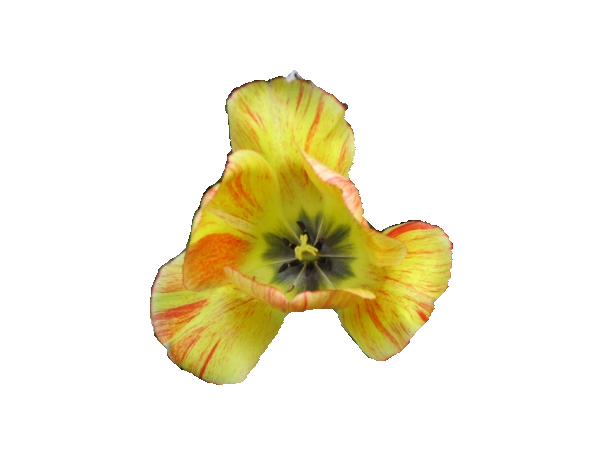}}}}
\hfil
\subfloat{\includegraphics[width=.3\columnwidth]{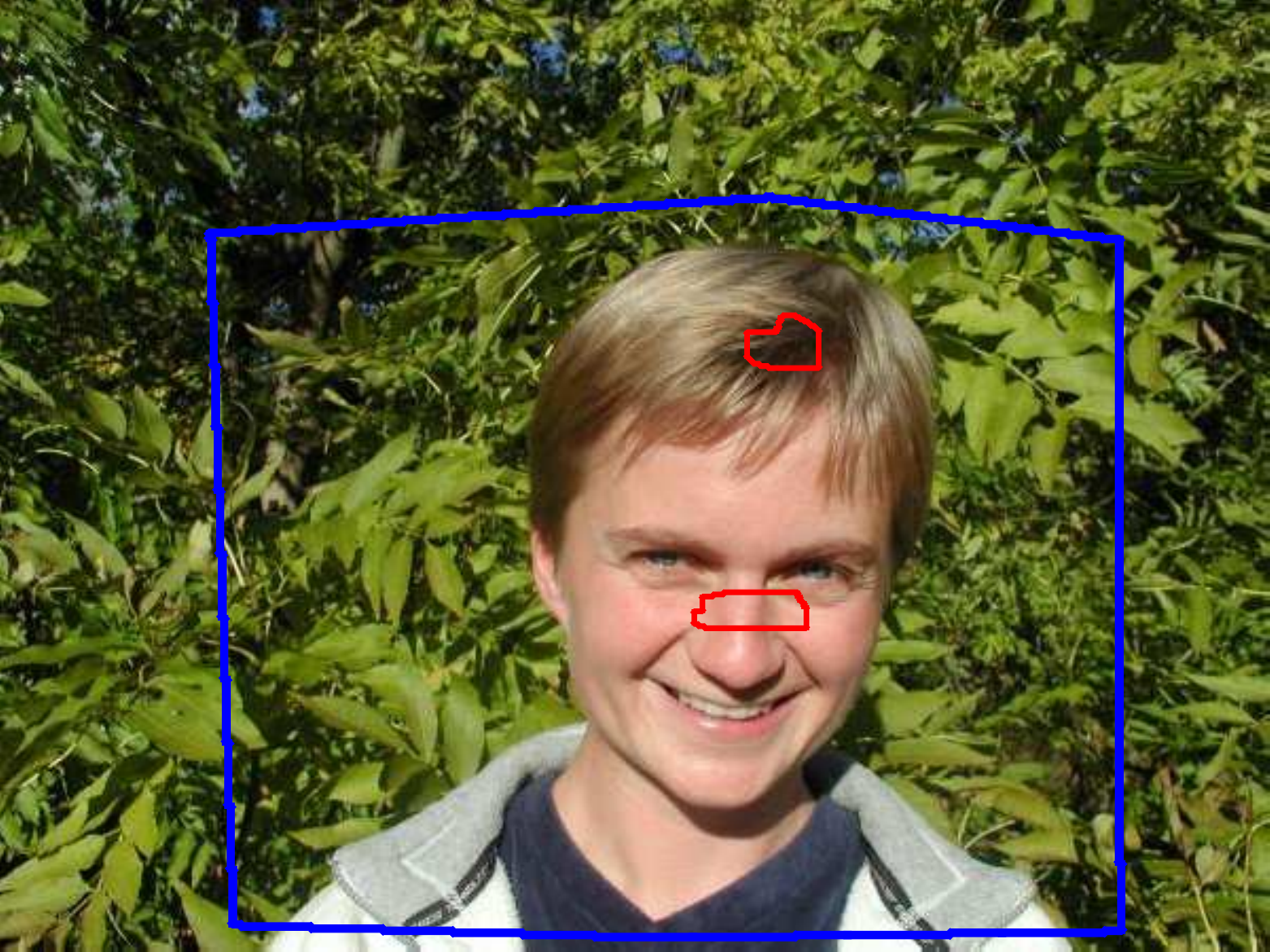}}
~
\subfloat{
{
\setlength{\fboxsep}{0pt}
\setlength{\fboxrule}{.1pt}
\fbox{\includegraphics[width=.3\columnwidth]{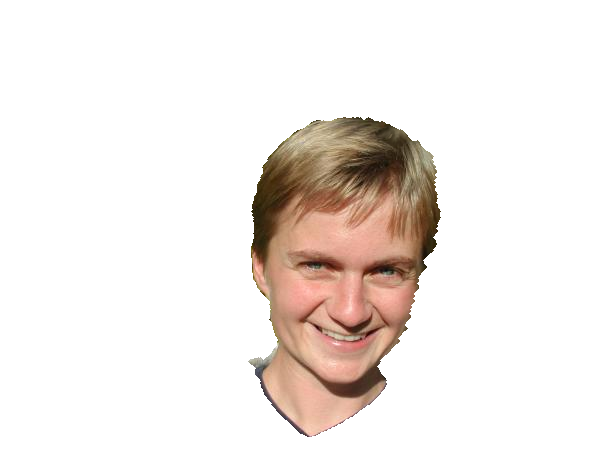}}}}
\hfil
\caption{Illustration of GAR algorithm. Left column: Images and seeds. The red curves are the initial contours and the blue polygons mark the regions of interest. Right column: Segmentation results.}
\label{fig:example_results_rbgac}
\end{figure}

Next we will evaluate the performance of the proposed GAR for image segmentation and provide comparisons against four state-of-the-art seeded image segmentation methods: GrabCut \cite{rother_et_al}, Laplacian Coordinates \cite{casaca_et_al}, Power Watersheds \cite{couprie_grady_najman_talbot} and Random Walker \cite{grady} as well as with the ACWE algorithm \cite{chan_vese}. For the benchmarks we use 2 different datasets. First, the popular GrabCut database consisting of 50 images, the ground truth segmentations and foreground and background seed locations. 20 of those images are taken from the Berkeley Segmentation Dataset \cite{martin_et_al}. The second dataset consists of 40 images from the PASCAL VOC dataset \cite{everingham}. For the purposes of this benchmark, the foreground and background seed images were obtained by eroding the respective regions in the ground truth segmentation image, producing seeds similar to those in the GrabCut dataset.

The metrics we use to compare the above methods are
\begin{itemize}
\item \textbf{Rand Index}: Measures the similarity between the segmentation and the ground truth by calculating the fraction of point pairs that are classified in the same set in the two segmentations. In our benchmarks we use the adjusted form of the Rand Index as proposed in \cite{hubert}.

\item \textbf{Intersection over Union}: It is the ratio of the intersection of the segmentation region with the ground truth divided by their union.

\item \textbf{Variation of Information} (VOI): It is a measure of the distance between a segmentation and the ground truth in terms of their entropies and their mutual information  \cite{meila}.

\item \textbf{Error rate}: It is simply the percentage of pixels that were misclassified.
\end{itemize}

All the above metrics were calculated in the unlabeled regions as provided in the trimap images. Also we exclude the pixels in the ground truth segmentations that are marked as undefined. Typically, these pixels are located near the boundaries and the image resolution is not sufficient to classify them either as foreground or as background. By comparing our benchmark results with those presented in other papers using the same publicly available implementations we noticed a slight decrease in performance. This difference is due to the fact that in the calculations of the segmentation metrics in these papers all image pixels are used, including those already marked as foreground and background. Also, in the case of the Rand Index, there are several different definitions in the bibliography and thus its exact value depends on the particular choice.

In Table {\ref{table:comparison}} we can see the benchmark results for the 5 methods on the GrabCut dataset. We can see that GrabCut outperforms the other algorithms. Our method performs very well, achieving similar results and an accuracy of over 90\% and at the same time surpasses Power Watershed and Random Walker.

Additionaly, in Table {\ref{table:comparison2}} we compare the methods on the subset of the Pascal dataset. We can see that our method outperforms all other methods except for GrabCut and that the performance margin between these 2 methods is small. Thus from the results on both datasets we can conclude that our algorithm can be used as a reliable alternative to the other widely used methods.

In \figurename{\ref{fig:comparison_results}} we can see the different segmentation results for 5 images from the GrabCut datavase and 4 images from the PASCAL dataset. This visualization helps us understand some features of each method. For example we can see that GrabCut and our method perform better when dealing with sharp objects or objects that have holes and non smooth boundaries. Laplacian Coordinates tend to produce smoother object boundaries and this why it fails in cases similar to those of the second and fourth image.

\begin{table}[!h]
\centering
\caption{Comparison of the Methods on the GrabCut Dataset}
\begin{tabular}{|c|c c c c c|}
\hline
\hline
\textbf{Method} & \textbf{RI} ($\uparrow$) & \textbf{IoU} ($\uparrow$) &\textbf{VoI} ($\downarrow$)  &\textbf{Error} ($\downarrow$)&\\    \hline
GC & {0.7861} & 0.8796 & {0.5419}& 5.869 \%&\\ \hline
LC & 0.7763 & 0.8671 & 0.5642 & 6.208 \%&\\ \hline
PW & 0.7171 & 0.8358 & 0.6768 & 7.977 \%&\\ \hline
RW & 0.7200& 0.8343 & 0.6652 & 7.854 \%&\\ \hline
CV & 0.2899& 0.4833 & 1.2244 & 24.828 \%&\\ \hline
Ours & 0.7268 & 0.8519 &  0.6704& 7.793 \%&\\ \hline
\end{tabular}
\label{table:comparison}
\end{table}

\begin{table}[!h]
\centering
\caption{Comparison of the Methods on the PASCAL Dataset}
\begin{tabular}{|c|c c c c c|}
\hline
\hline
\textbf{Method} & \textbf{RI} ($\uparrow$) & \textbf{IoU} ($\uparrow$) &\textbf{VoI} ($\downarrow$)  &\textbf{Error} ($\downarrow$)&\\    \hline
GC & 0.6939 & 0.8321 & {0.7113}& 8.945 \%&\\ \hline
LC & 0.5861 & 0.7566 & 0.8834 & 12.421 \%&\\ \hline
PW & 0.5683 & 0.7639 & 0.9345 & 12.926 \%&\\ \hline
RW & 0.3898 & 0.6872 & 1.1578 & 20.329 \%&\\ \hline
CV & 0.2045& 0.4142 & 1.2056 & 29.744 \%&\\ \hline
Ours & 0.6858 & 0.8317 &  0.7266& 9.309 \%&\\ \hline
\end{tabular}
\label{table:comparison2}
\end{table}

\begin{figure*}[!thb]
\captionsetup[subfloat]{farskip=5pt,captionskip=0pt}
\centering
\subfloat{\includegraphics[height=.1\columnwidth]{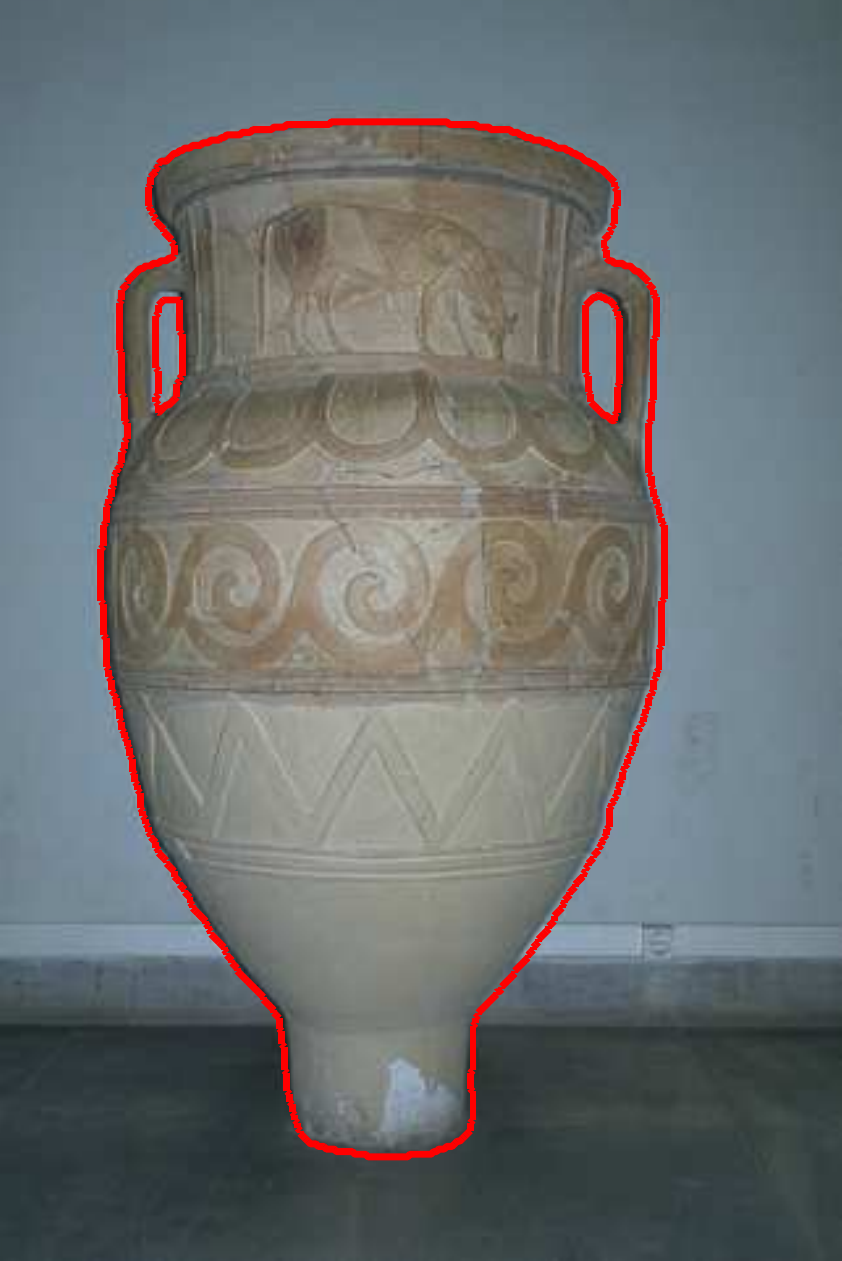}}
\,
\subfloat{\includegraphics[height=.1\columnwidth]{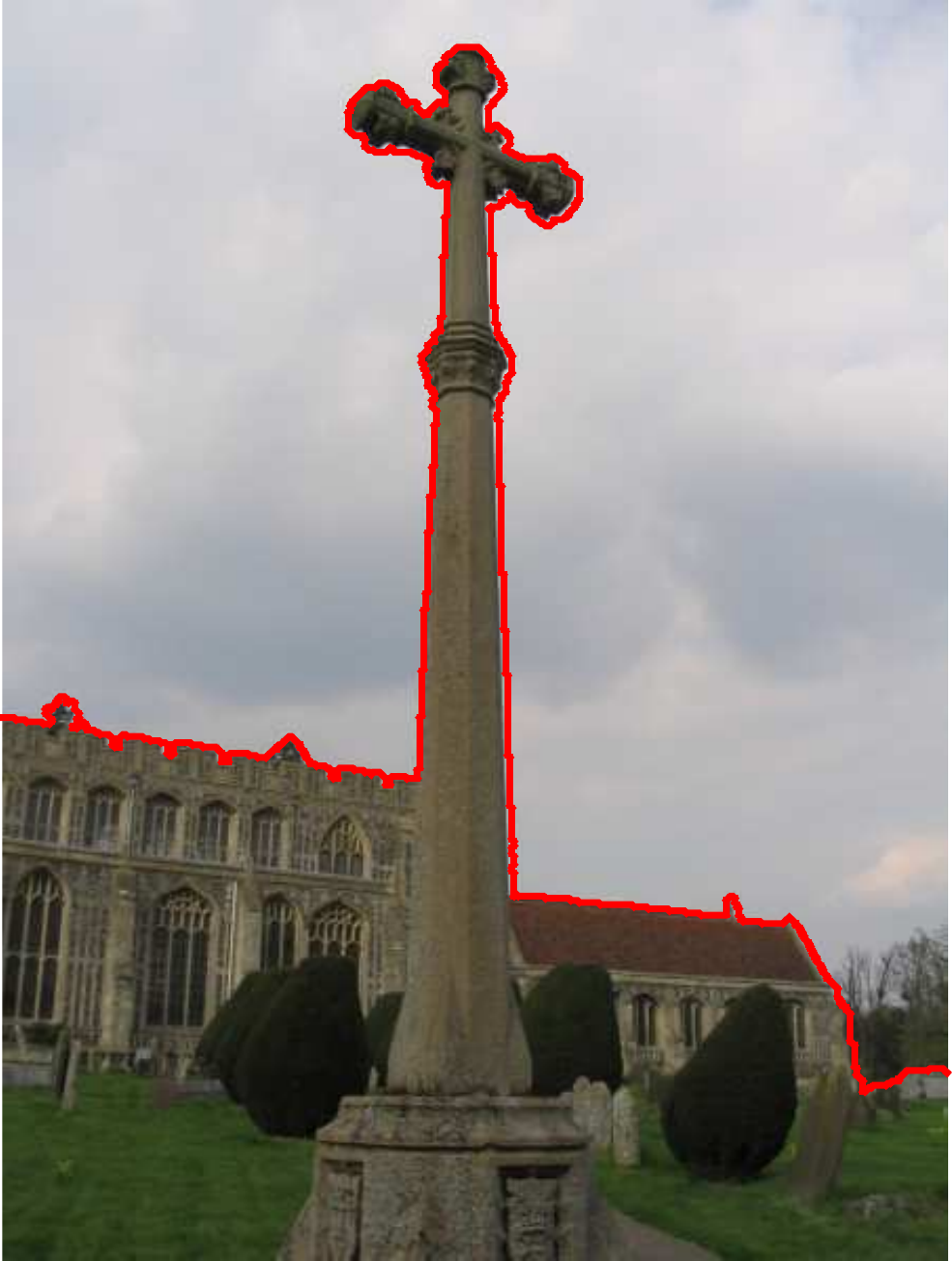}}
\,
\subfloat{\includegraphics[height=.1\columnwidth]{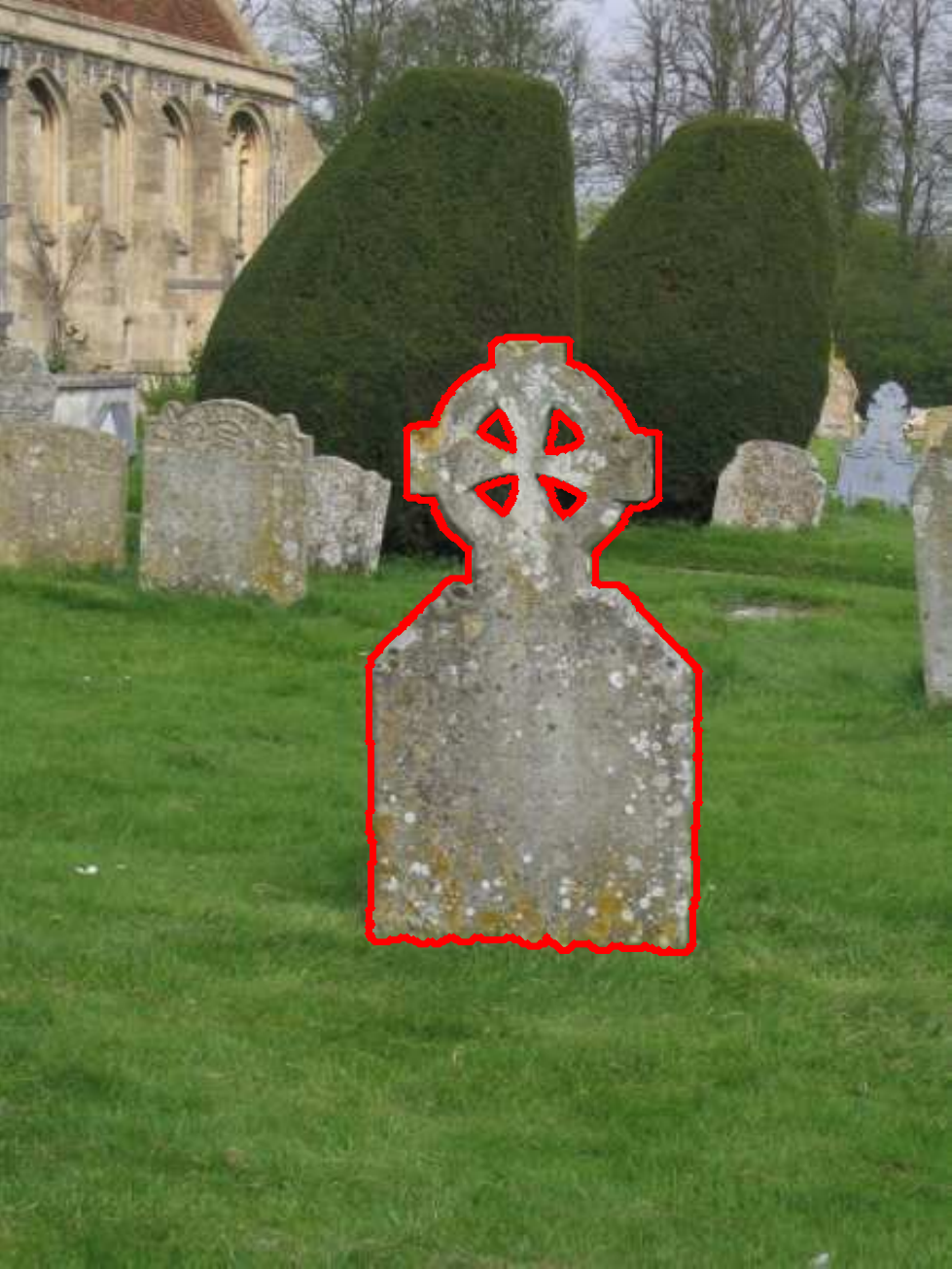}}
\,
\subfloat{\includegraphics[height=.1\columnwidth]{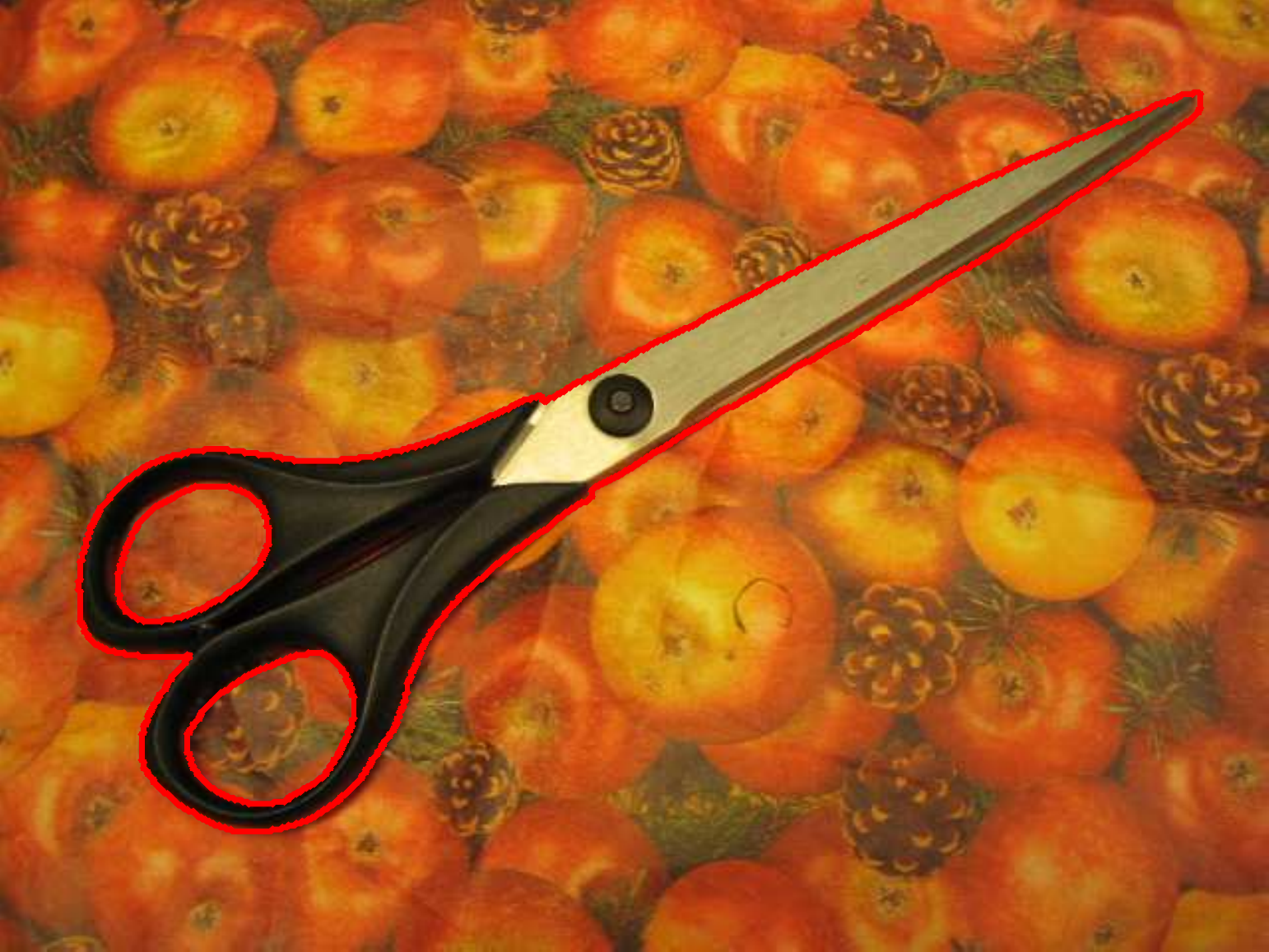}}
\,
\subfloat{\includegraphics[height=.1\columnwidth]{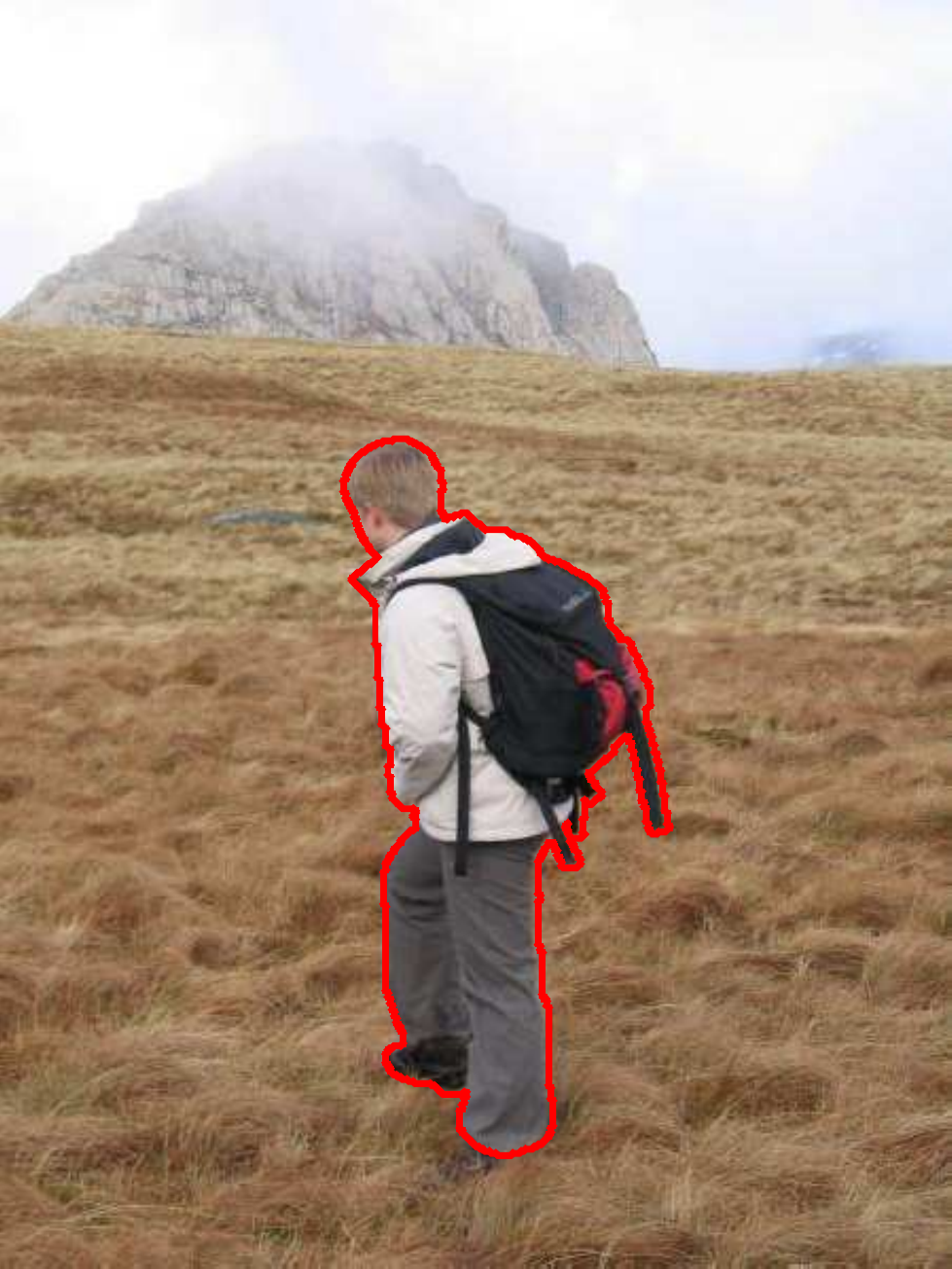}}
\,
\subfloat{\includegraphics[height=.1\columnwidth]{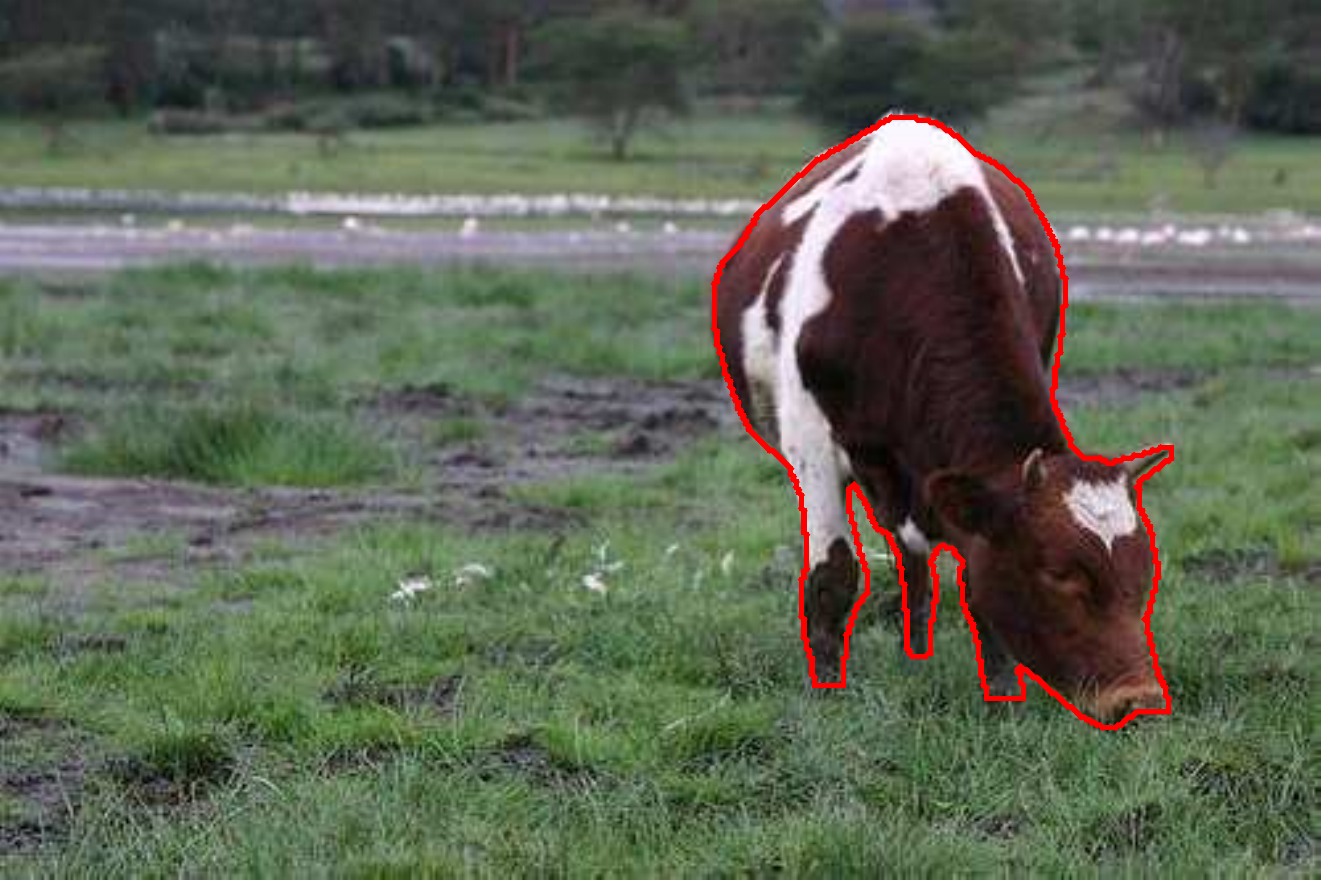}}
\,
\subfloat{\includegraphics[height=.1\columnwidth]{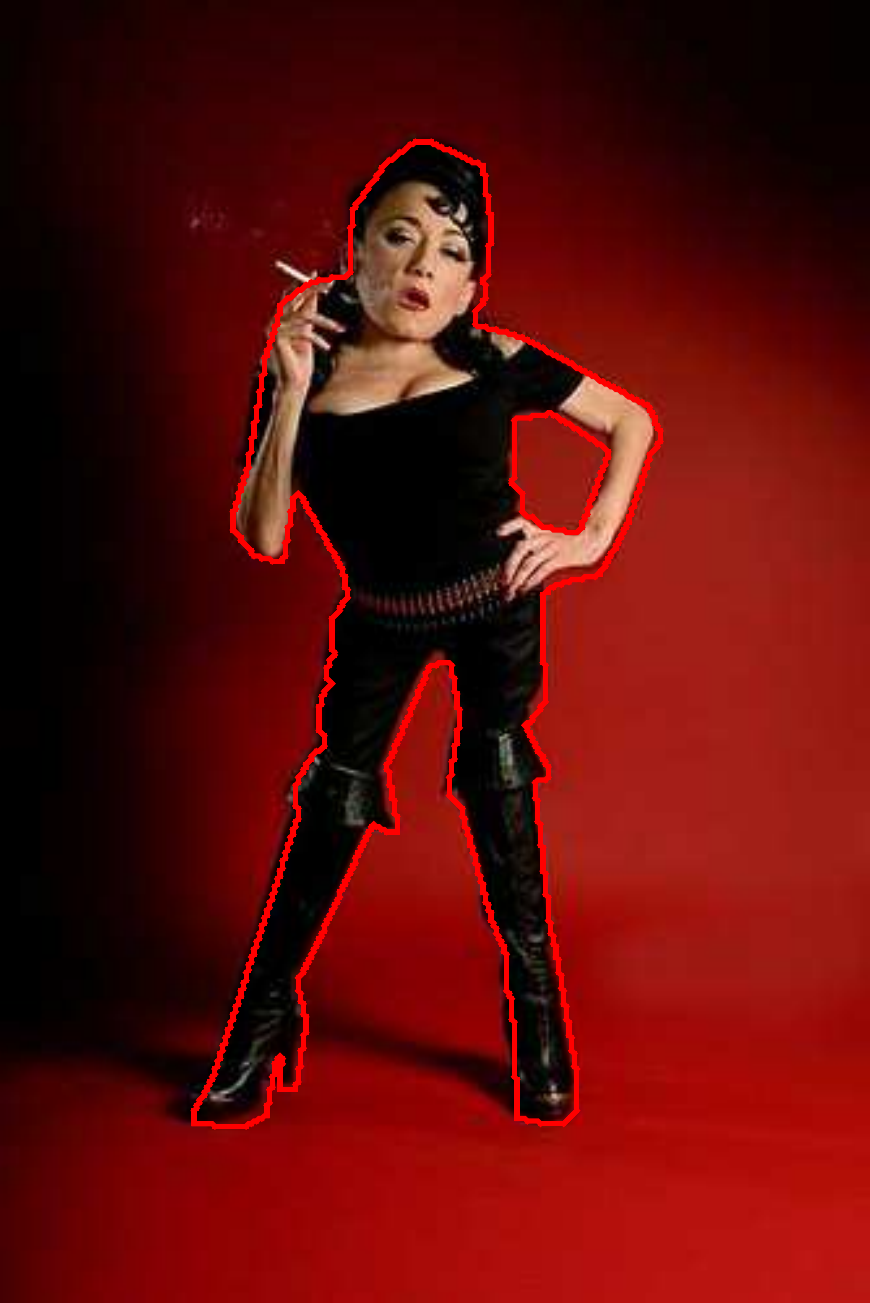}}
\,
\subfloat{\includegraphics[height=.1\columnwidth]{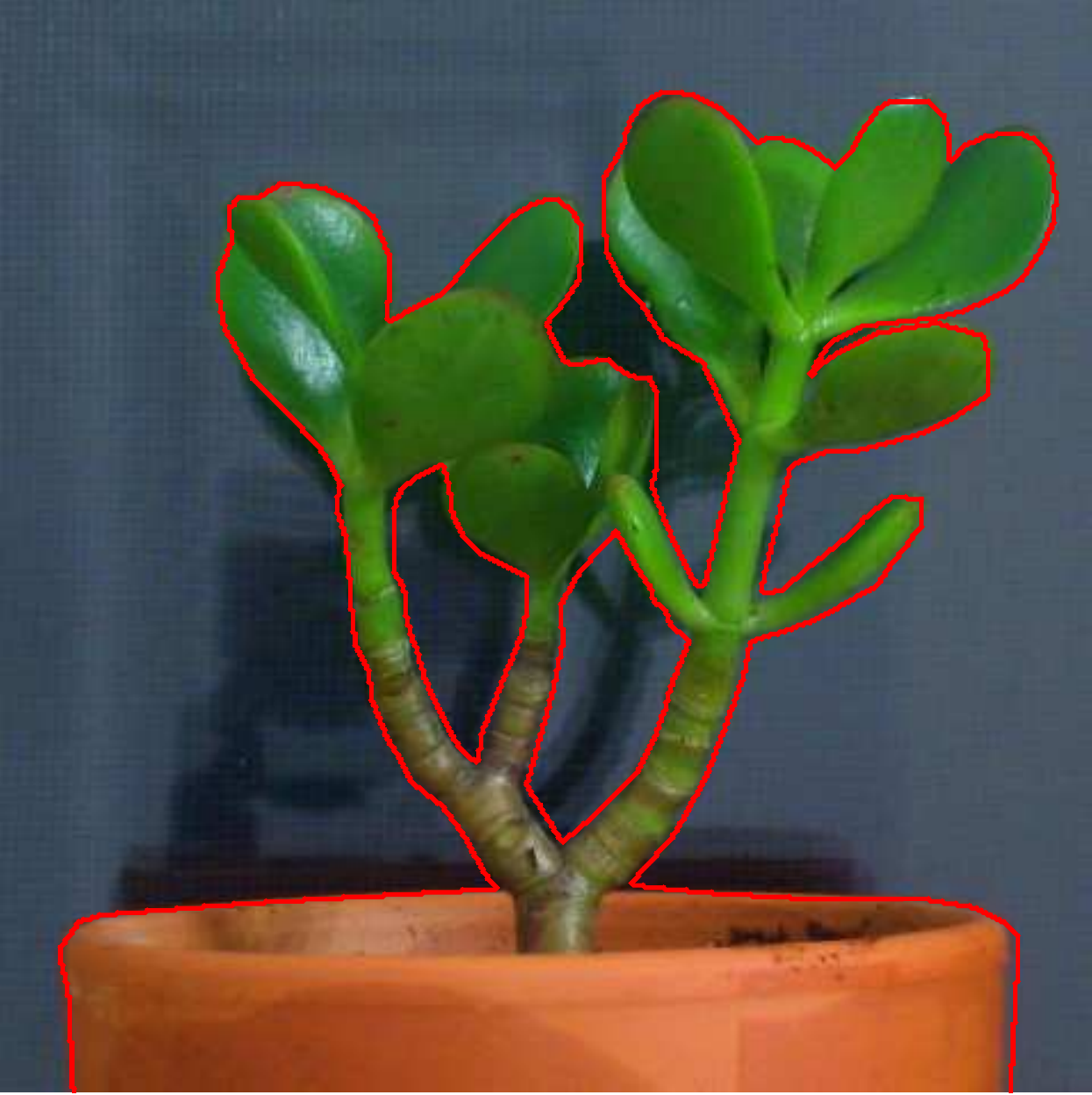}}
\,
\subfloat{\includegraphics[height=.1\columnwidth]{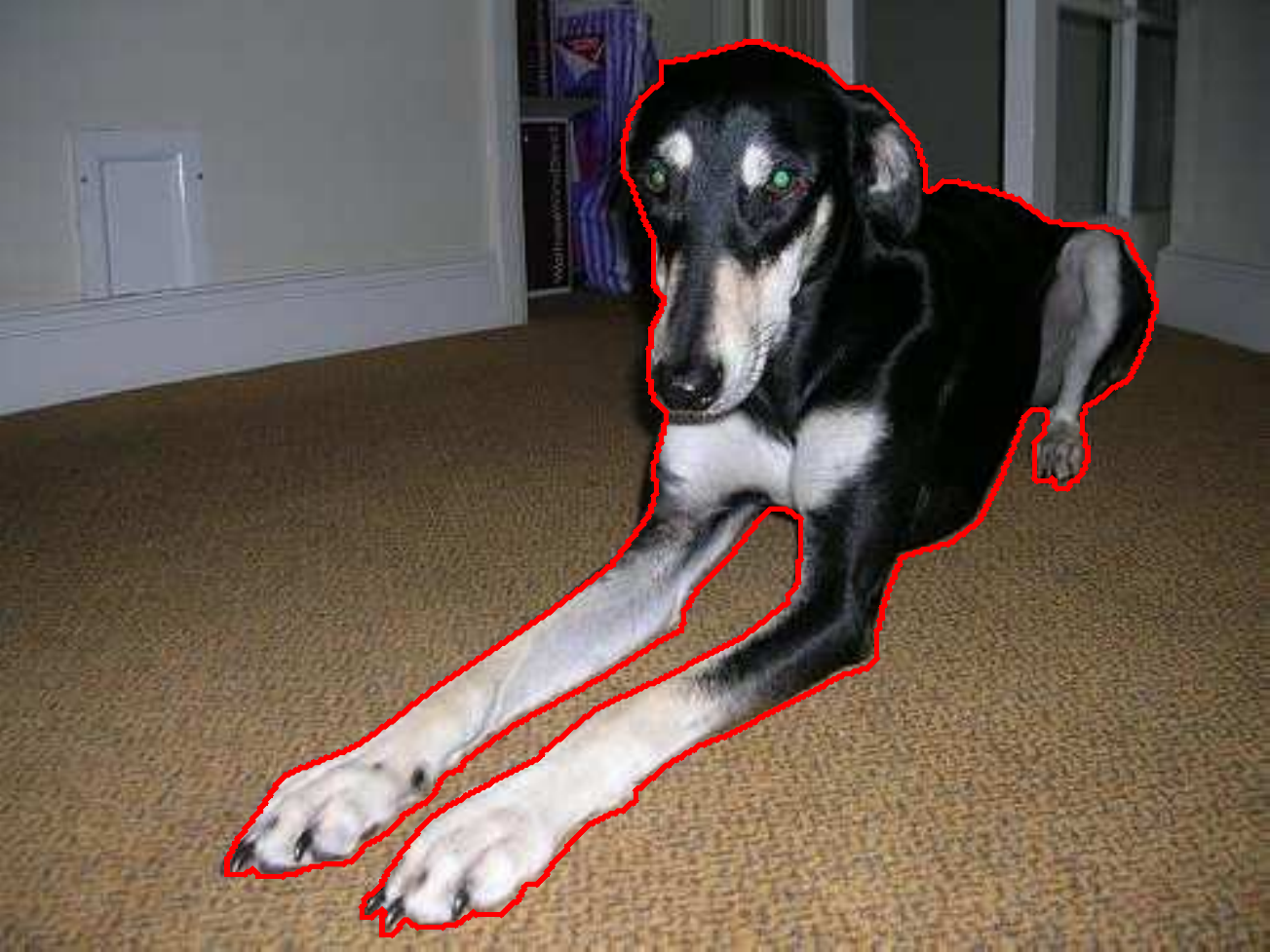}}
\hfil
\\
\subfloat{\includegraphics[height=.1\columnwidth]{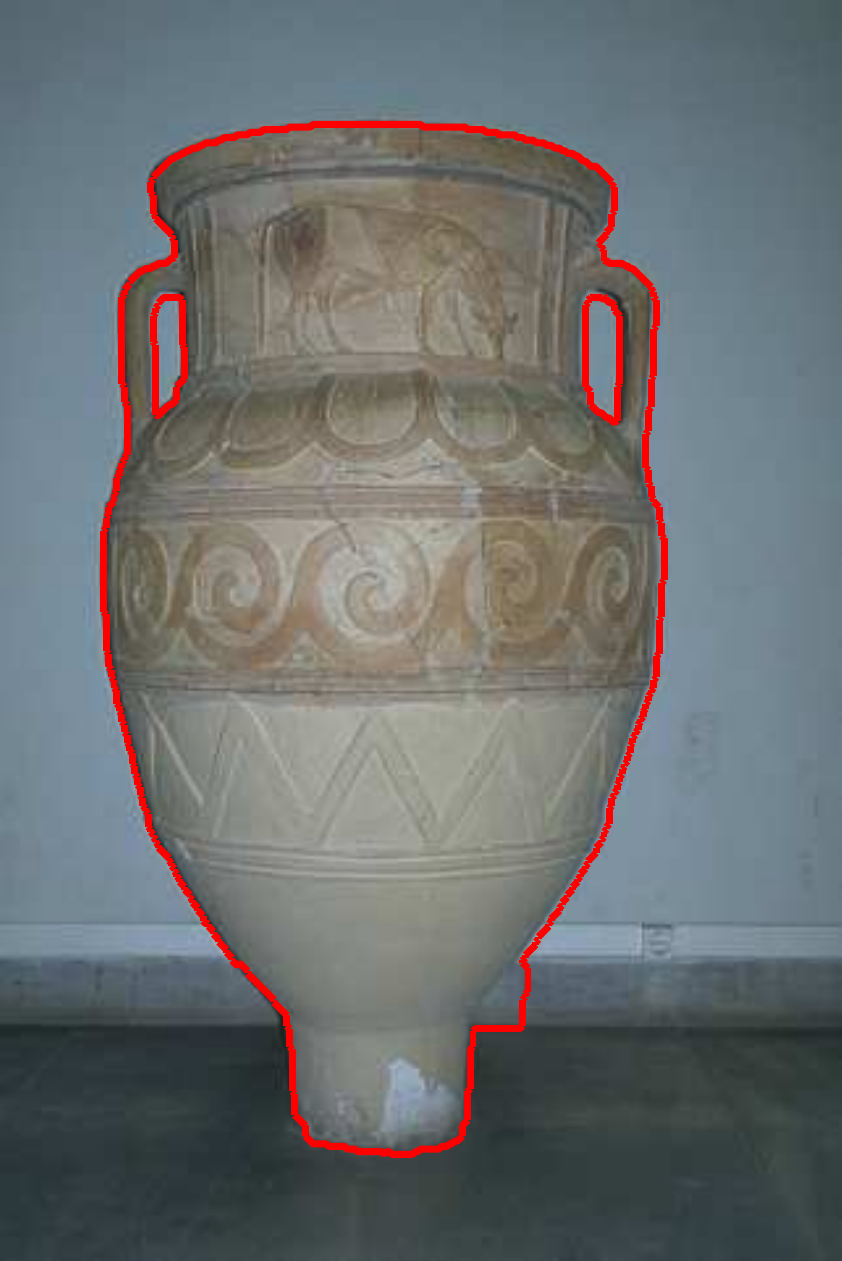}}
\,
\subfloat{\includegraphics[height=.1\columnwidth]{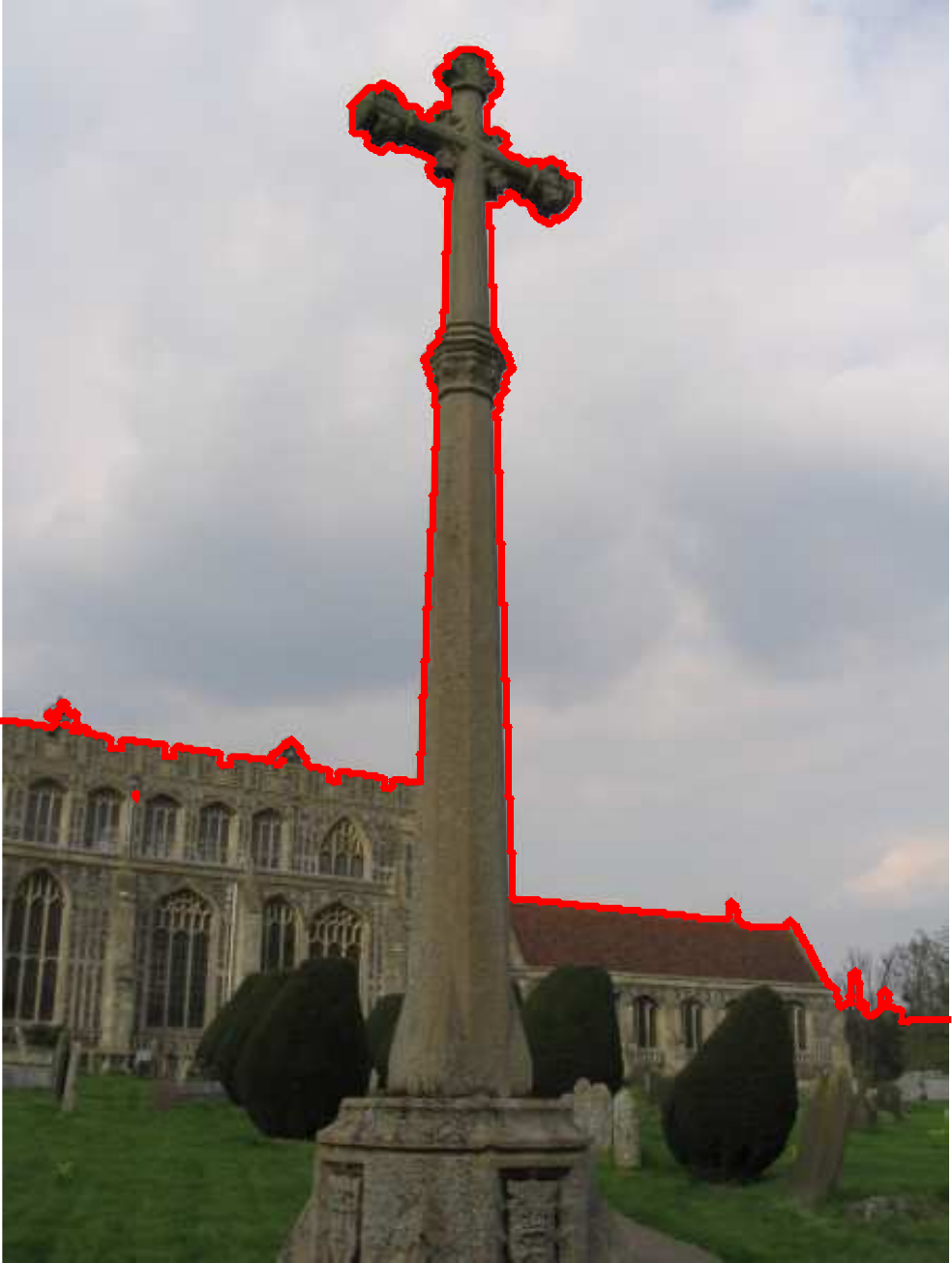}}
\,
\subfloat{\includegraphics[height=.1\columnwidth]{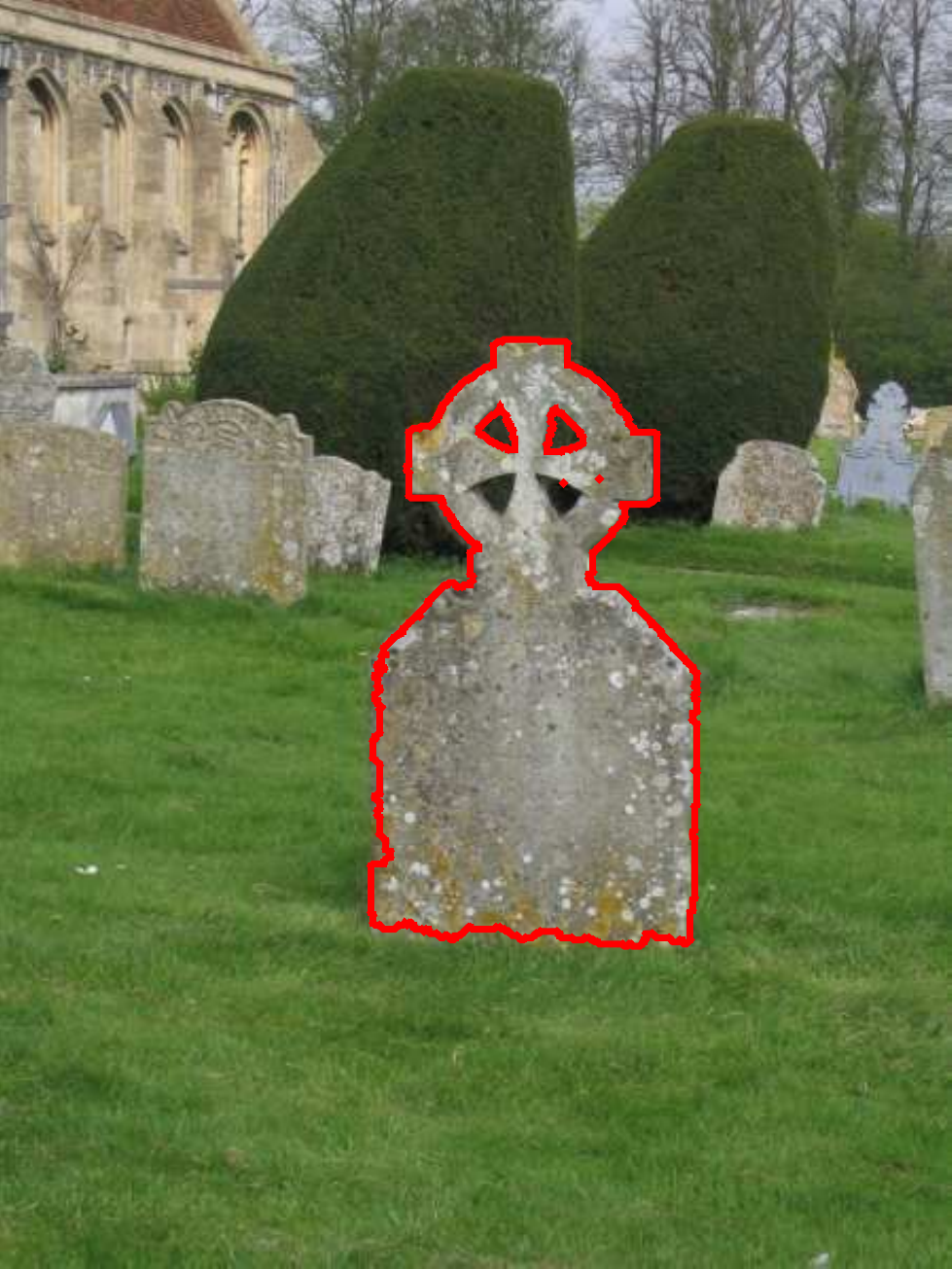}}
\,
\subfloat{\includegraphics[height=.1\columnwidth]{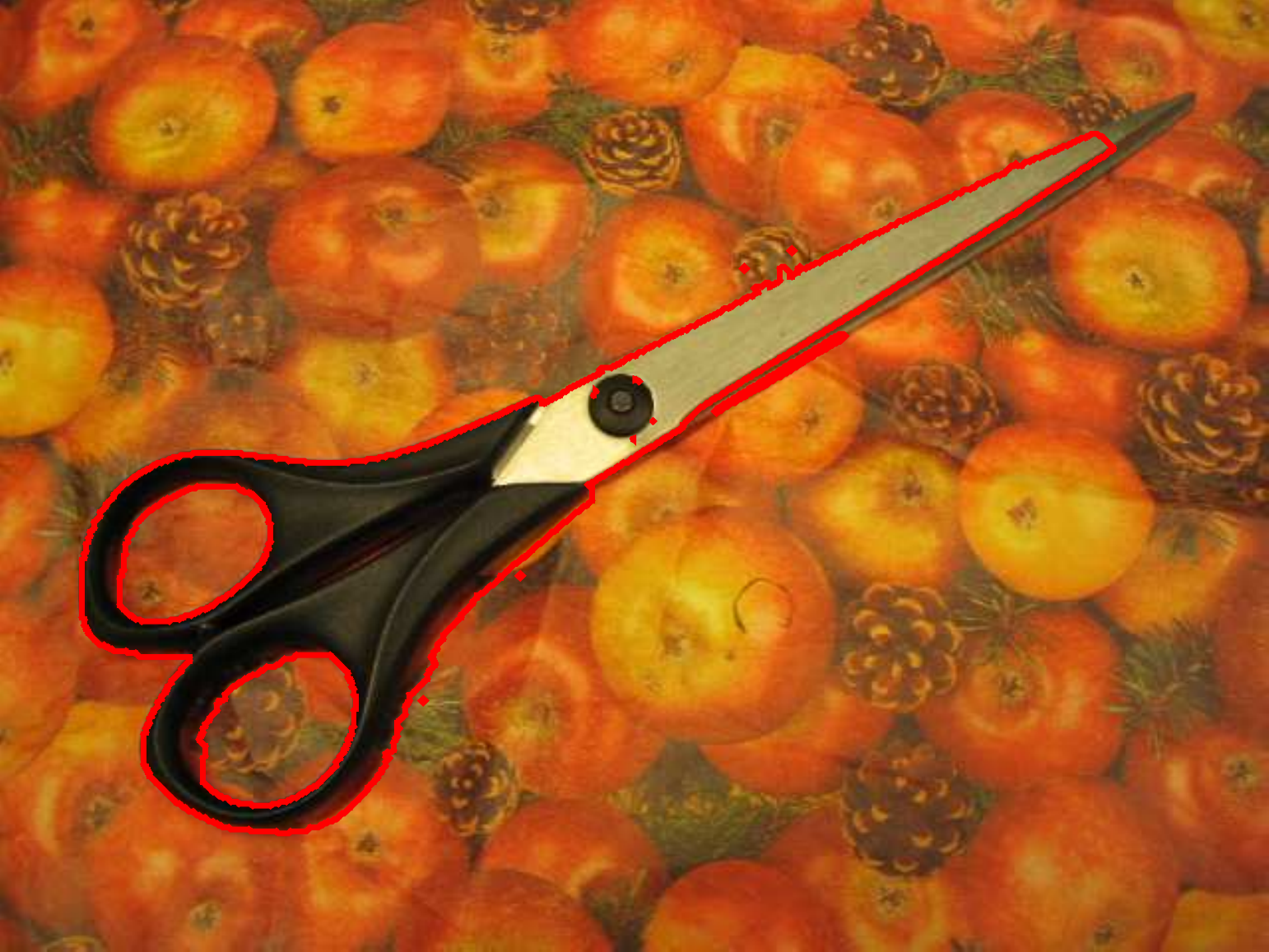}}
\,
\subfloat{\includegraphics[height=.1\columnwidth]{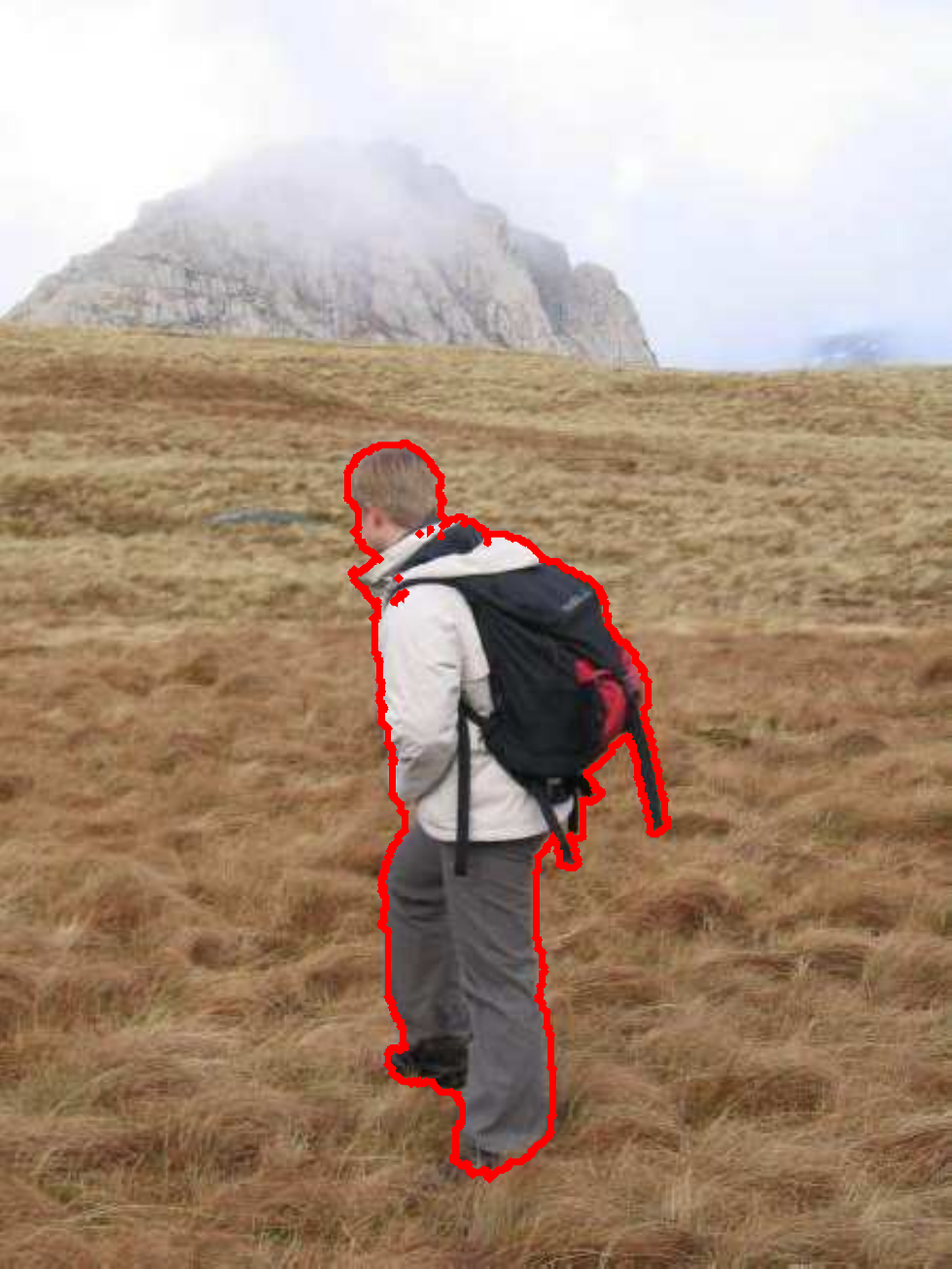}}
\,
\subfloat{\includegraphics[height=.1\columnwidth]{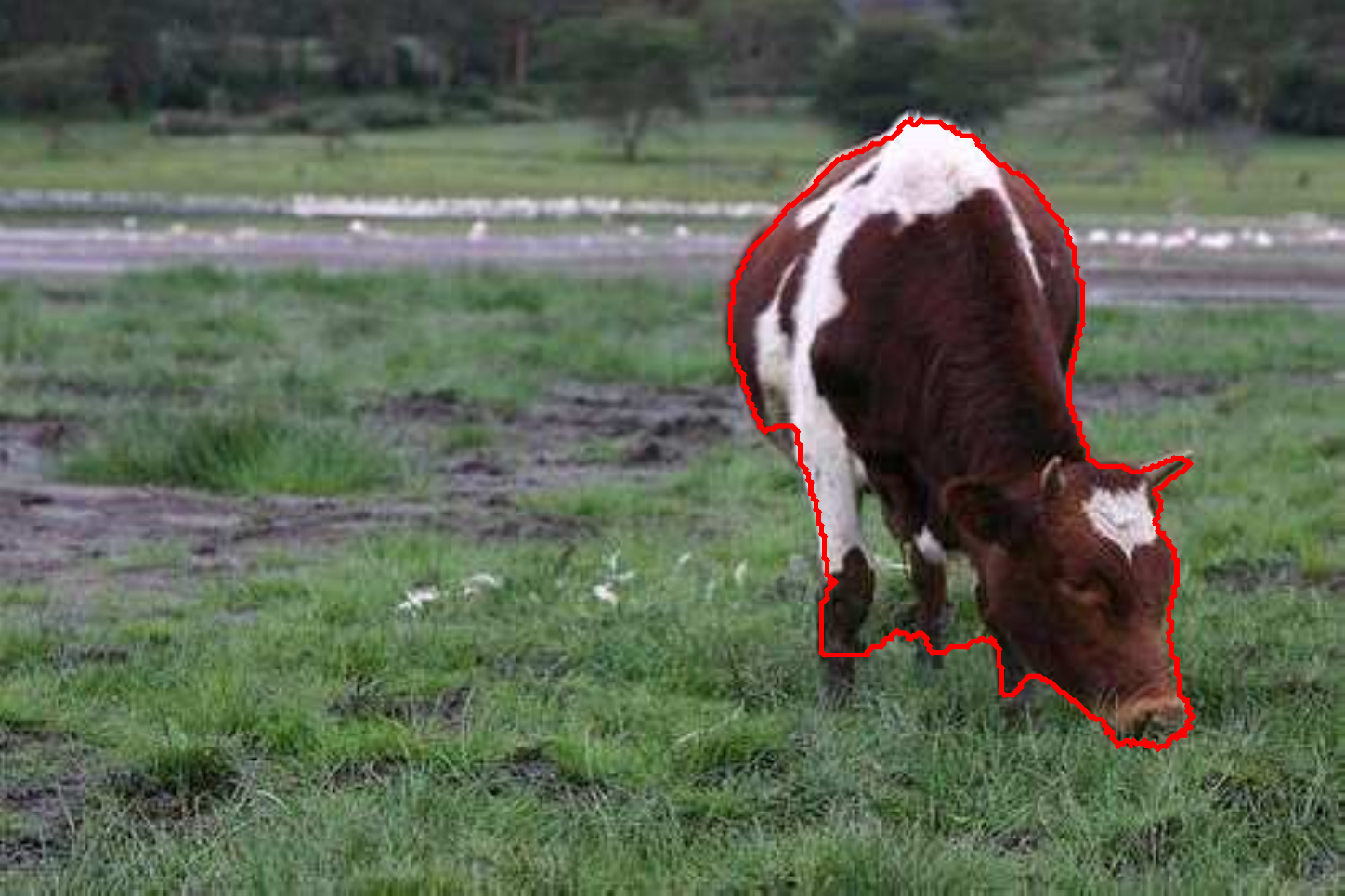}}
\,
\subfloat{\includegraphics[height=.1\columnwidth]{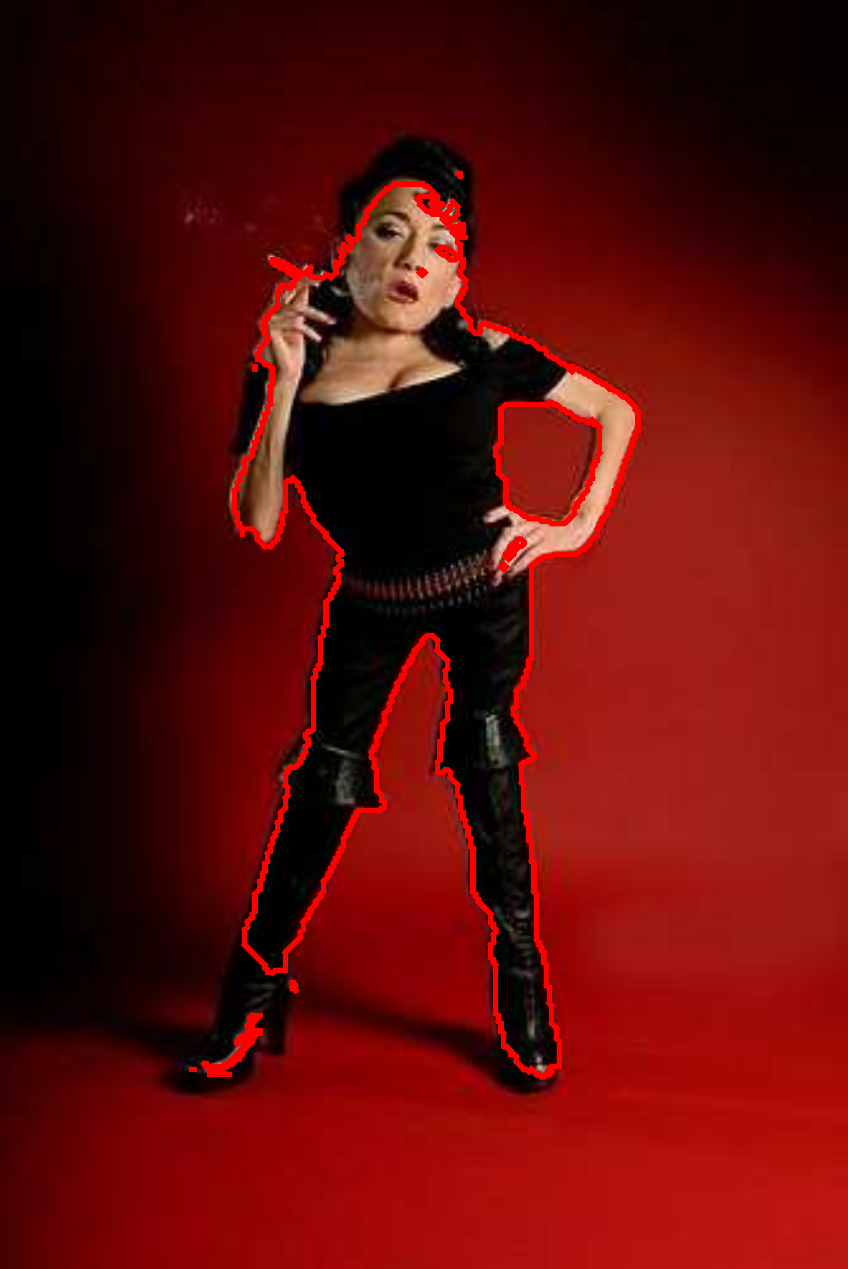}}
\,
\subfloat{\includegraphics[height=.1\columnwidth]{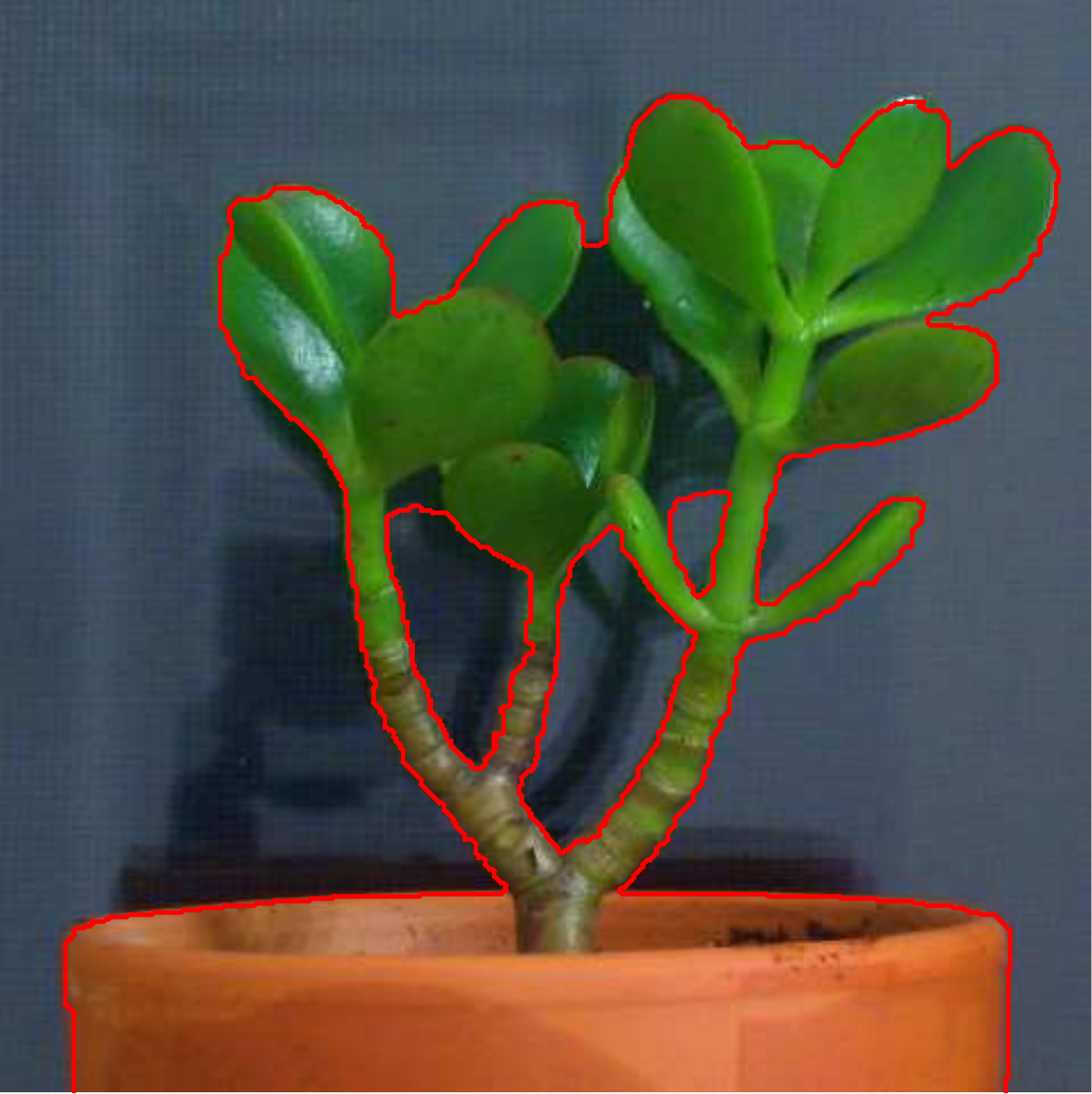}}
\,
\subfloat{\includegraphics[height=.1\columnwidth]{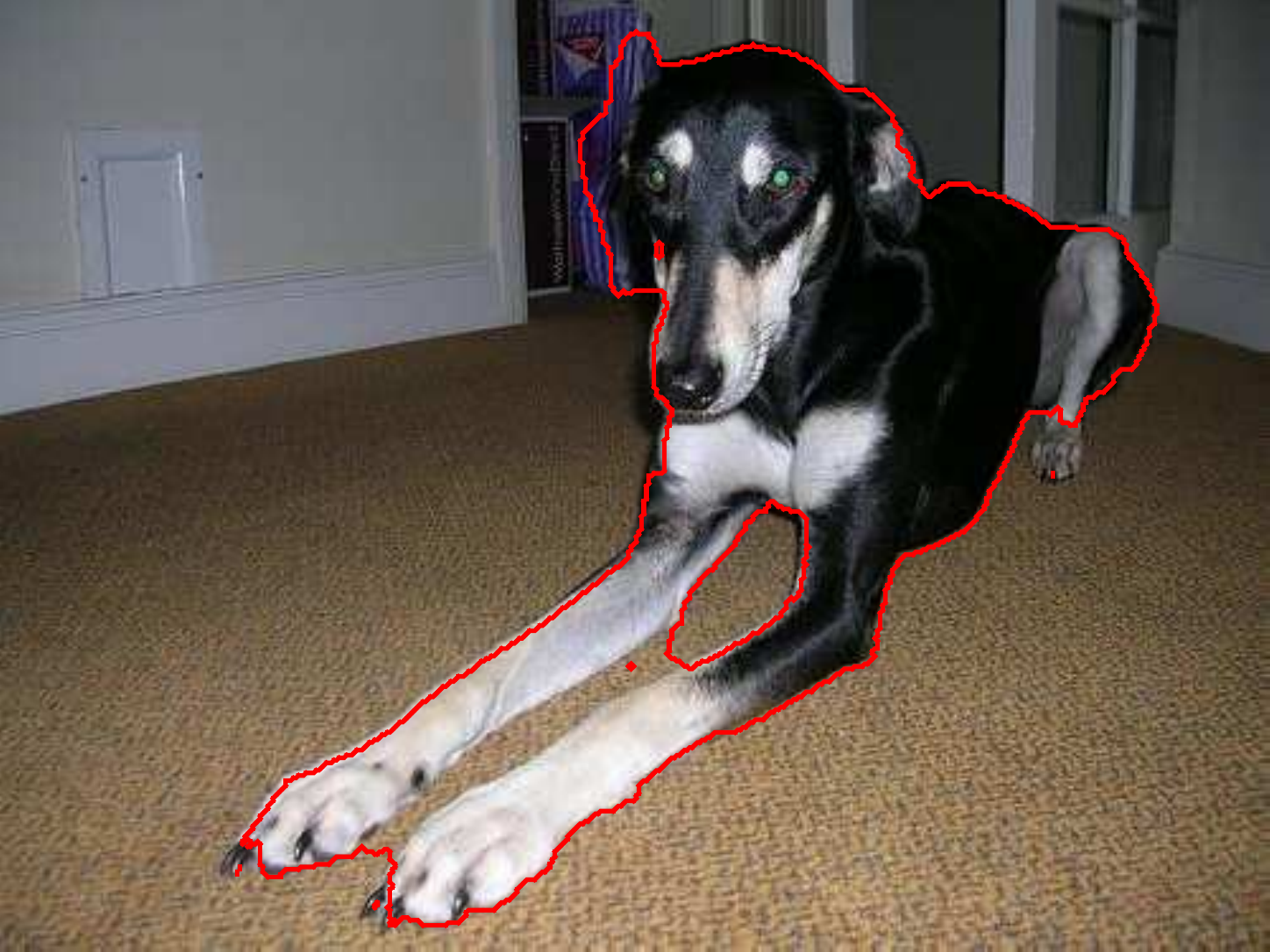}}
\hfil
\\
\subfloat{\includegraphics[height=.1\columnwidth]{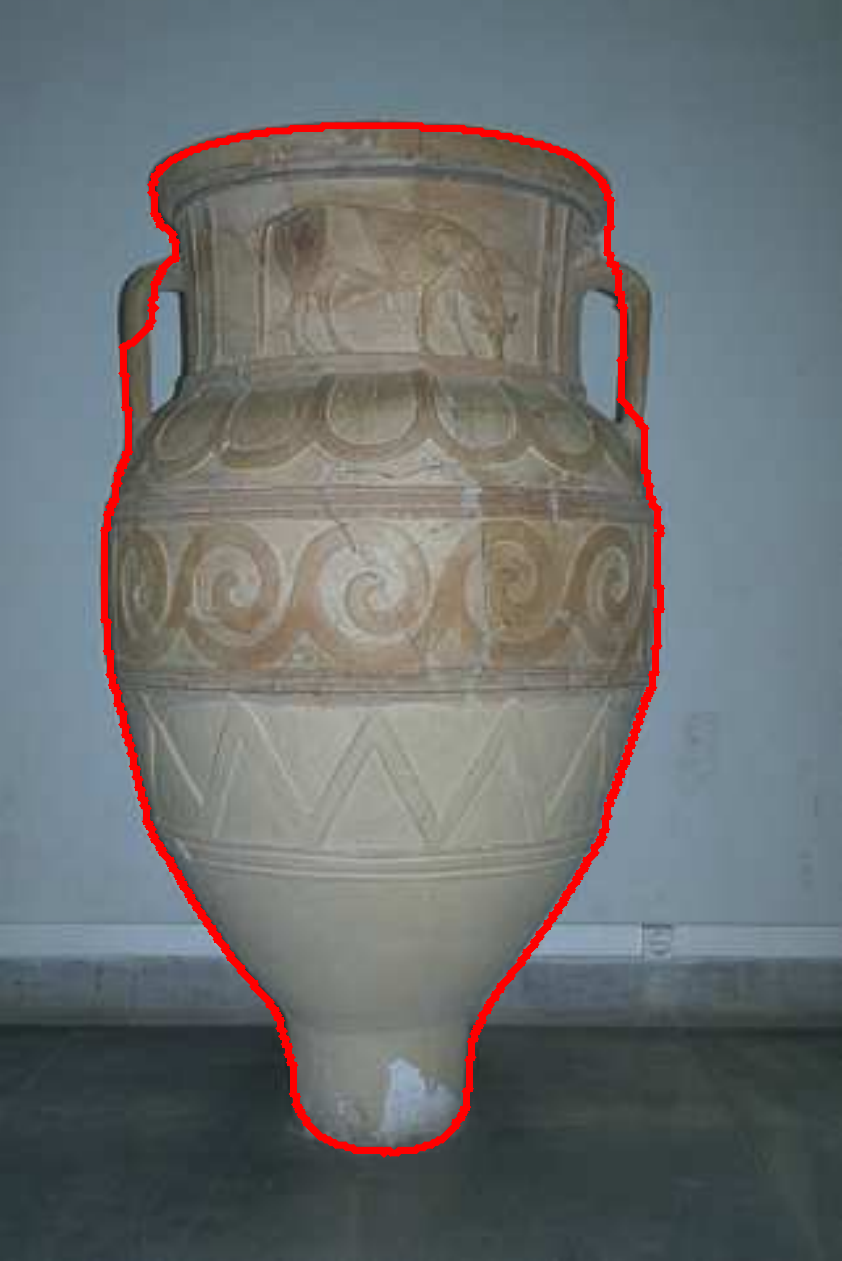}}
\,
\subfloat{\includegraphics[height=.1\columnwidth]{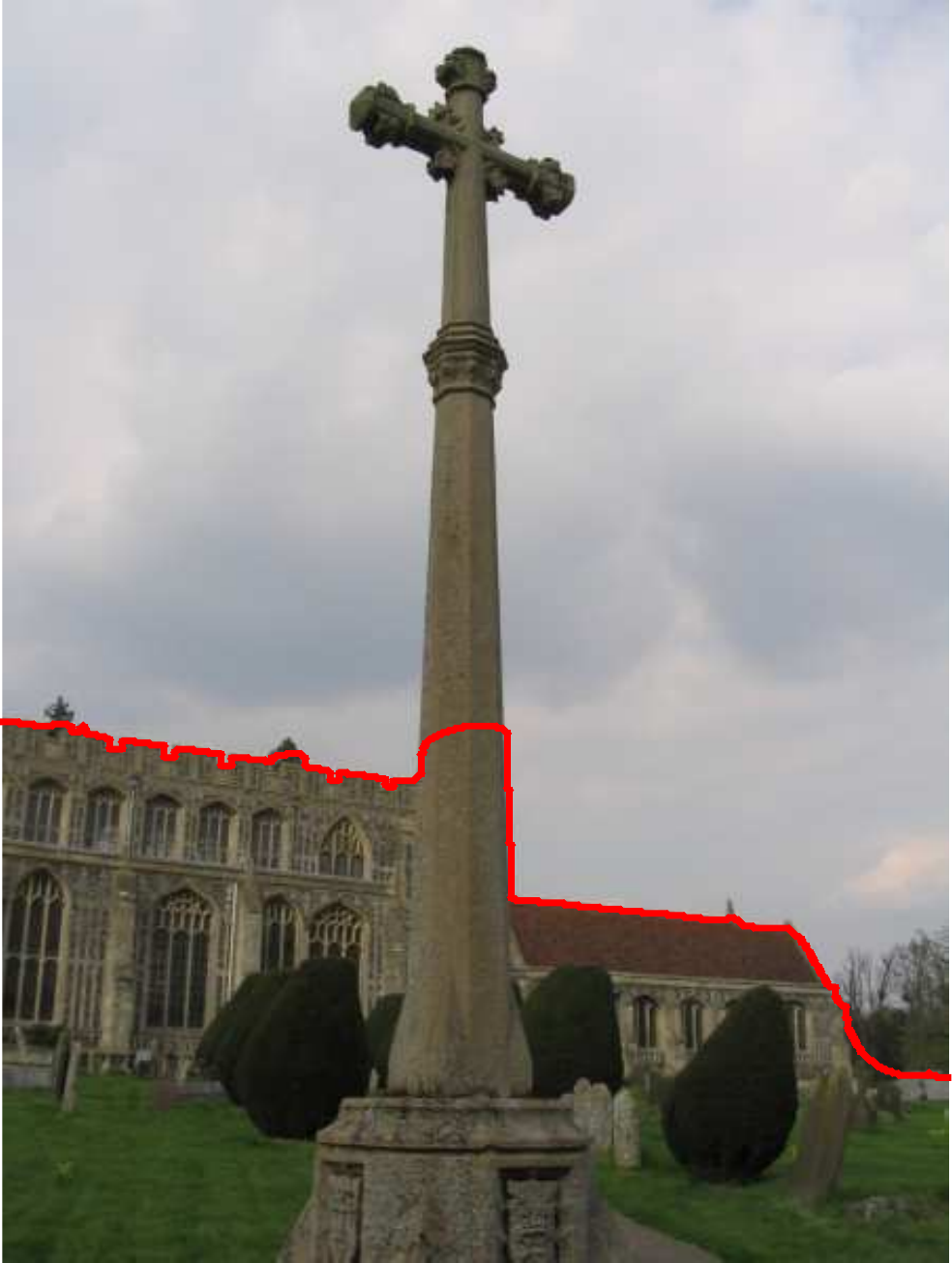}}
\,
\subfloat{\includegraphics[height=.1\columnwidth]{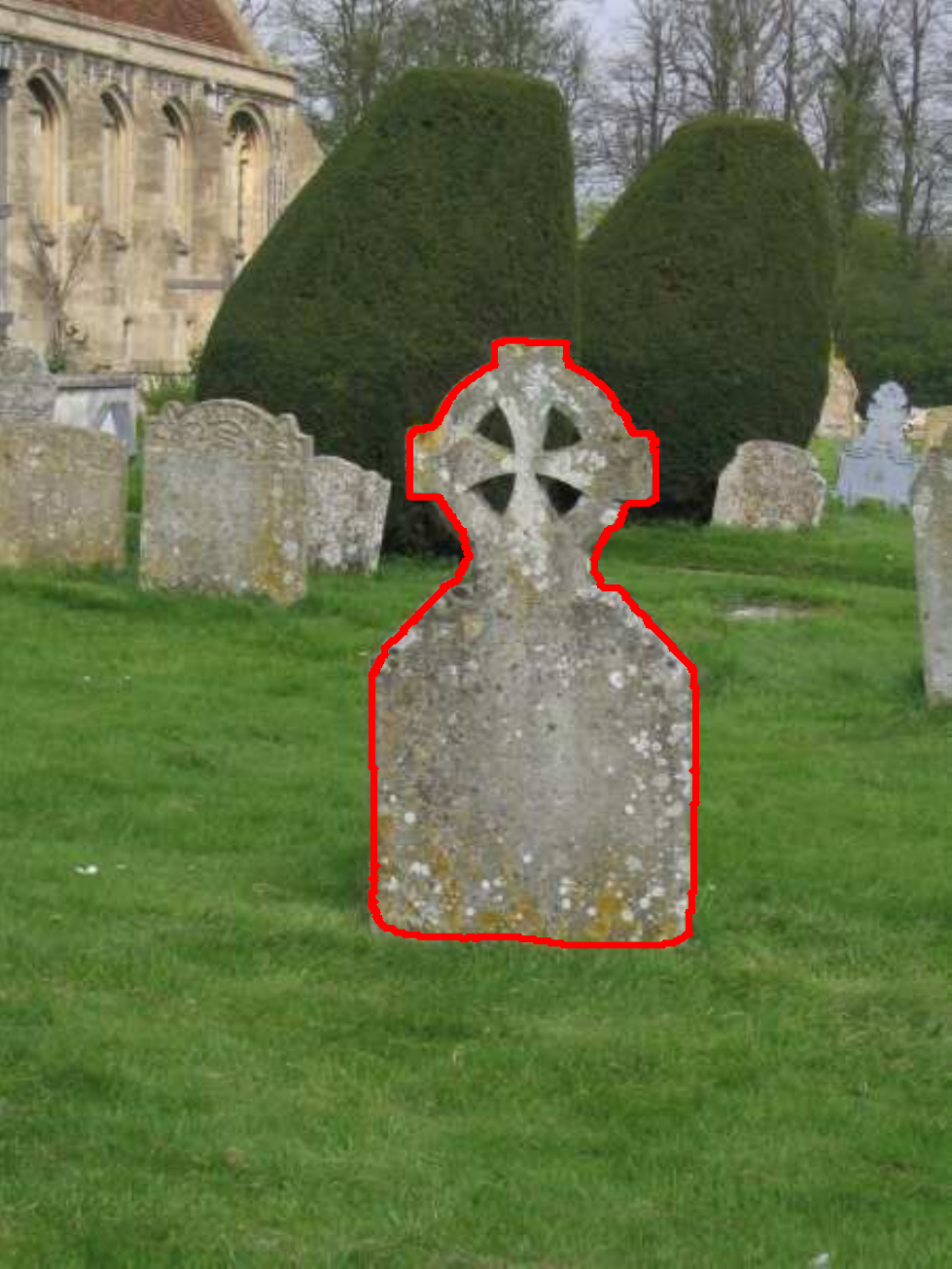}}
\,
\subfloat{\includegraphics[height=.1\columnwidth]{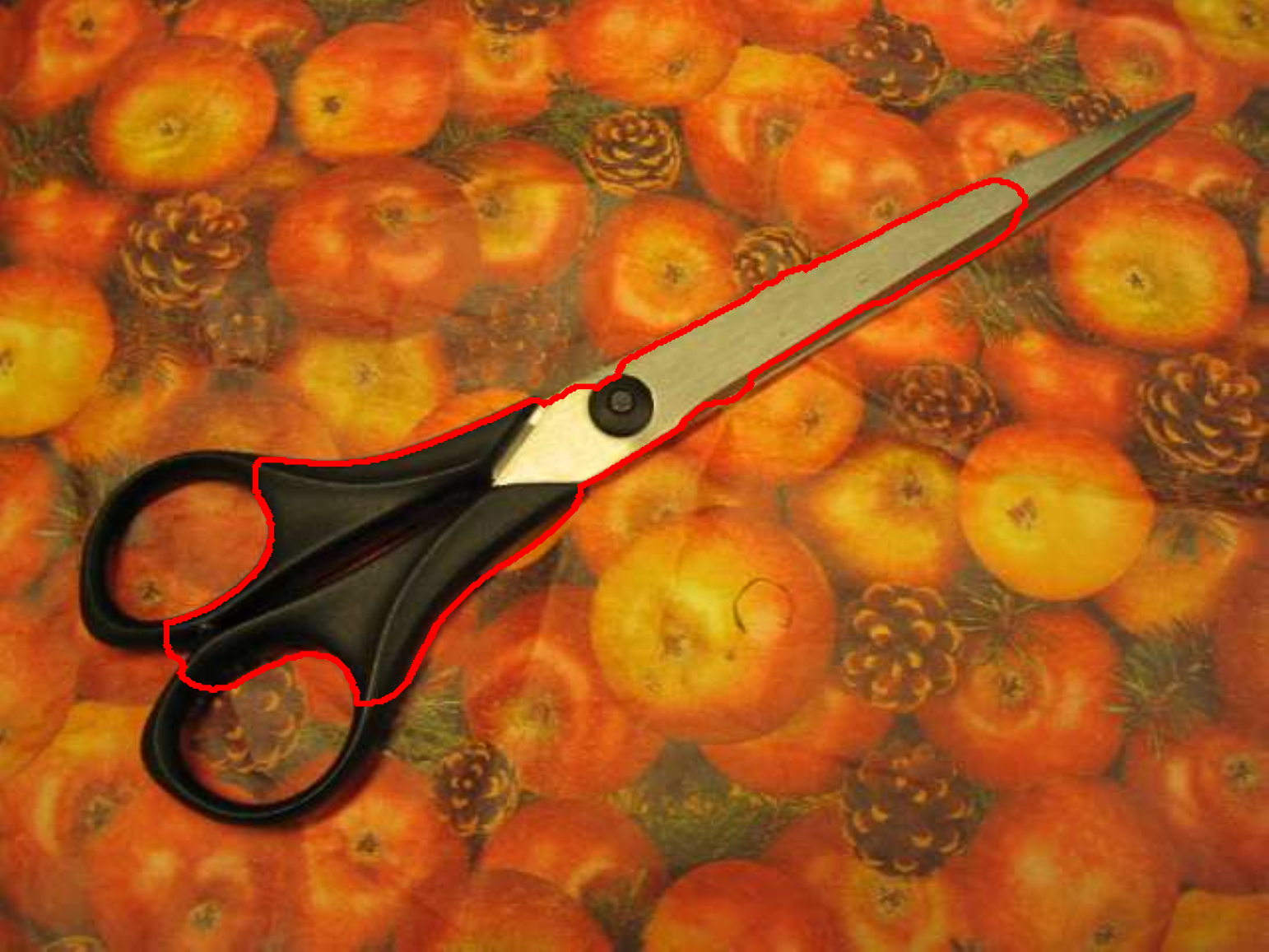}}
\,
\subfloat{\includegraphics[height=.1\columnwidth]{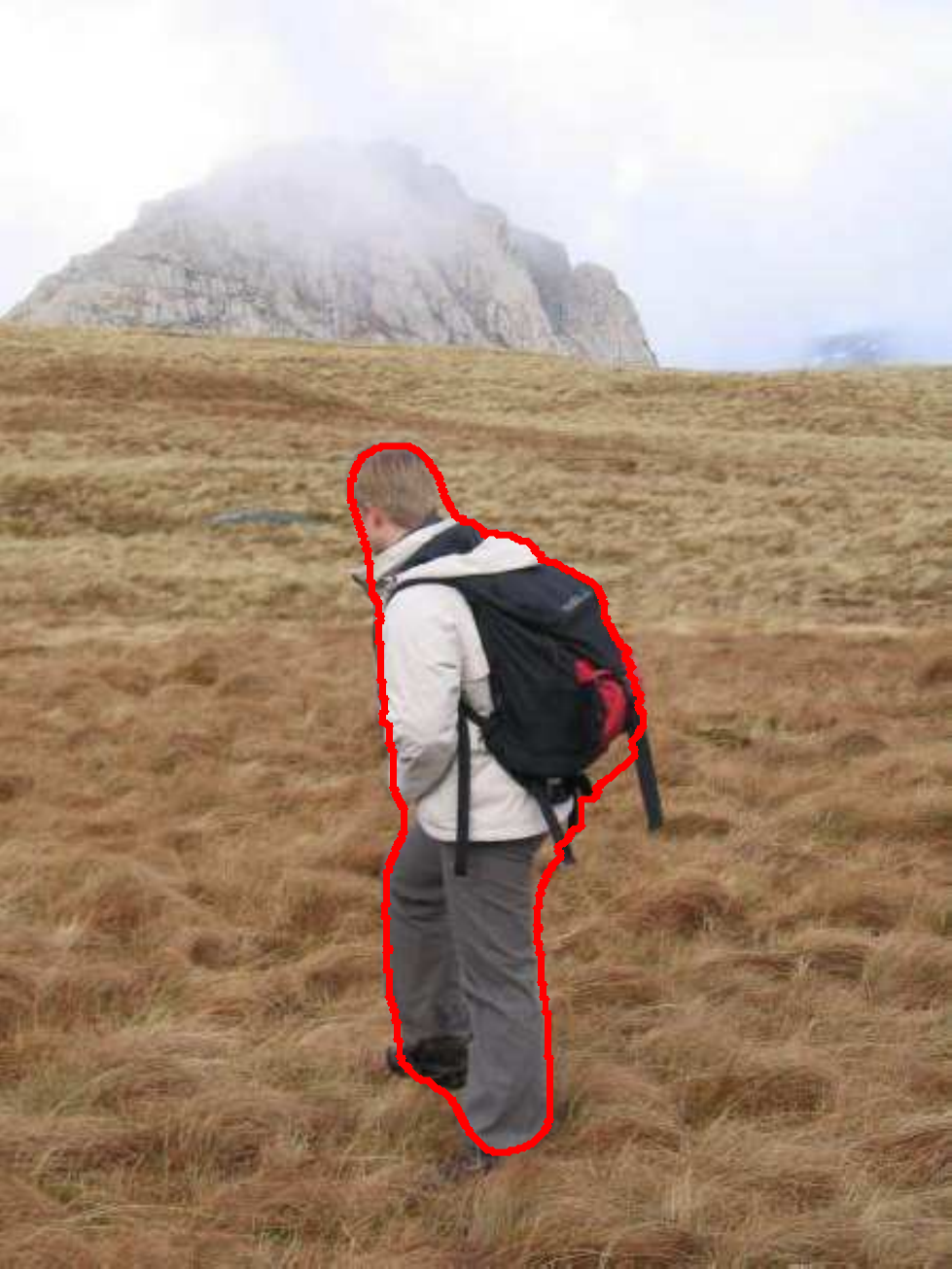}}
\,
\subfloat{\includegraphics[height=.1\columnwidth]{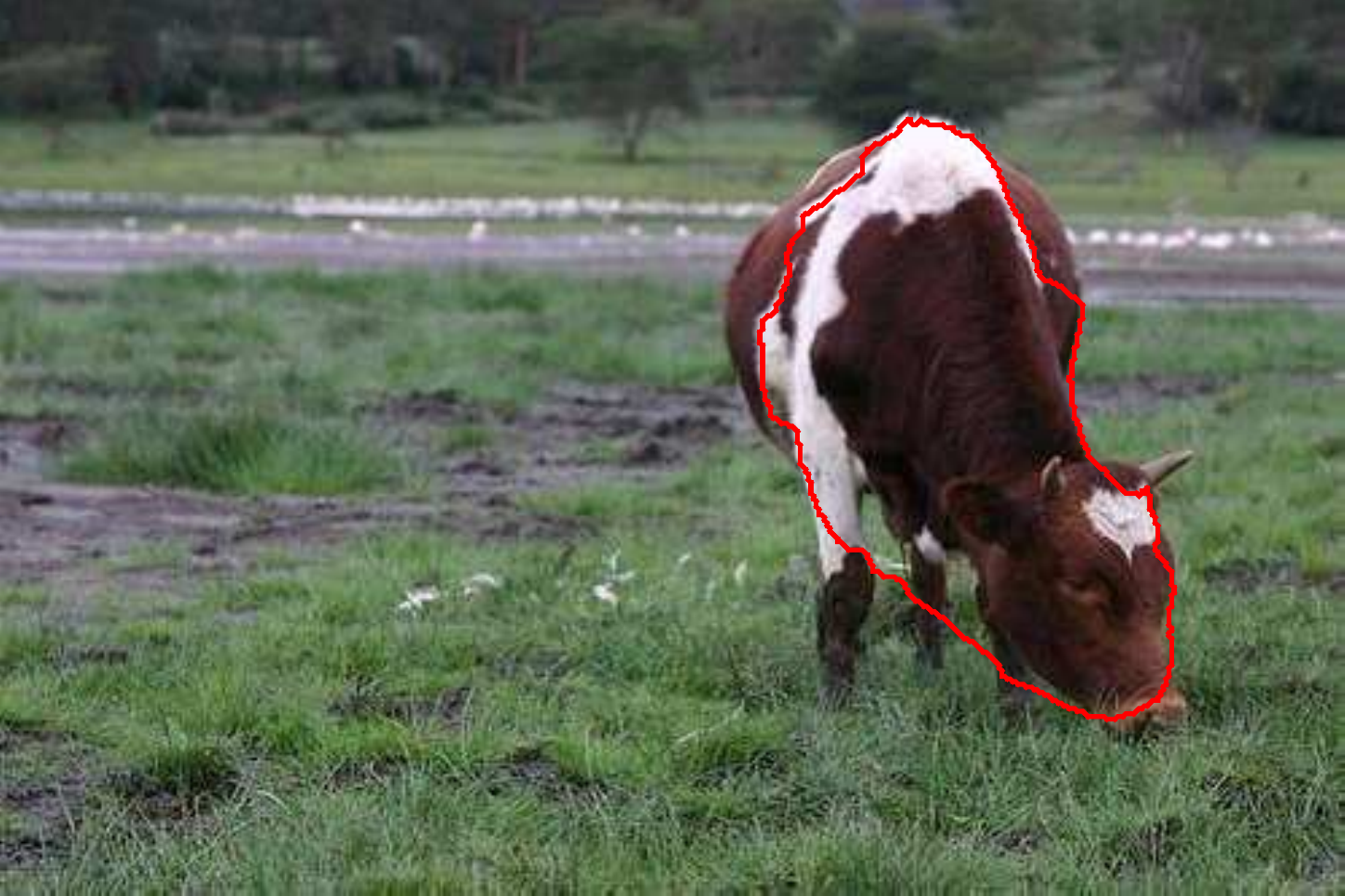}}
\,
\subfloat{\includegraphics[height=.1\columnwidth]{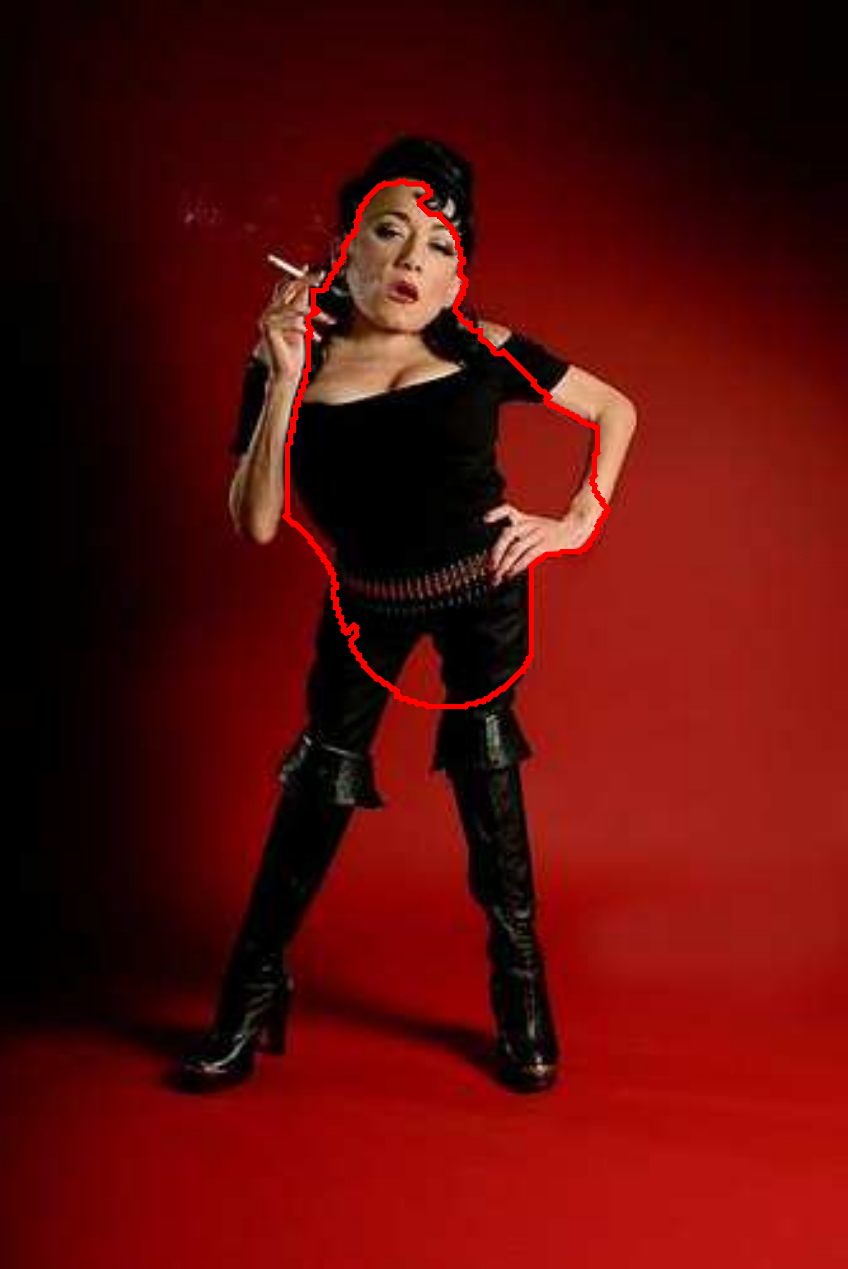}}
\,
\subfloat{\includegraphics[height=.1\columnwidth]{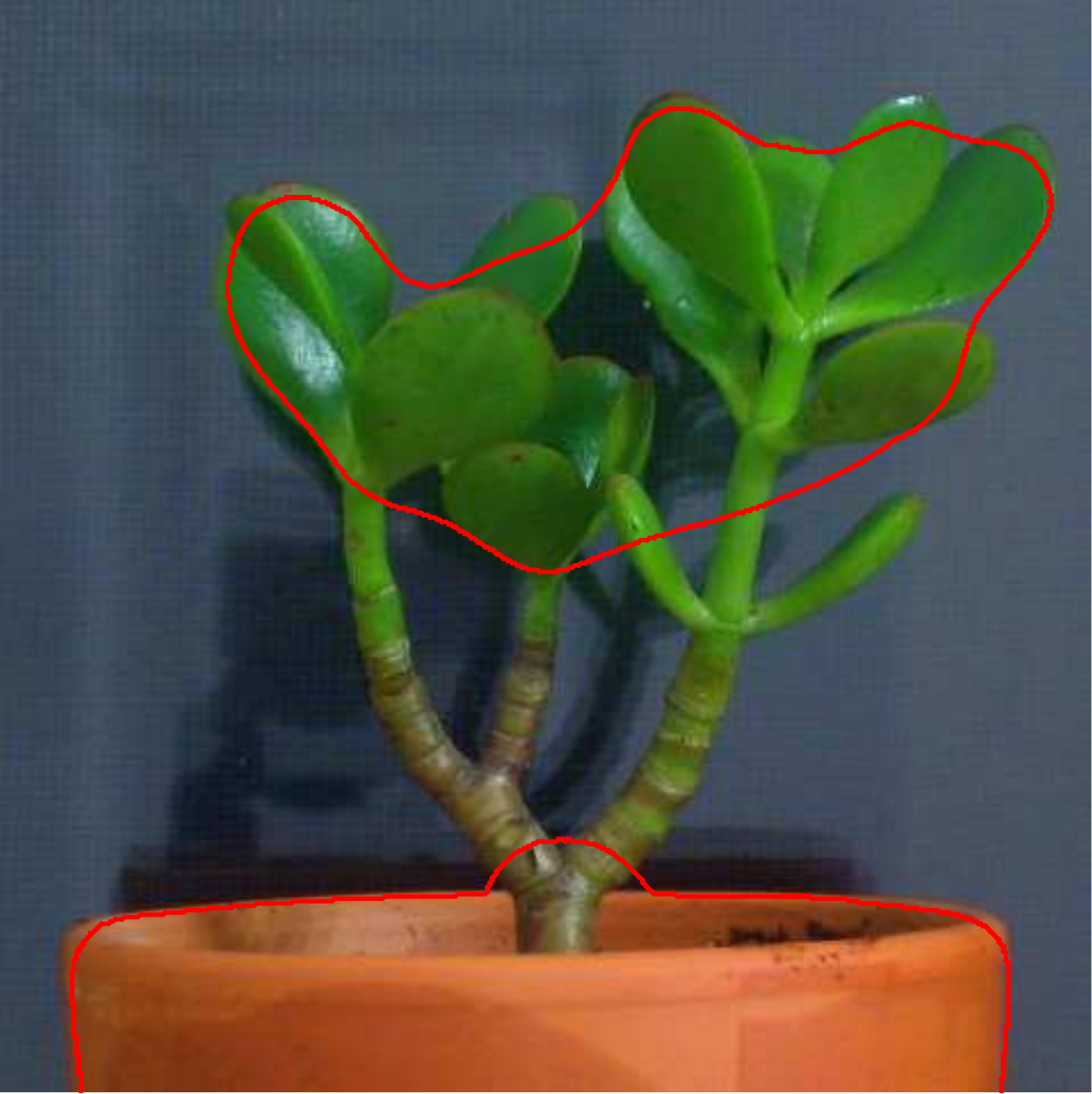}}
\,
\subfloat{\includegraphics[height=.1\columnwidth]{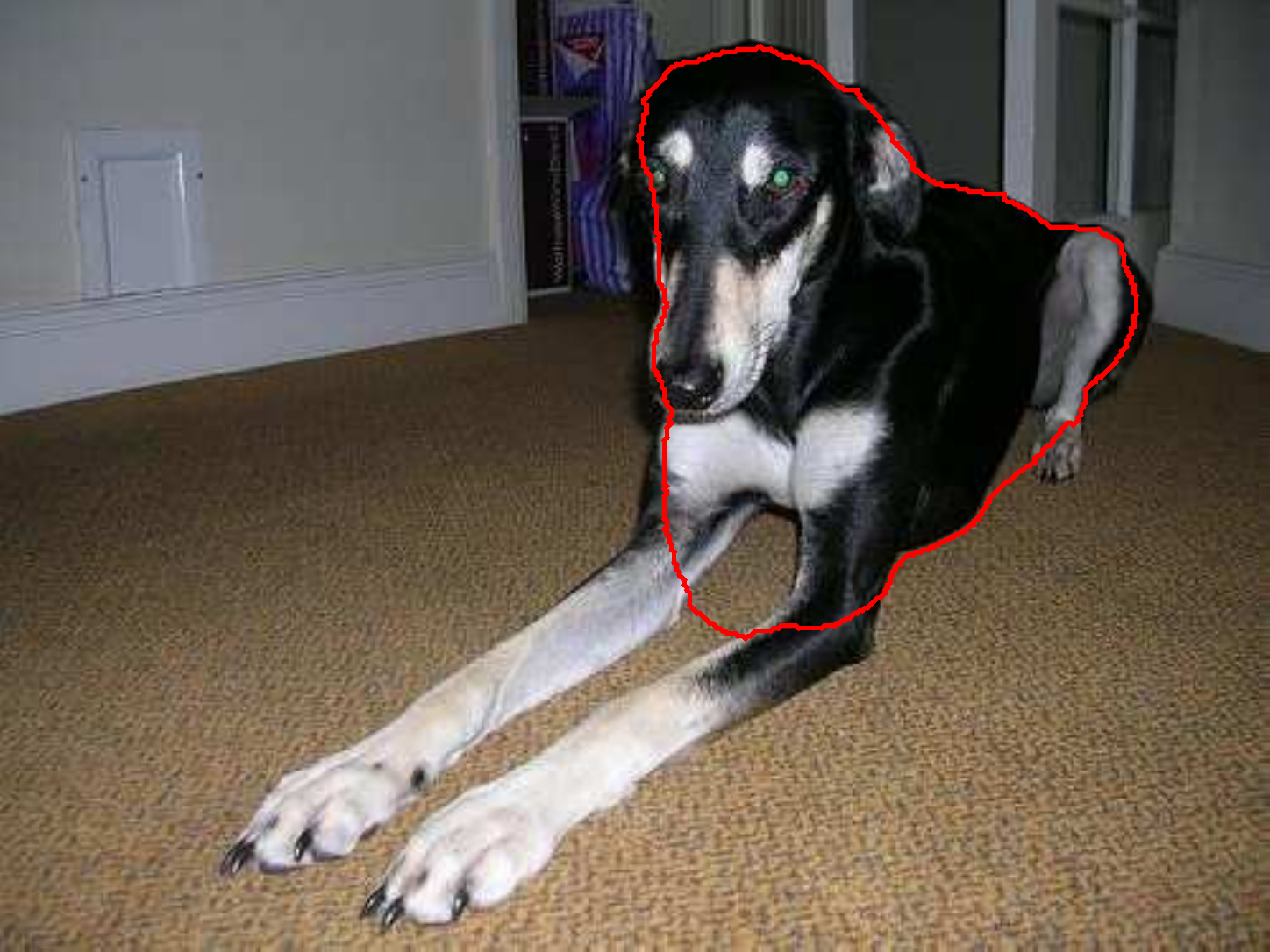}}
\hfil
\\
\subfloat{\includegraphics[height=.1\columnwidth]{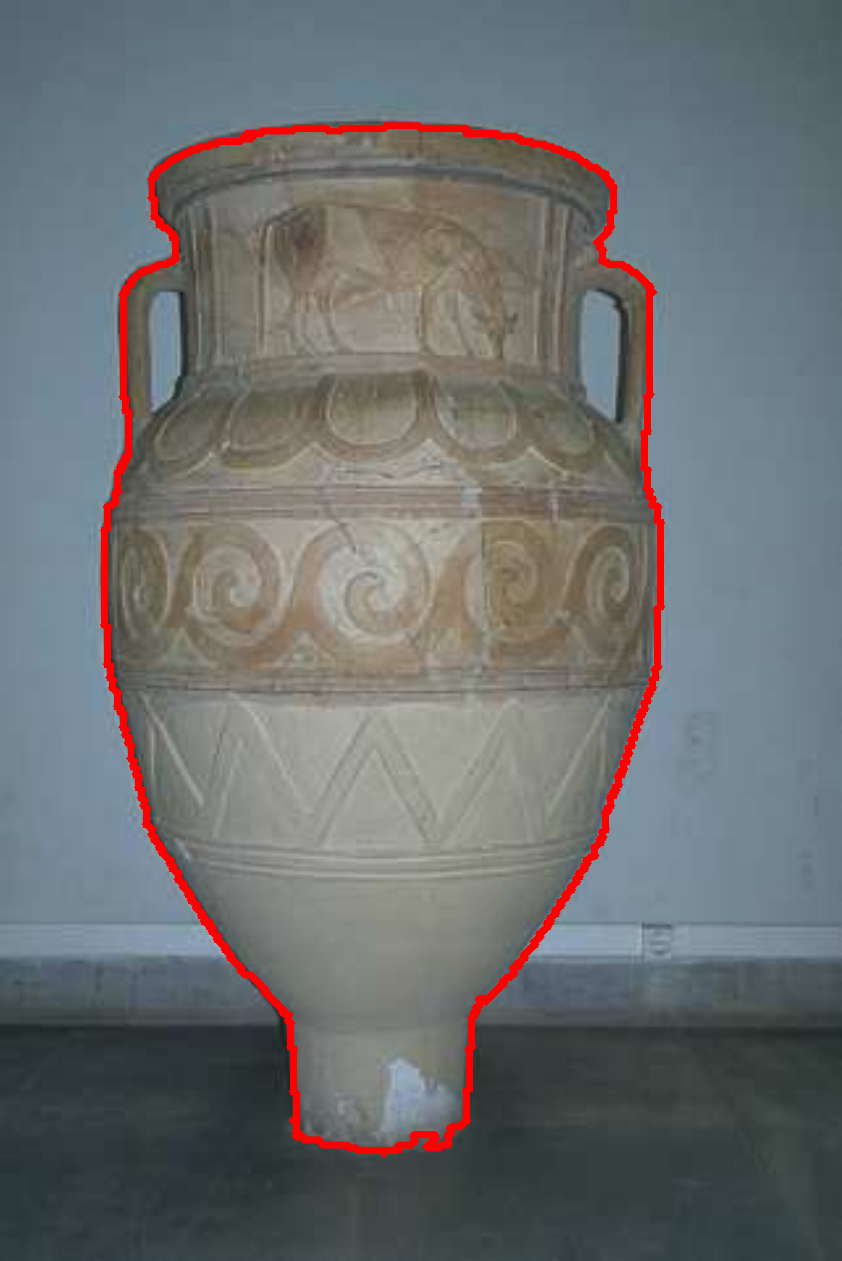}}
\,
\subfloat{\includegraphics[height=.1\columnwidth]{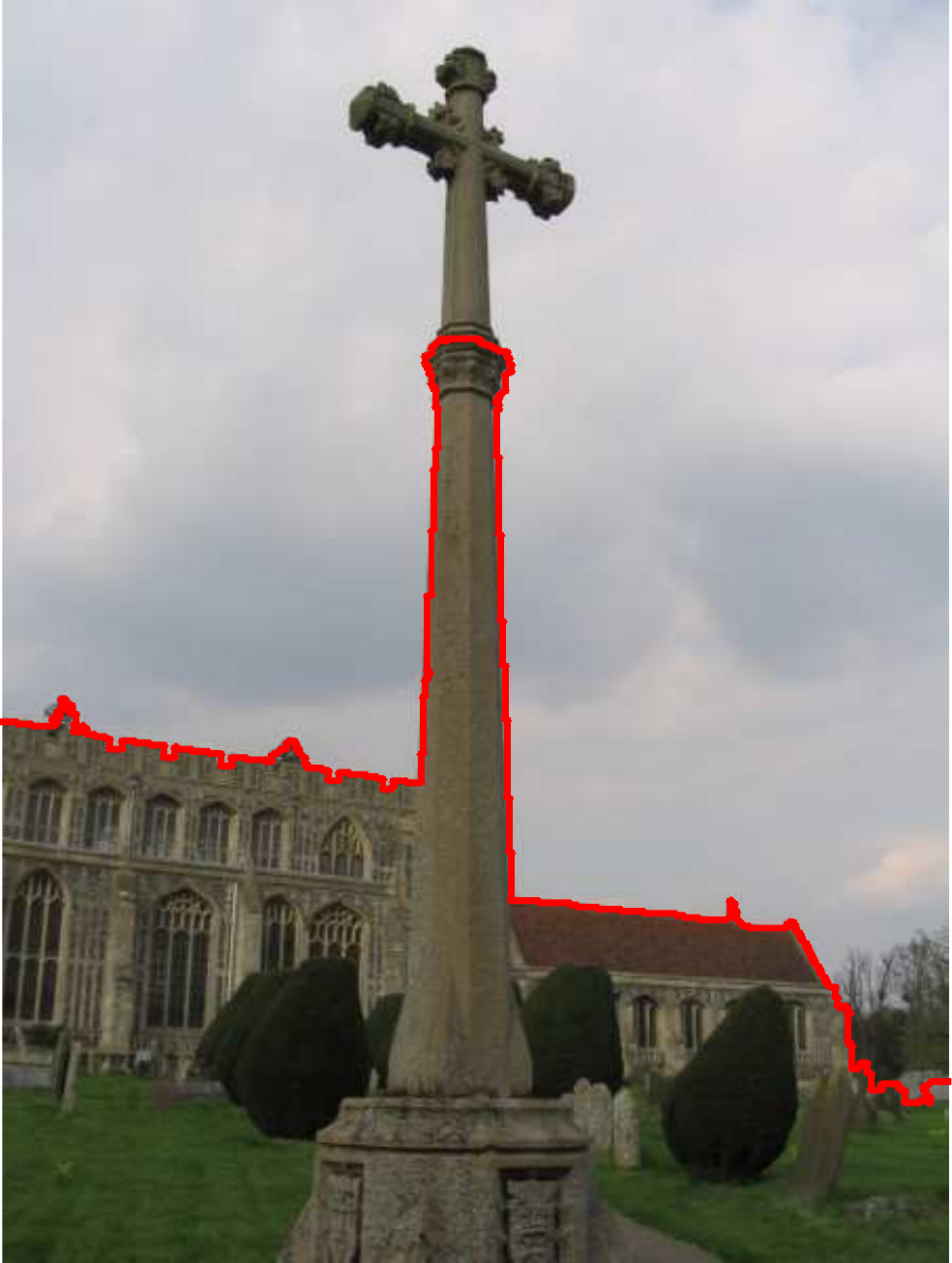}}
\,
\subfloat{\includegraphics[height=.1\columnwidth]{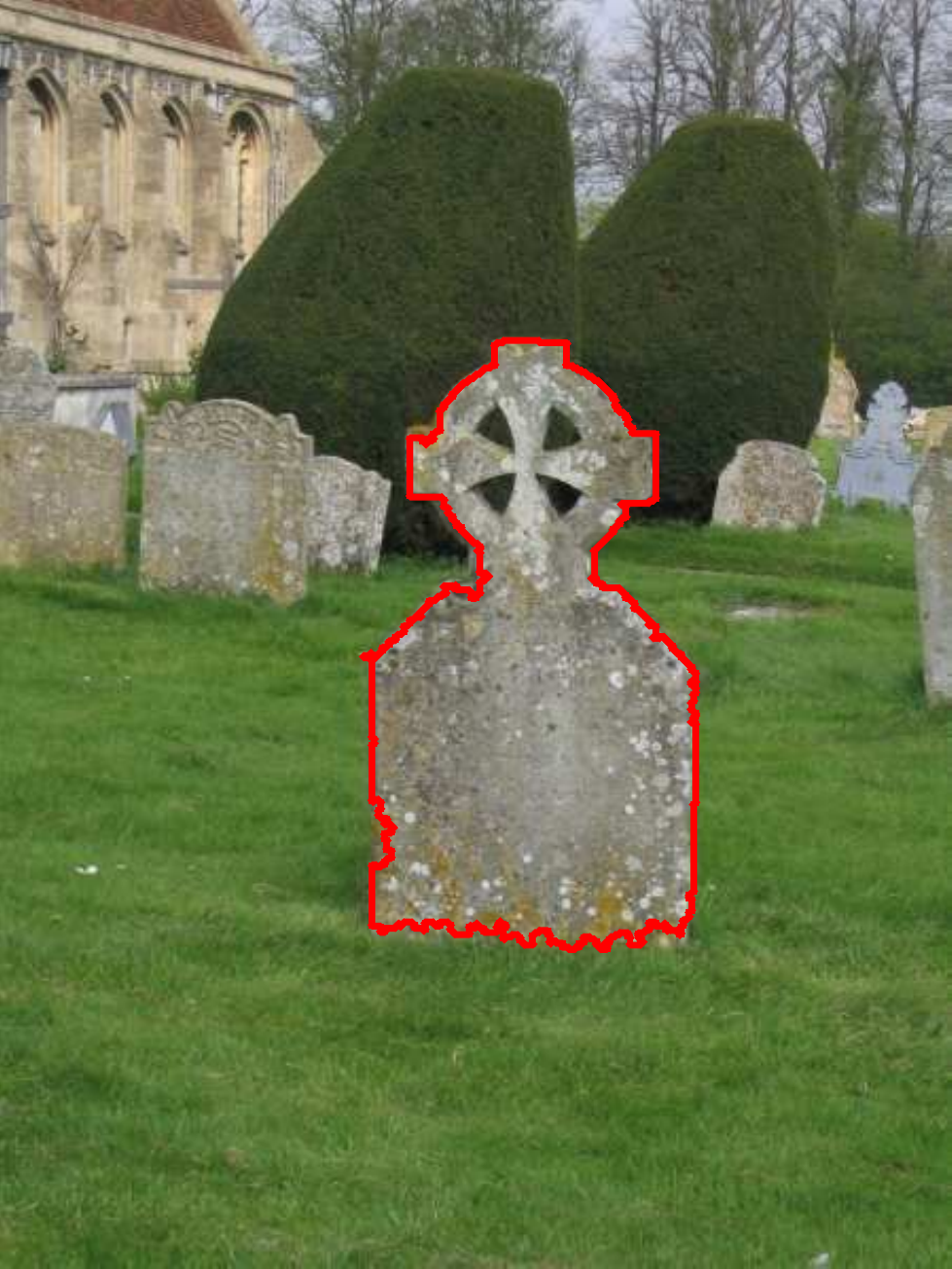}}
\,
\subfloat{\includegraphics[height=.1\columnwidth]{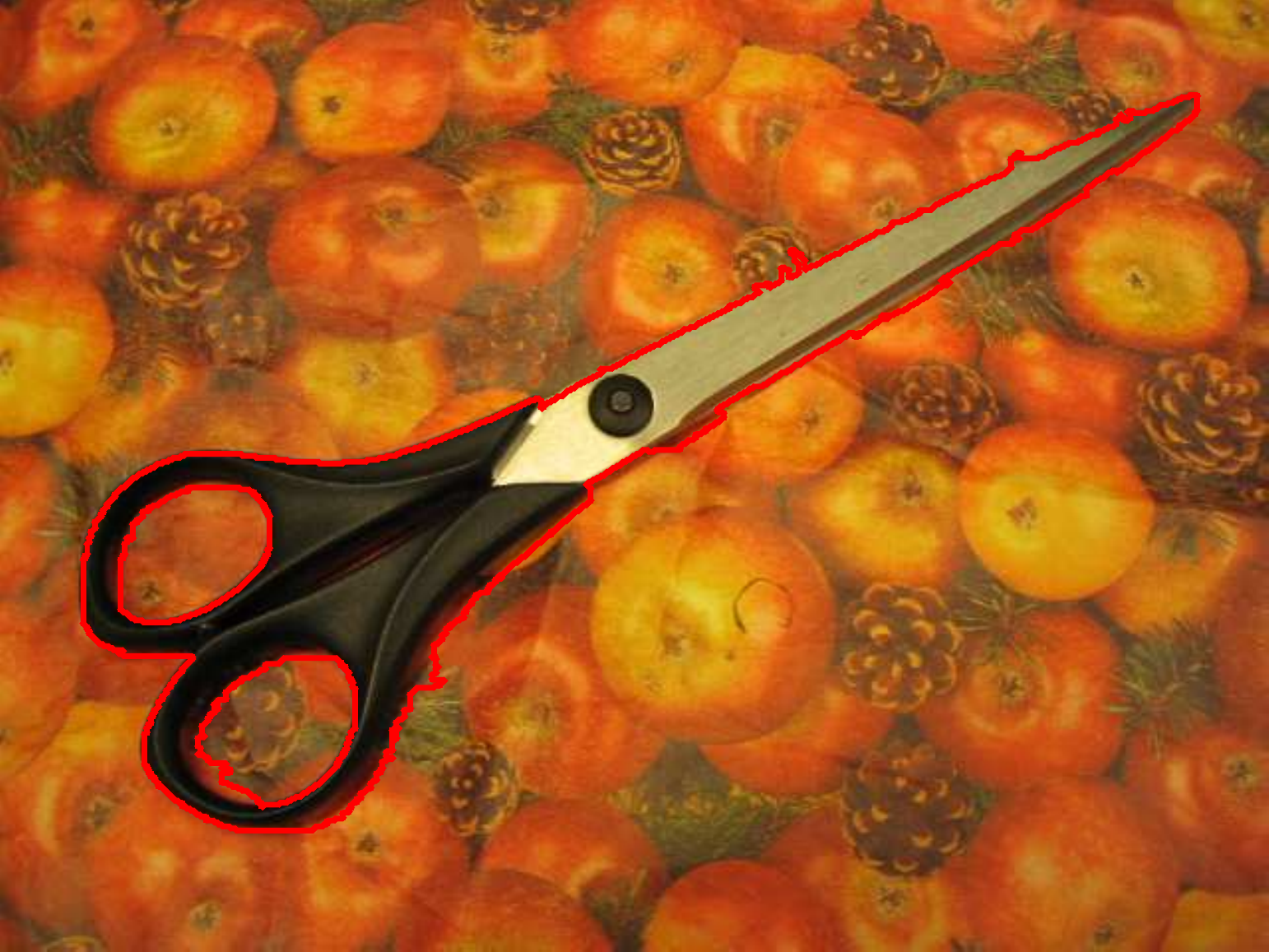}}
\,
\subfloat{\includegraphics[height=.1\columnwidth]{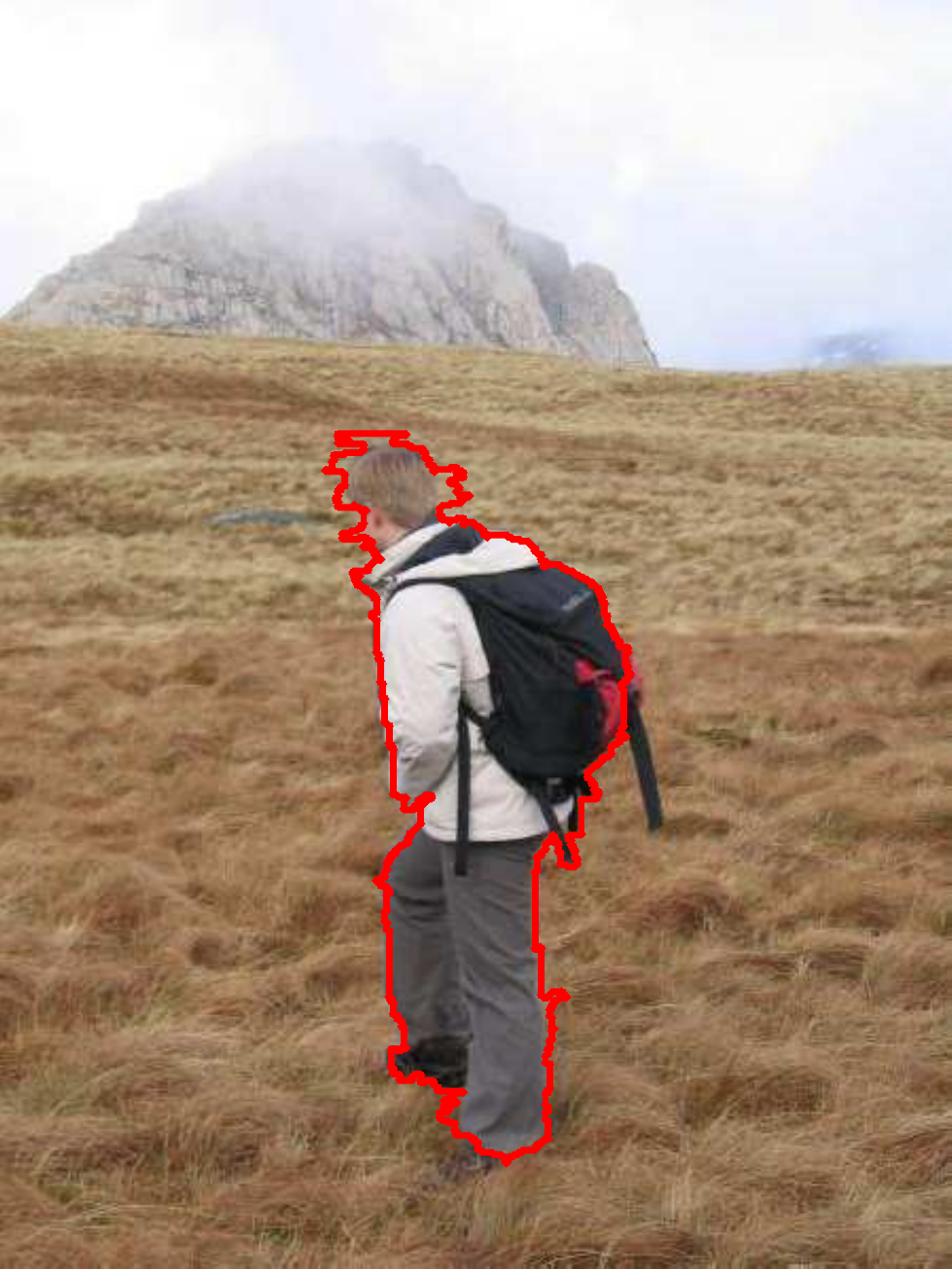}}
\,
\subfloat{\includegraphics[height=.1\columnwidth]{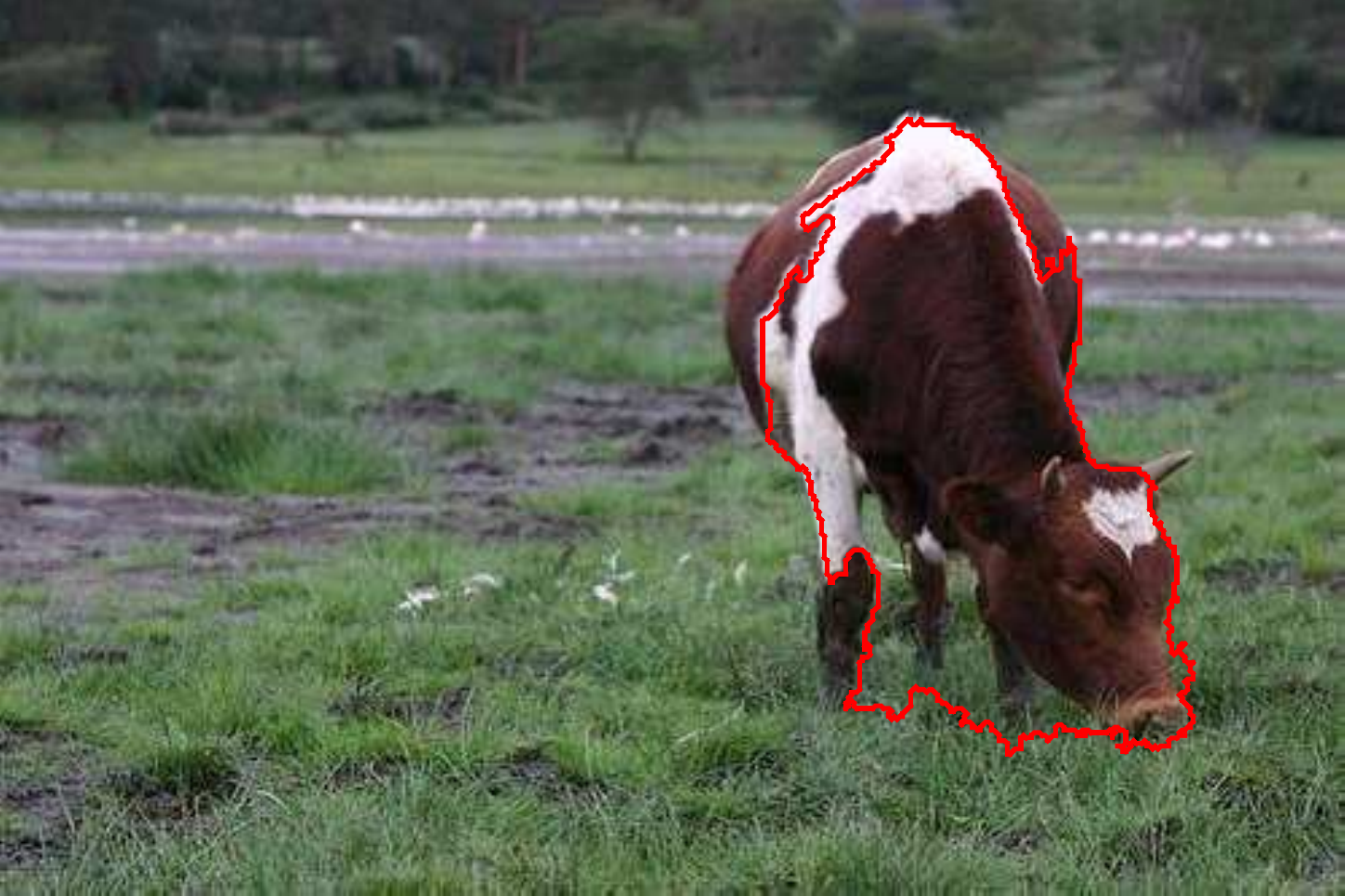}}
\,
\subfloat{\includegraphics[height=.1\columnwidth]{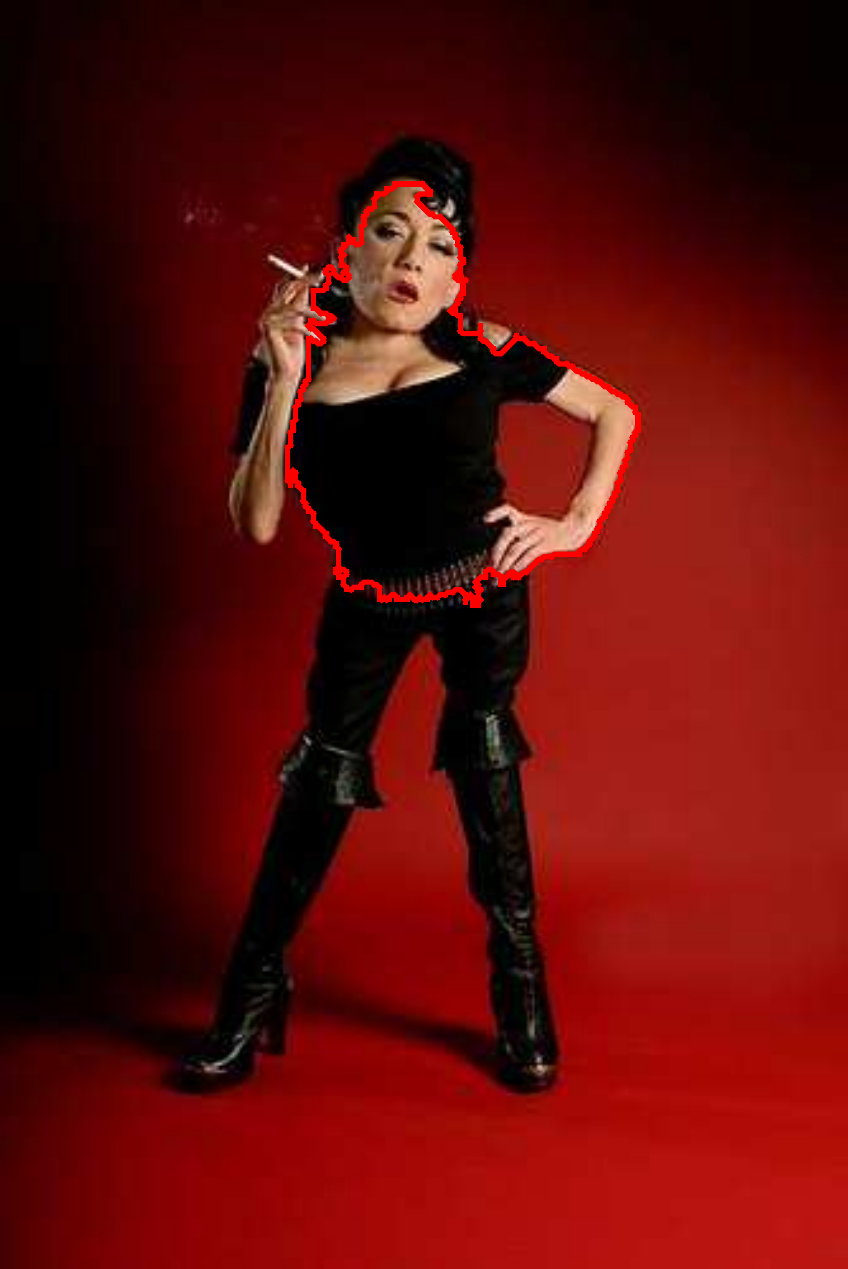}}
\,
\subfloat{\includegraphics[height=.1\columnwidth]{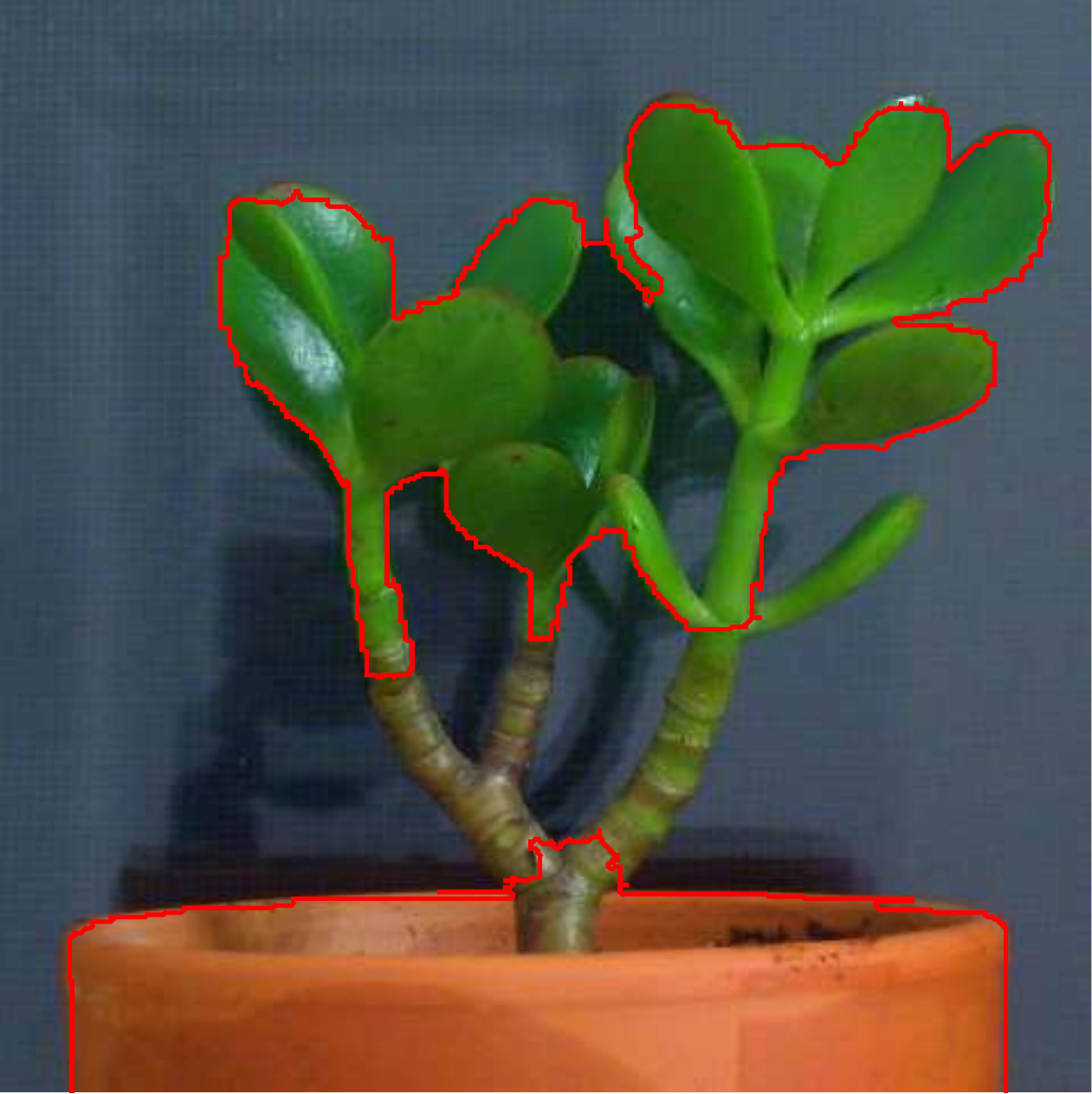}}
\,
\subfloat{\includegraphics[height=.1\columnwidth]{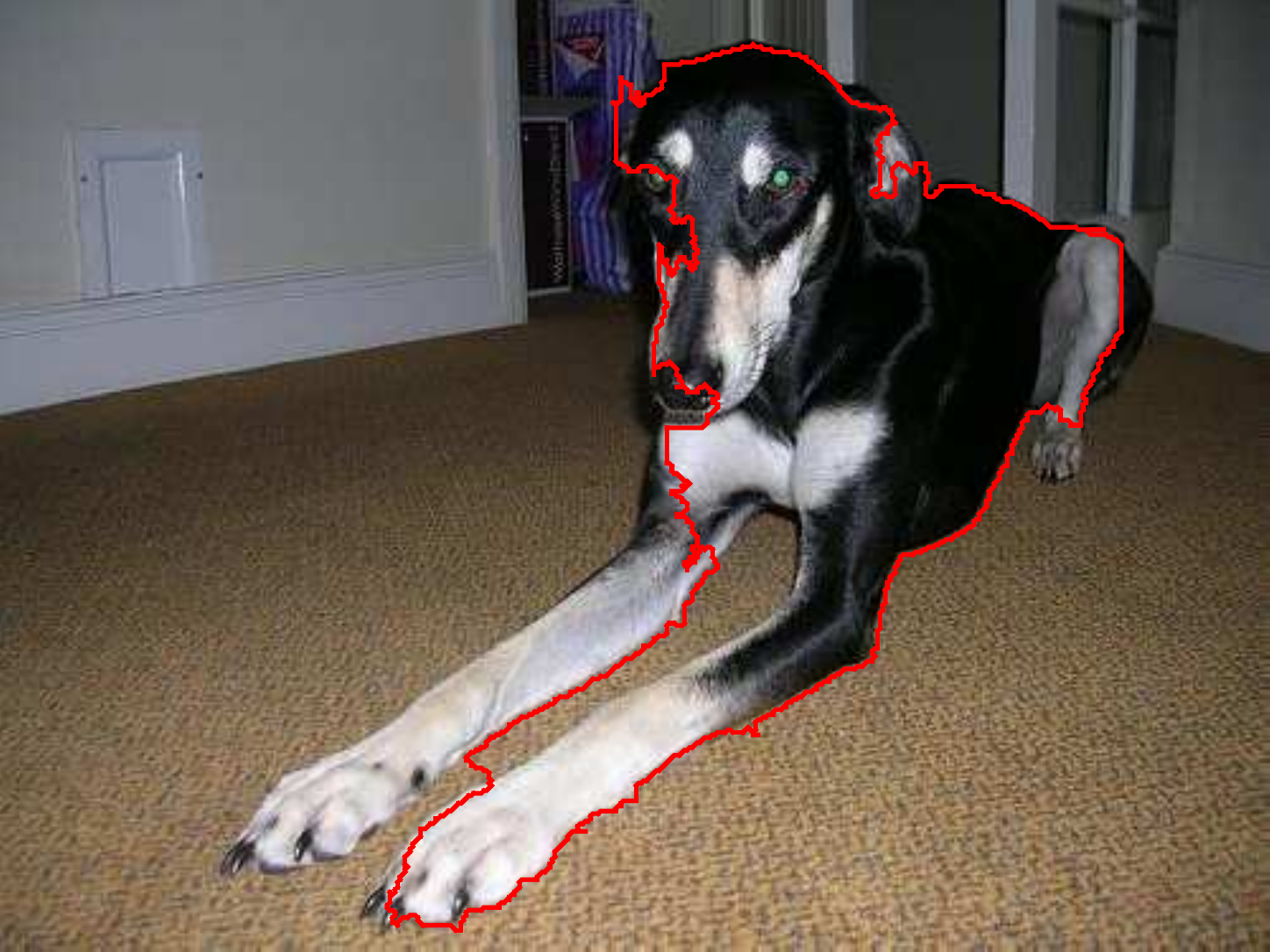}}
\hfil
\\

\subfloat{\includegraphics[height=.1\columnwidth]{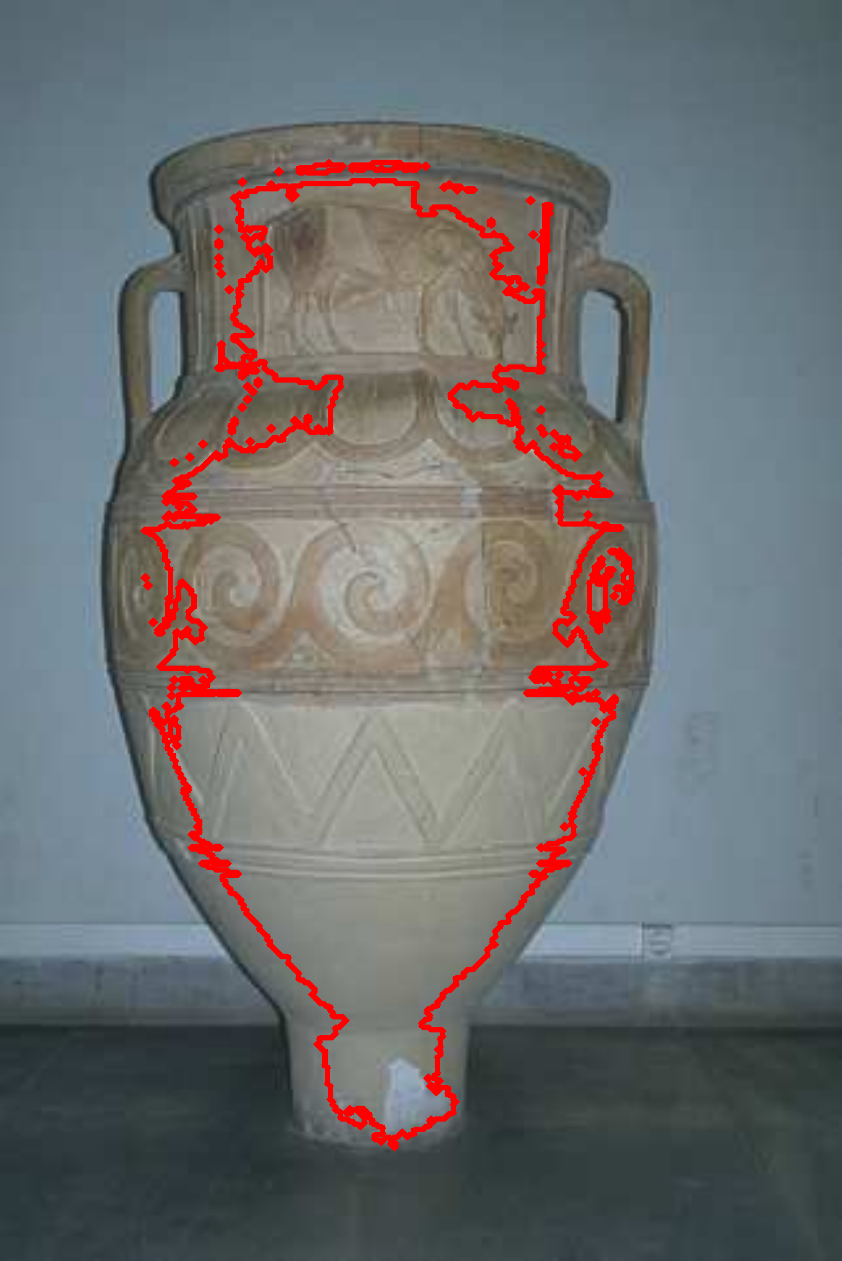}}
\,
\subfloat{\includegraphics[height=.1\columnwidth]{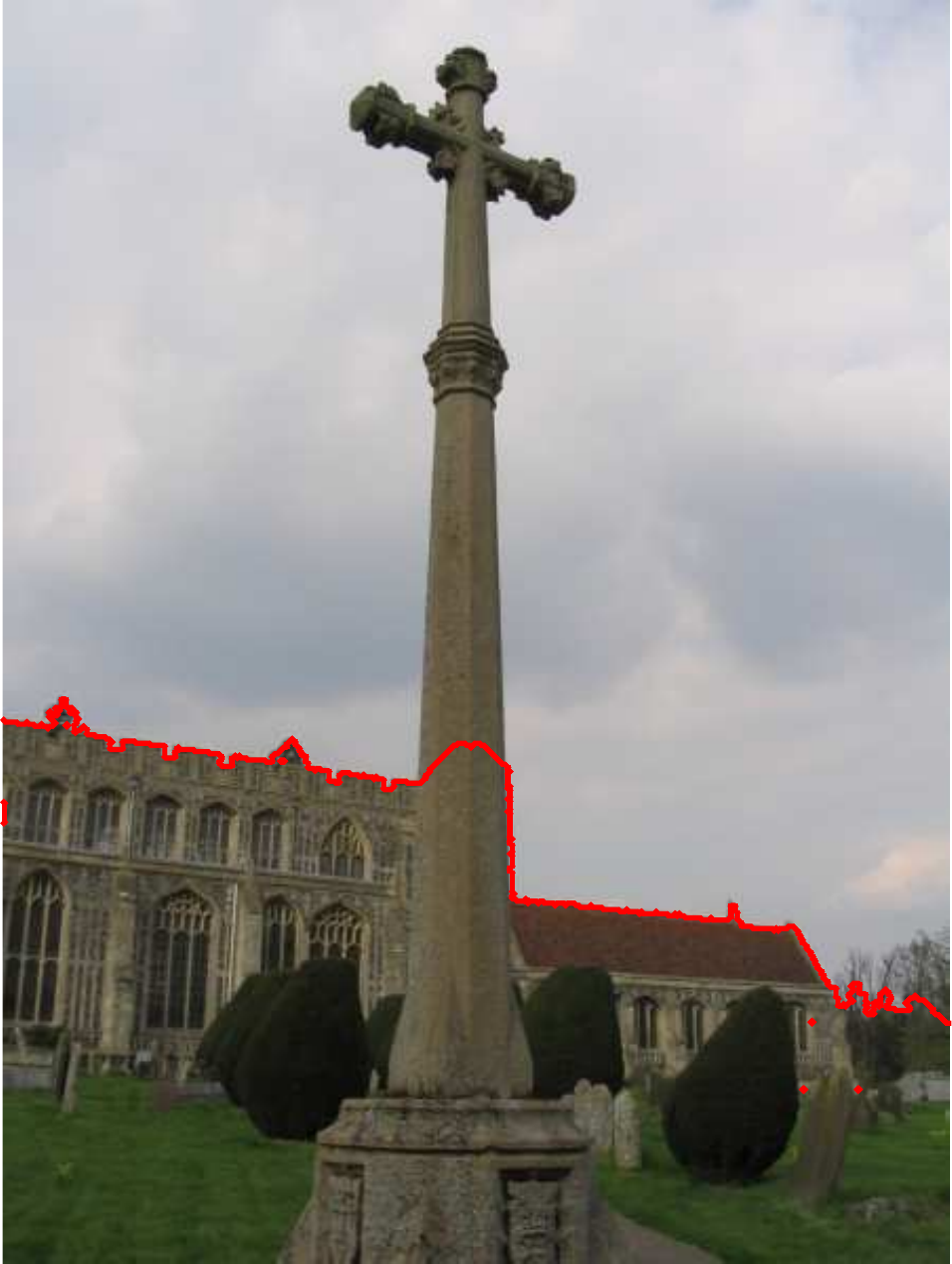}}
\,
\subfloat{\includegraphics[height=.1\columnwidth]{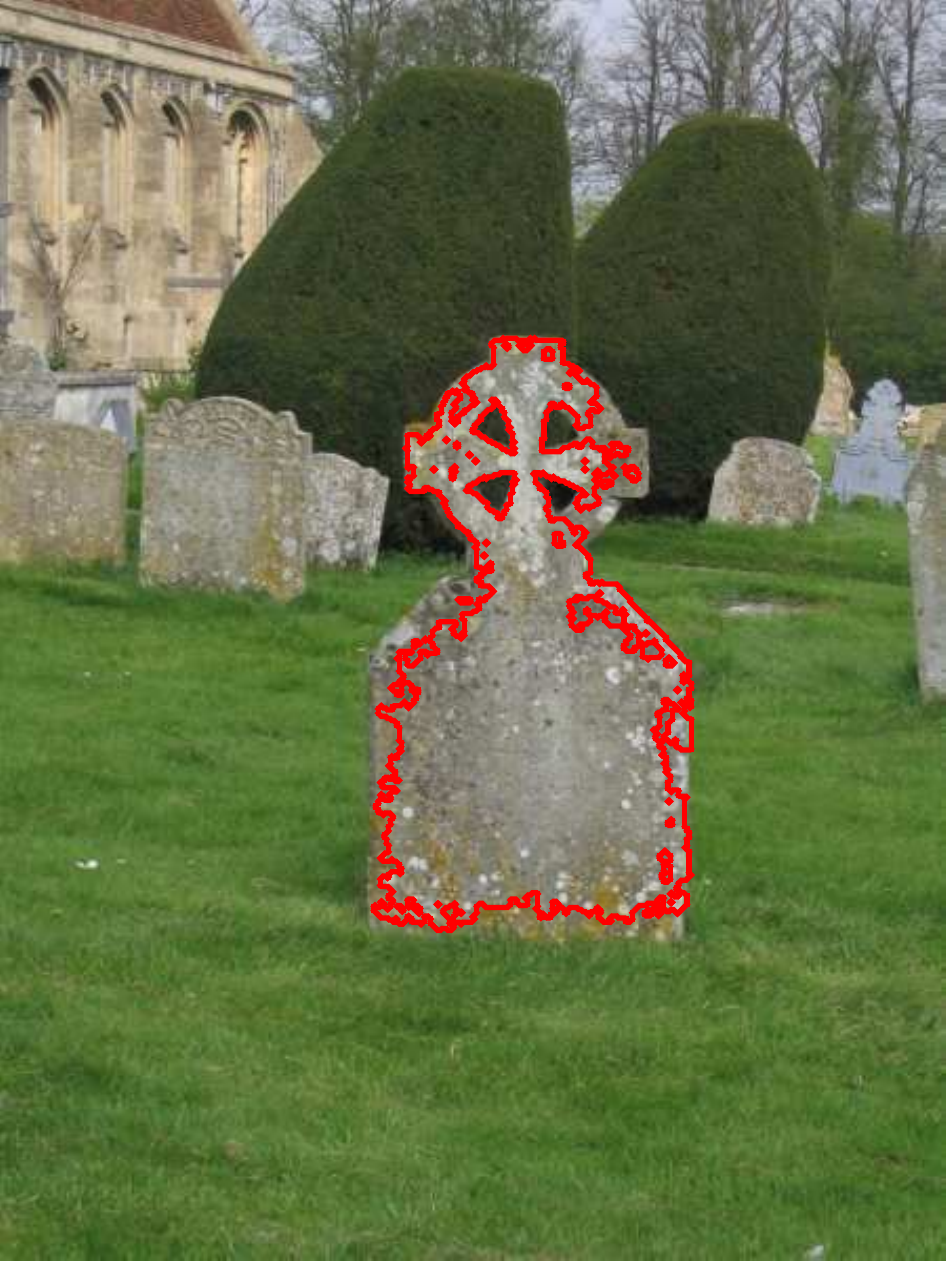}}
\,
\subfloat{\includegraphics[height=.1\columnwidth]{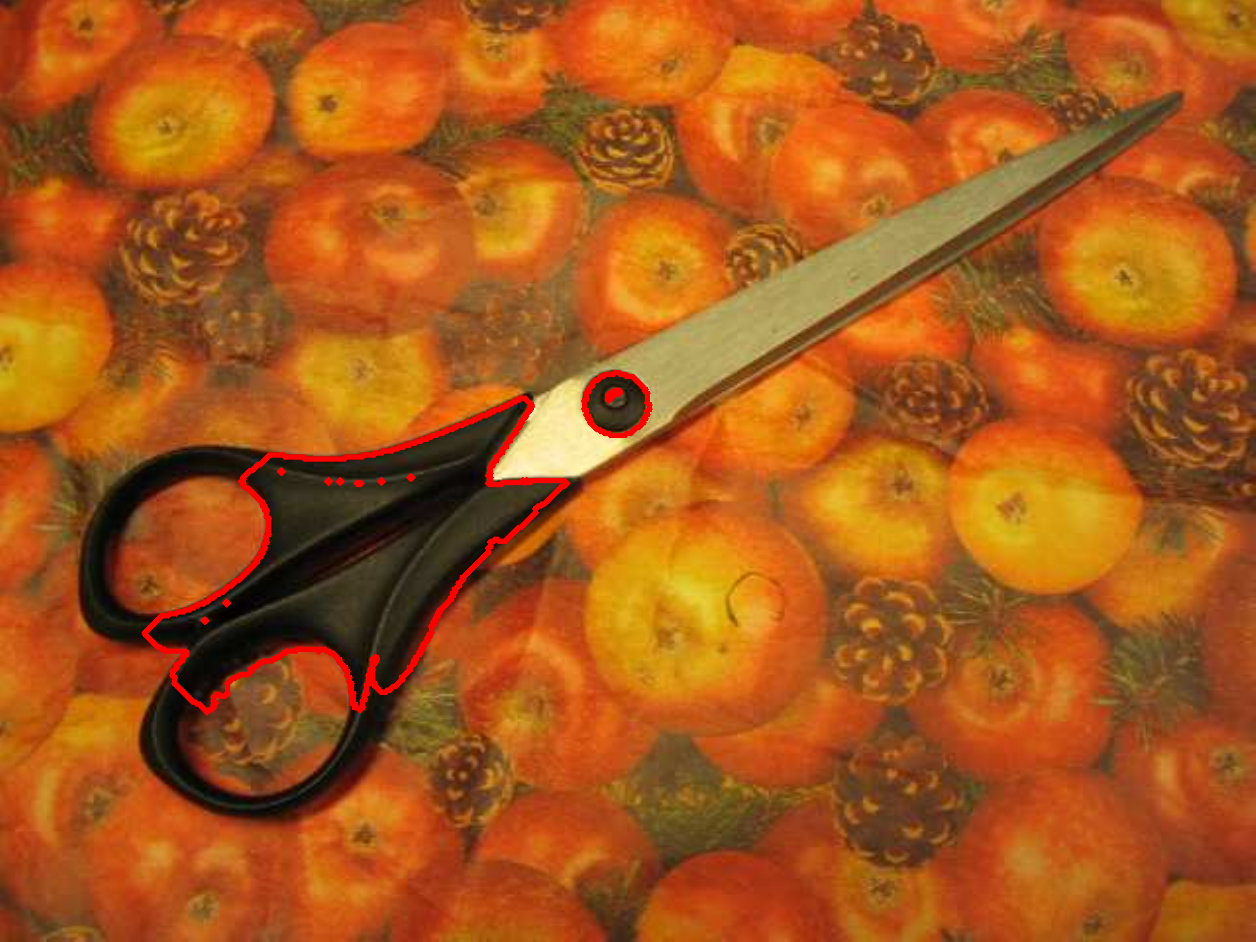}}
\,
\subfloat{\includegraphics[height=.1\columnwidth]{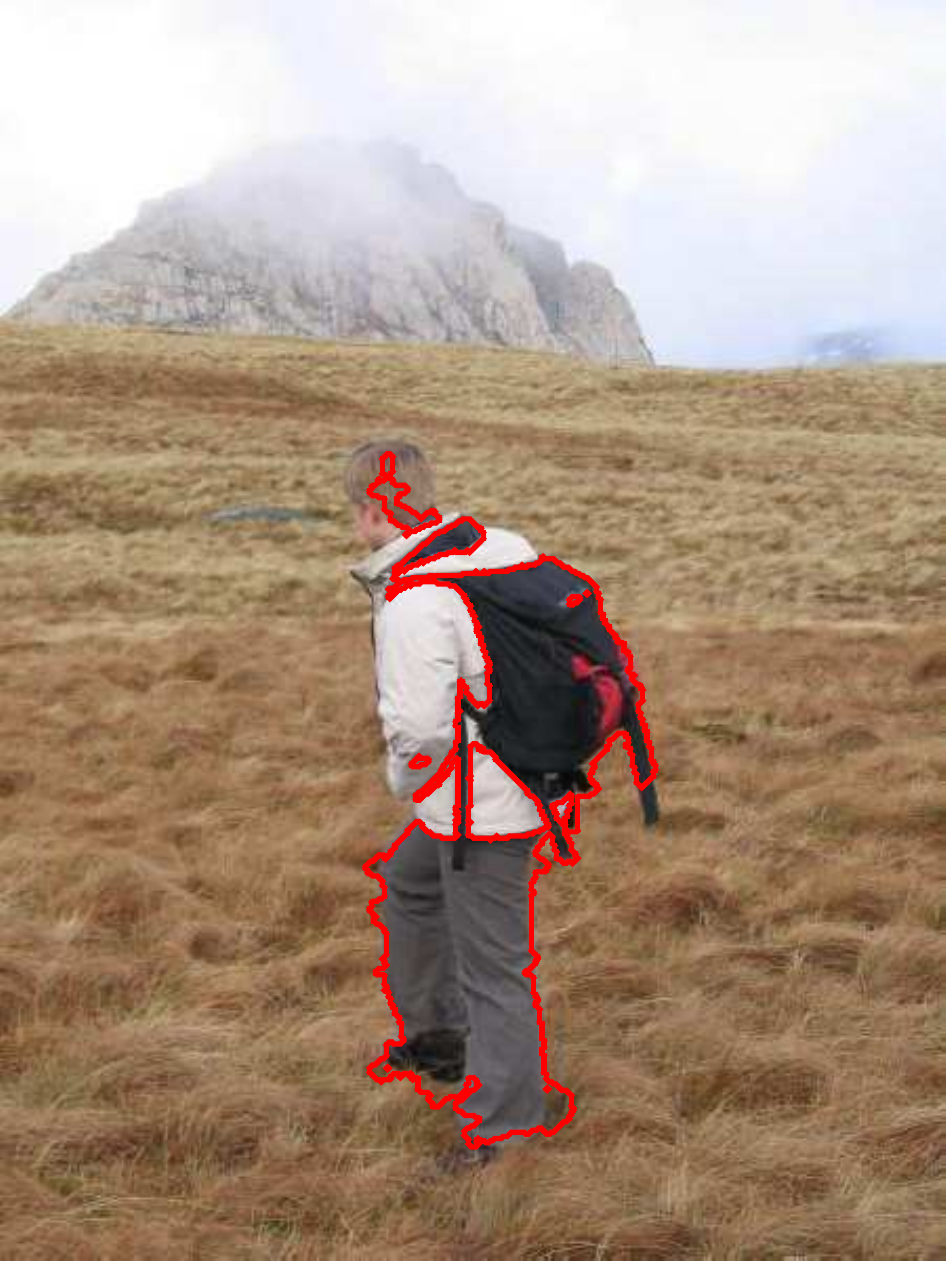}}
\,
\subfloat{\includegraphics[height=.1\columnwidth]{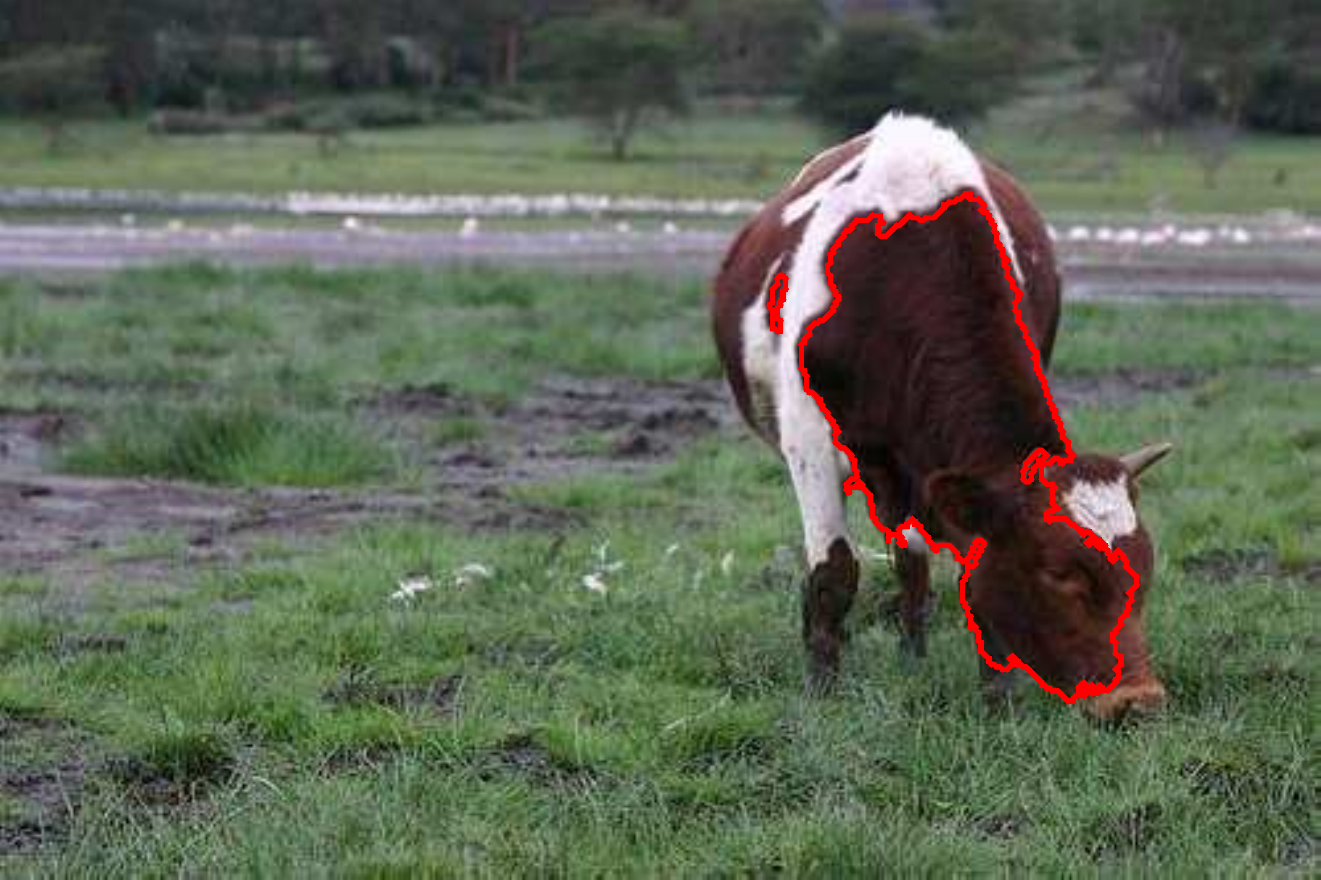}}
\,
\subfloat{\includegraphics[height=.1\columnwidth]{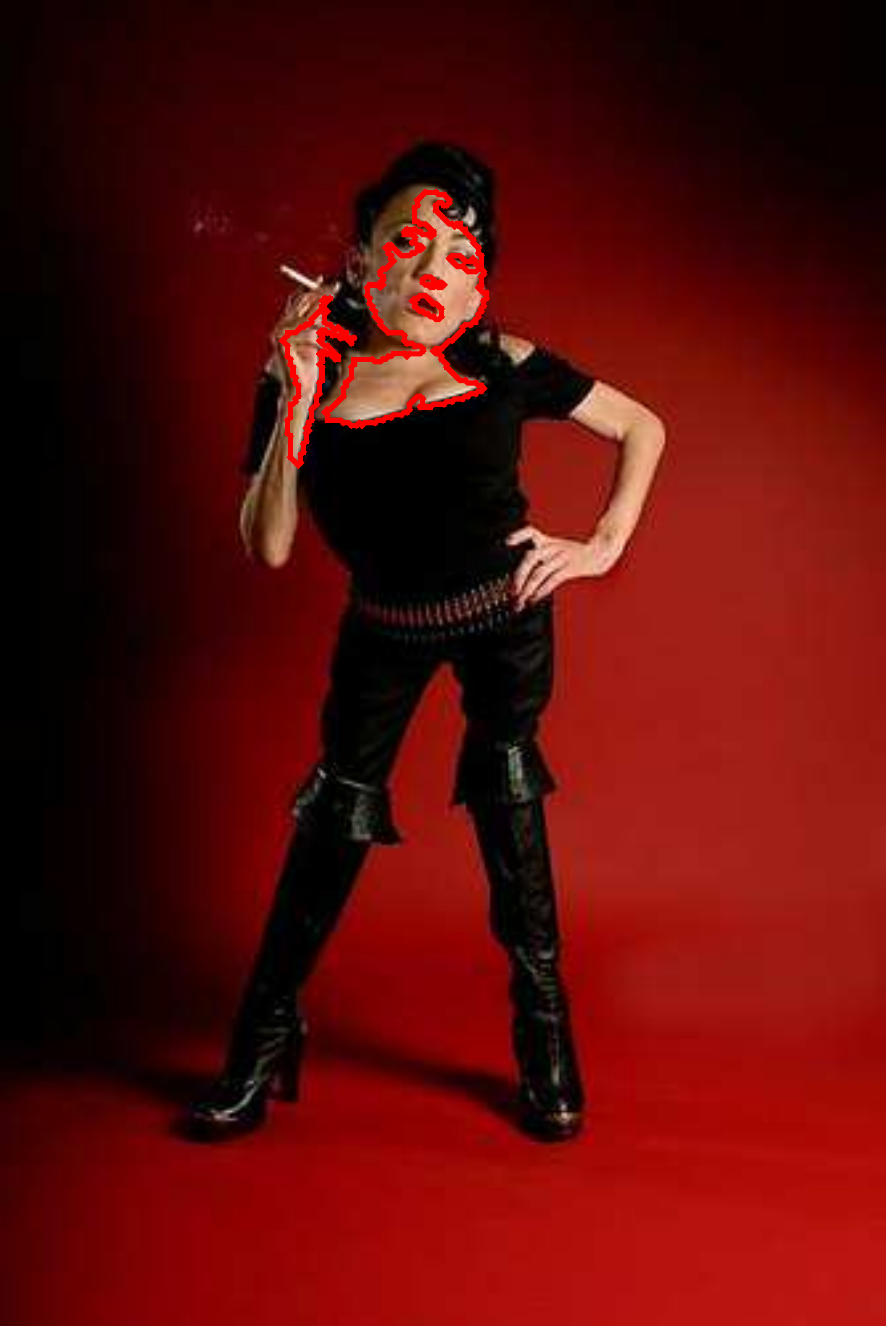}}
\,
\subfloat{\includegraphics[height=.1\columnwidth]{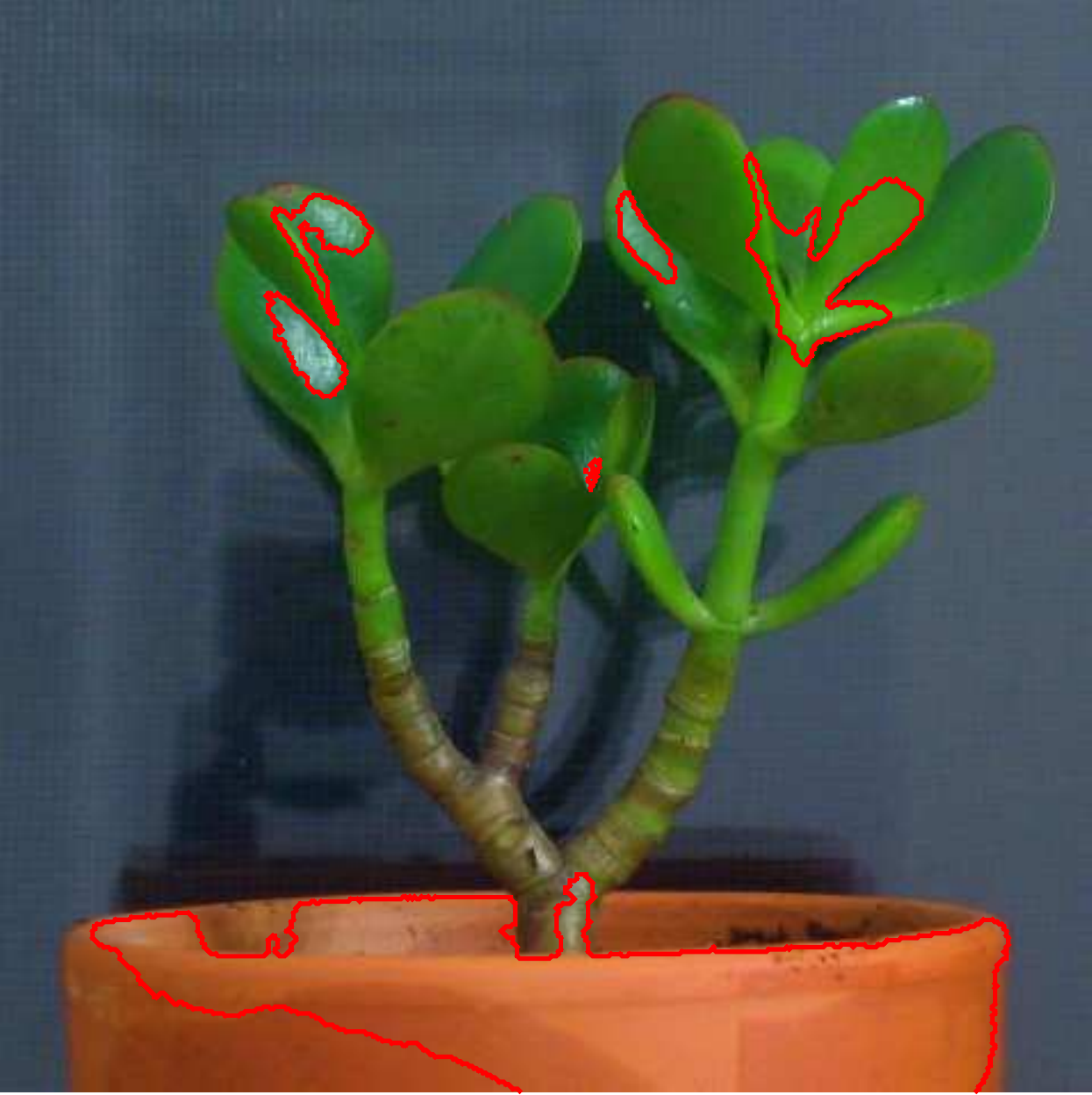}}
\,
\subfloat{\includegraphics[height=.1\columnwidth]{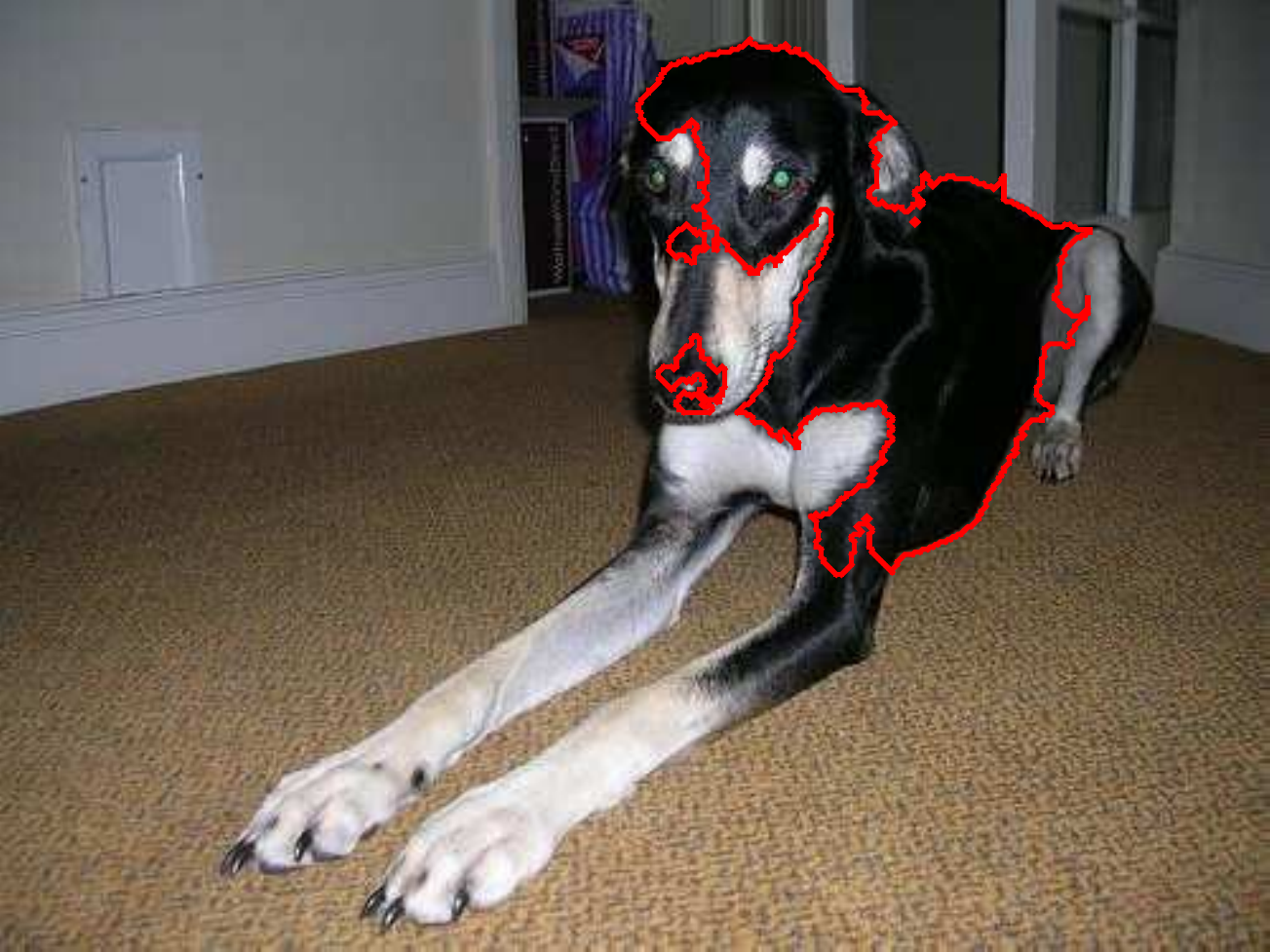}}
\hfil
\\
\subfloat{\includegraphics[height=.1\columnwidth]{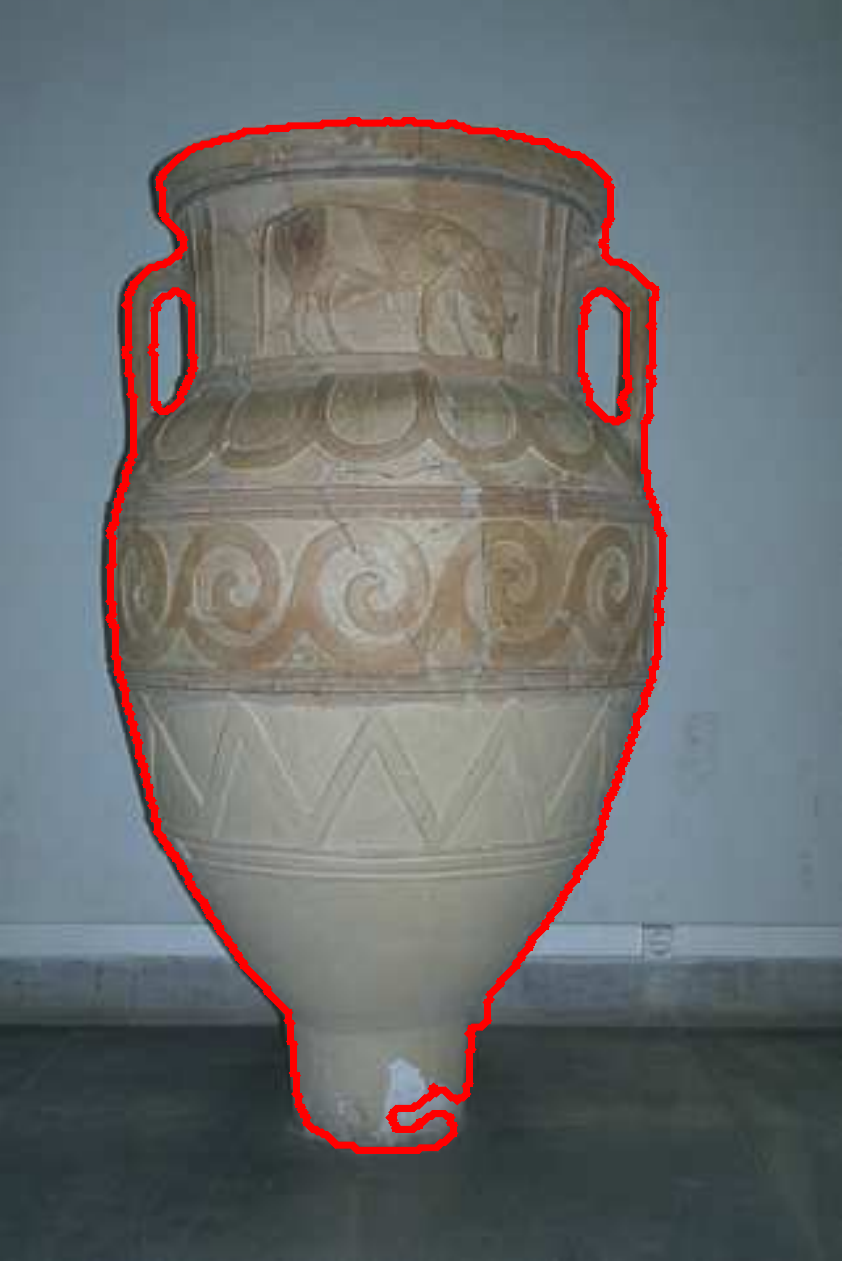}}
\,
\subfloat{\includegraphics[height=.1\columnwidth]{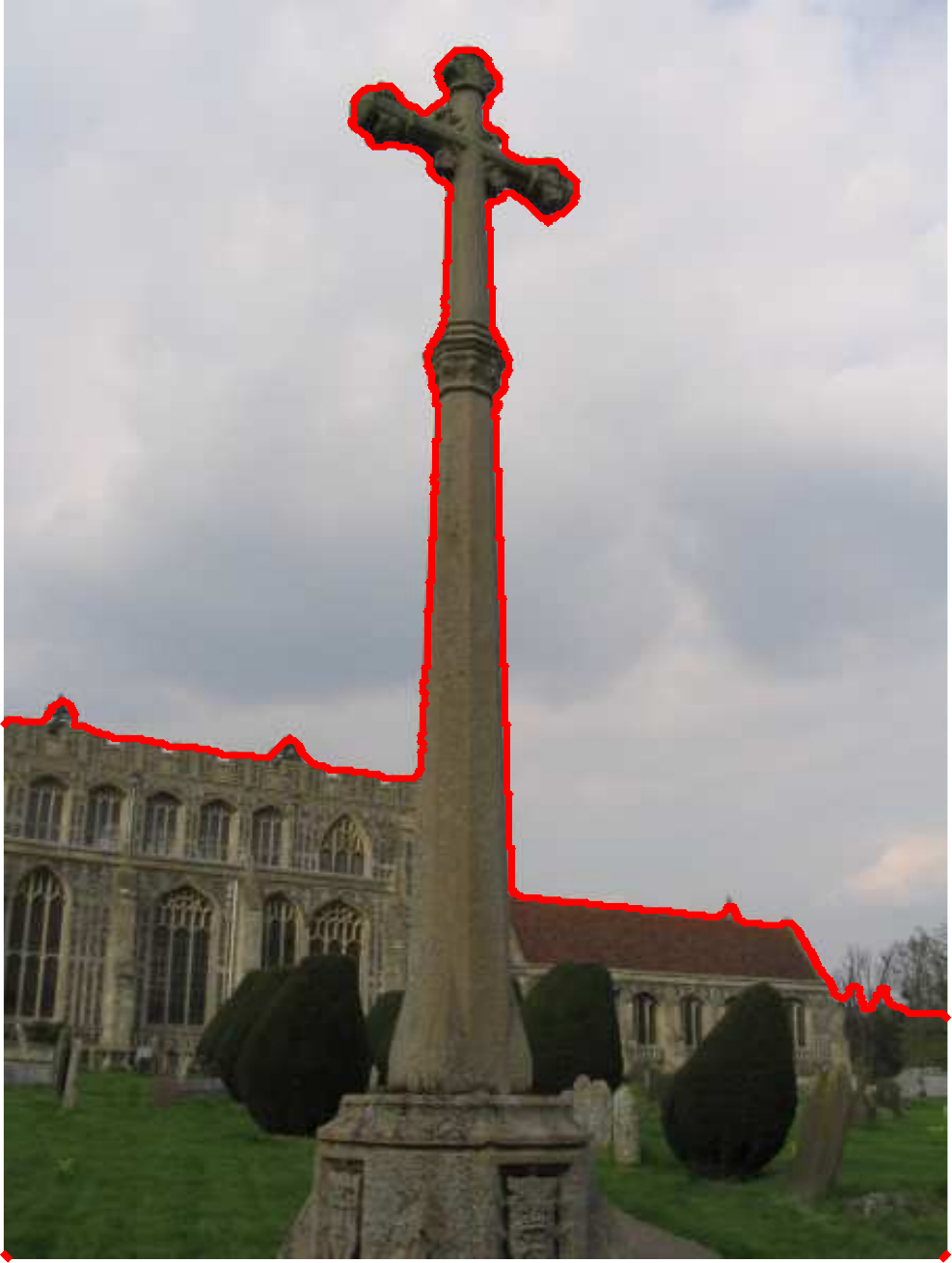}}
\,
\subfloat{\includegraphics[height=.1\columnwidth]{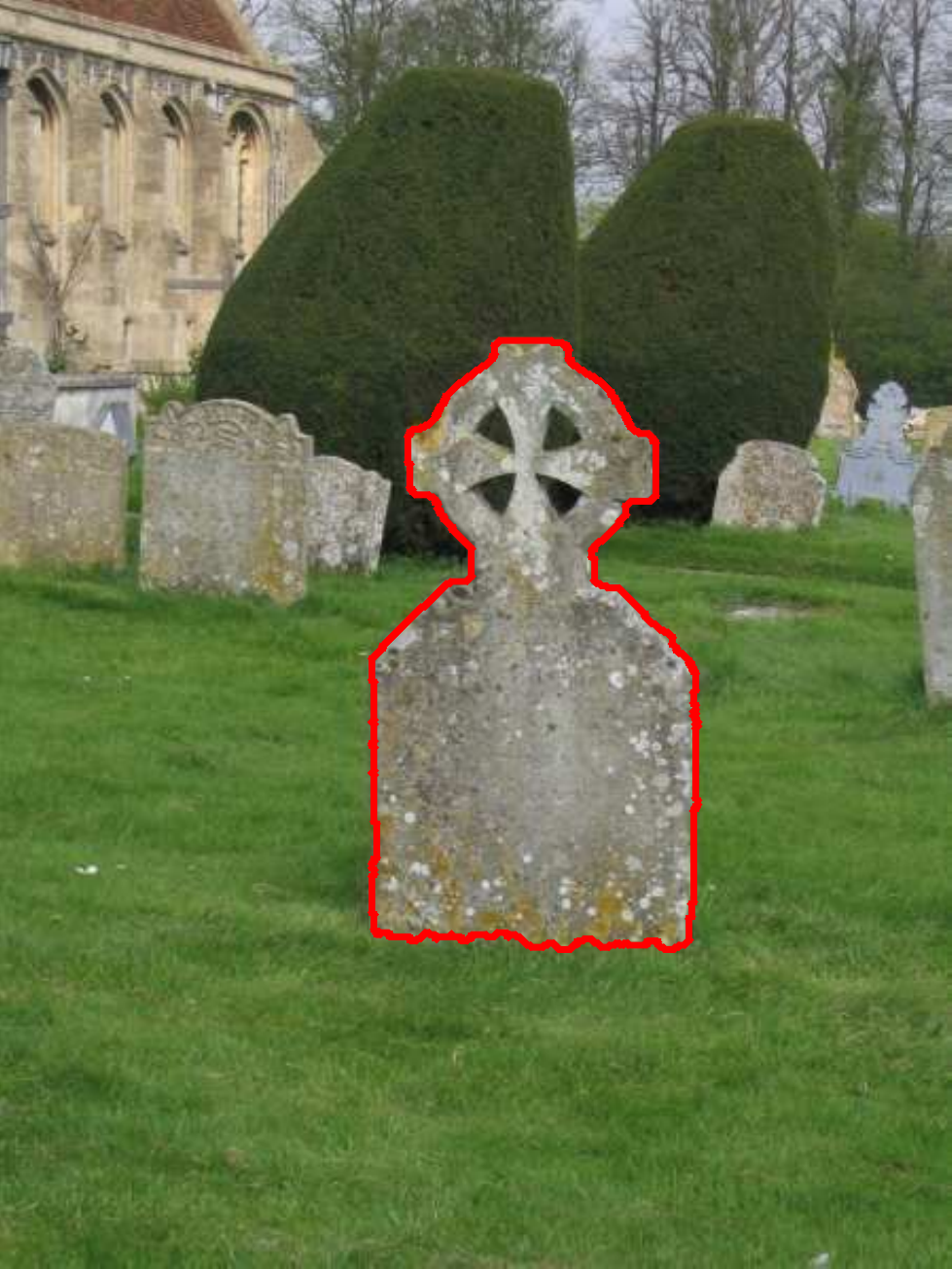}}
\,
\subfloat{\includegraphics[height=.1\columnwidth]{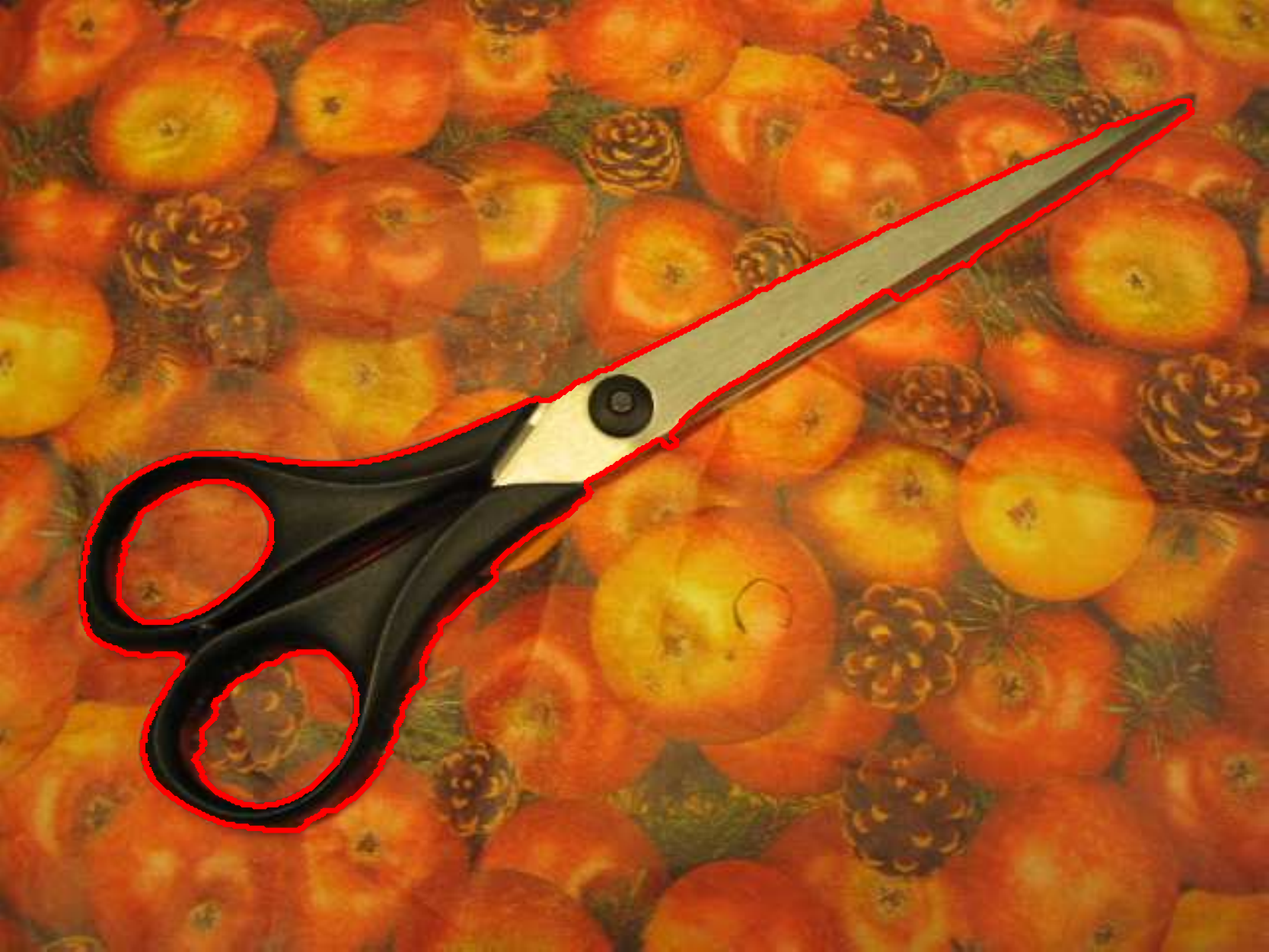}}
\,
\subfloat{\includegraphics[height=.1\columnwidth]{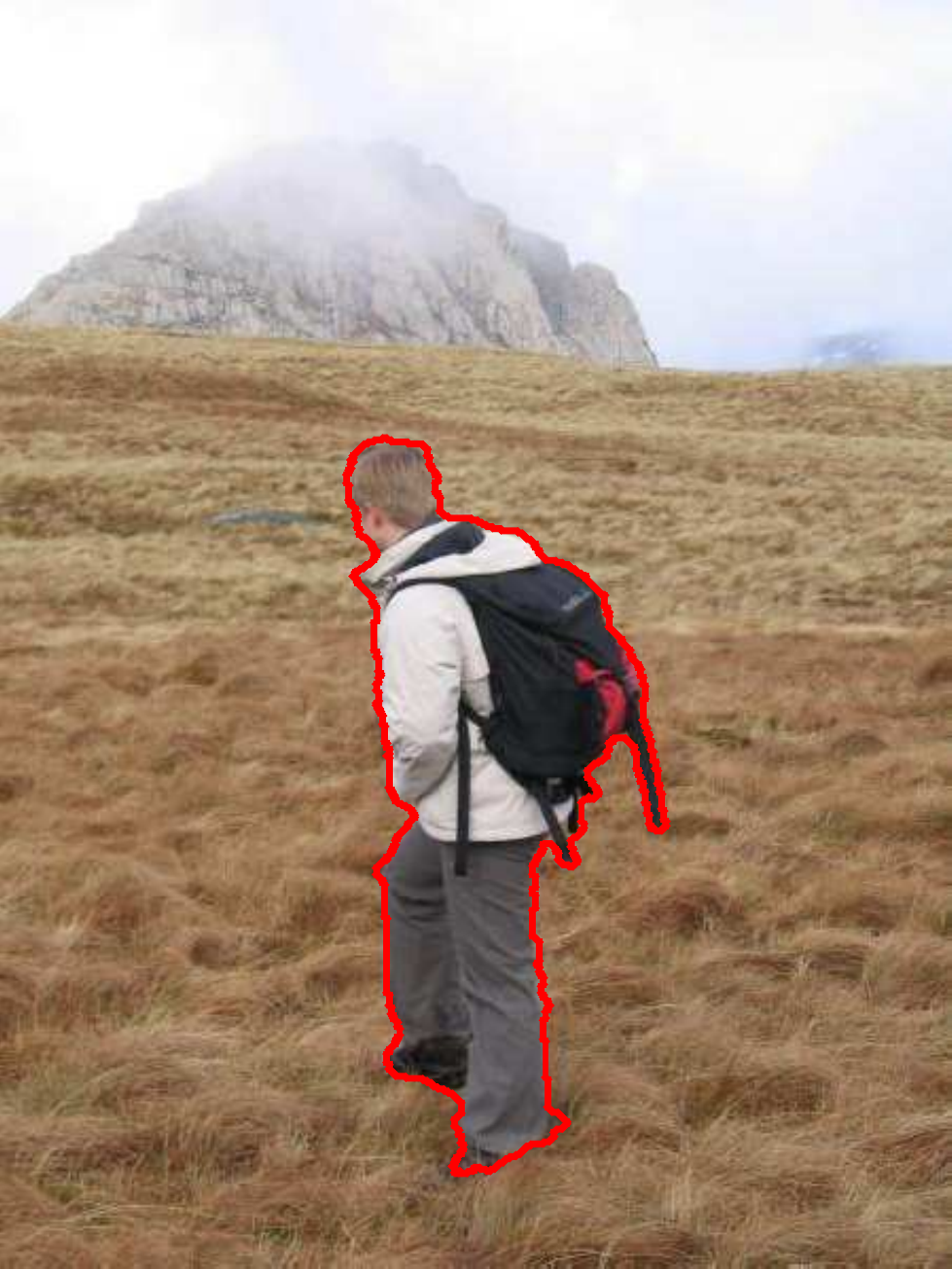}}
\,
\subfloat{\includegraphics[height=.1\columnwidth]{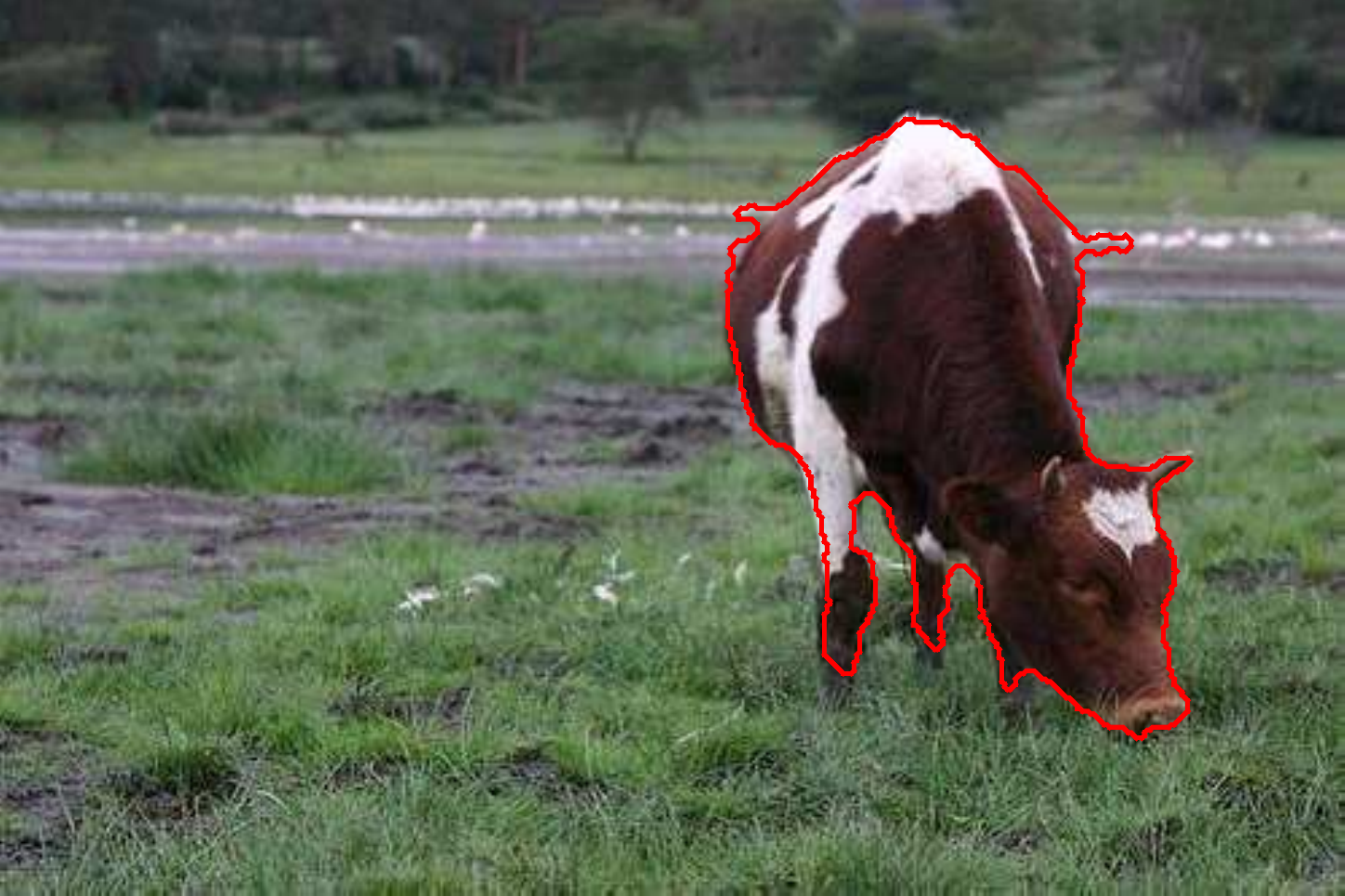}}
\,
\subfloat{\includegraphics[height=.1\columnwidth]{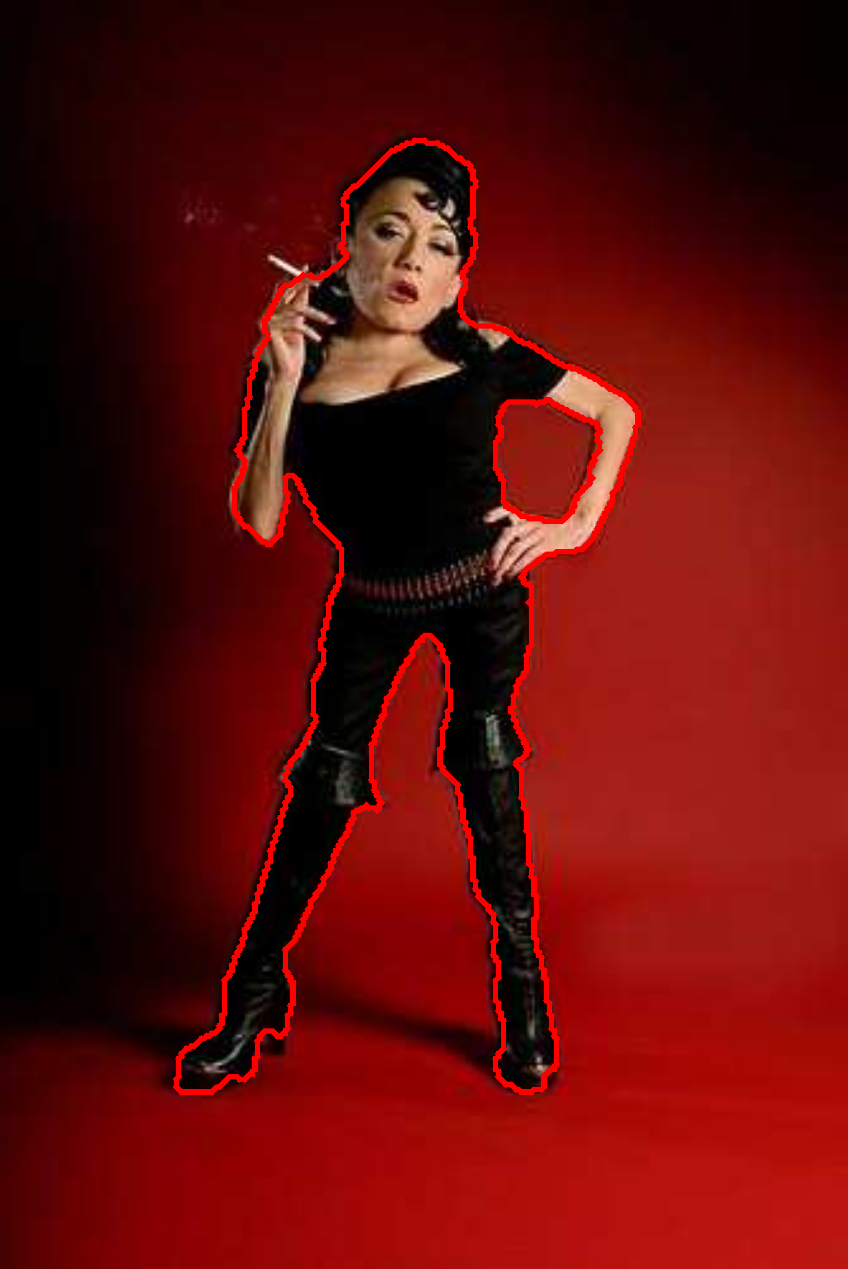}}
\,
\subfloat{\includegraphics[height=.1\columnwidth]{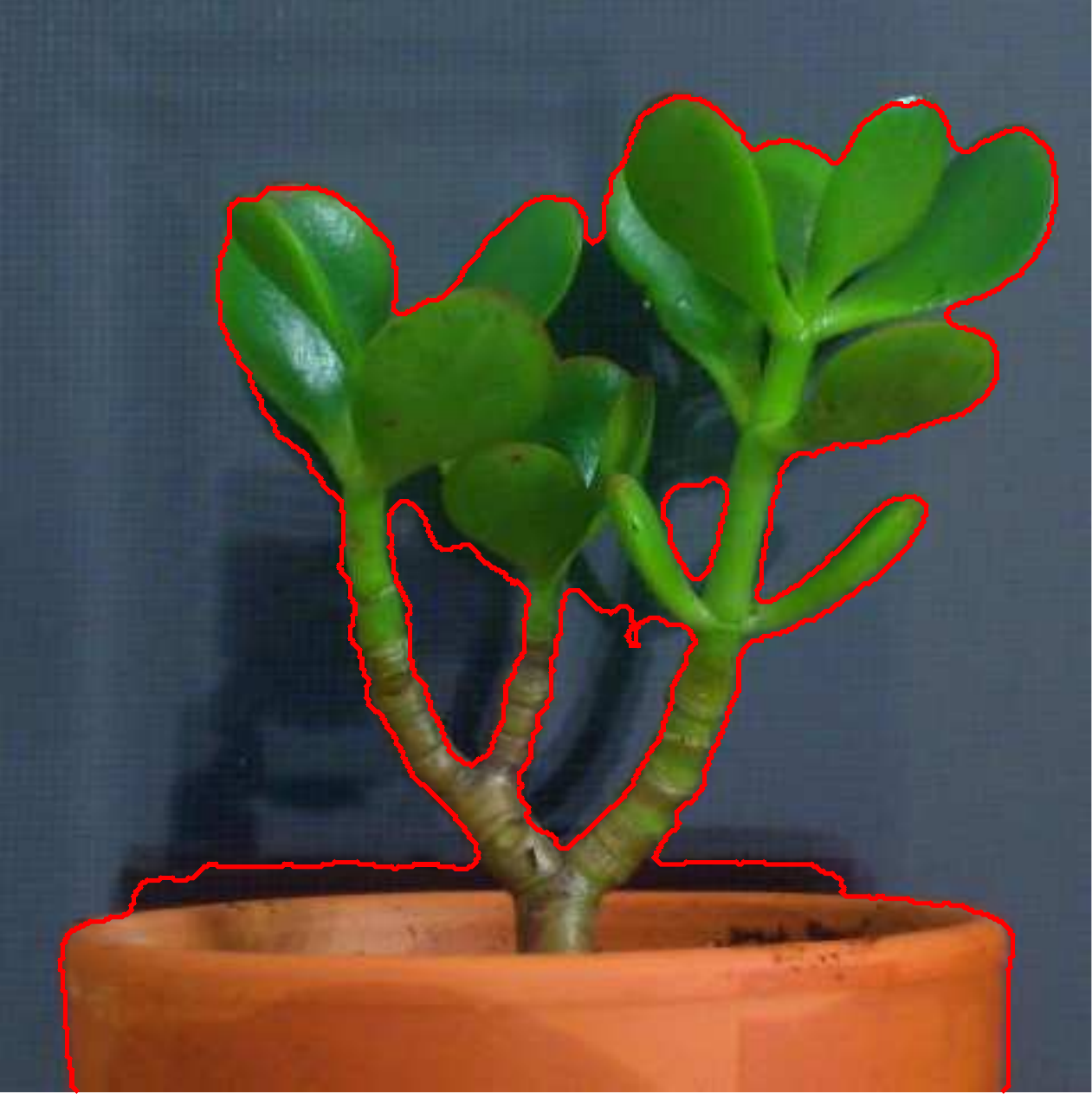}}
\,
\subfloat{\includegraphics[height=.1\columnwidth]{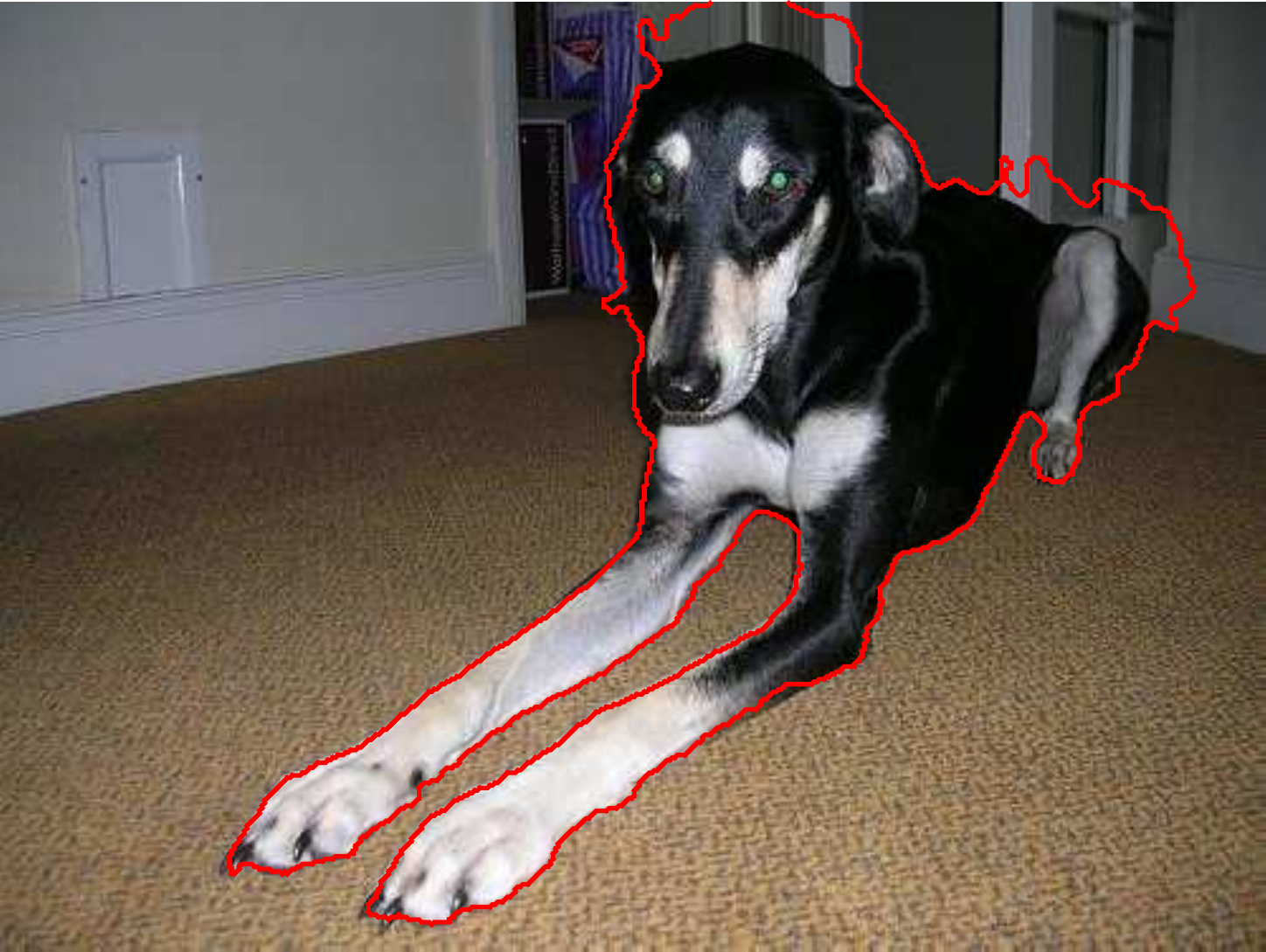}}
\hfil
\caption{Segmentation Results. Columns 1-5: GrabCut Dataset. Columns 6:9: Pascal Dataset. From top to bottom: Ground Truth, segmentation results from GrabCut, Laplacian Coordinates, Power Watershed, Chan-Vese and our method}
\label{fig:comparison_results}
\end{figure*}

\subsection{Graph Segmentation}
In this part, we will show how our active contour framework can be used for graph segmentation. First, we implemented the GAC method on Delaunay graphs, as discussed earlier. In \figurename{\ref{fig:referendum}} we present an example of geographical data segmentation, using the GAC method \cite{cas_kim_sap_97}. The data used is from the Greek Referendum of 01/07/2015 and the ``feature'' function is the percentage of ``NO'' votes in each polling place. The vertices of the graph correspond to the positions of the polling stations that are scaled to fit in the unit square. It is worth noting that the classic GAC method can only extract clusters that are inside the initial curve and the segmentation result depends greatly on the position of the initial contour. 

\begin{figure}[!thb]

\centering
\subfloat{\includegraphics[width=.25\columnwidth]{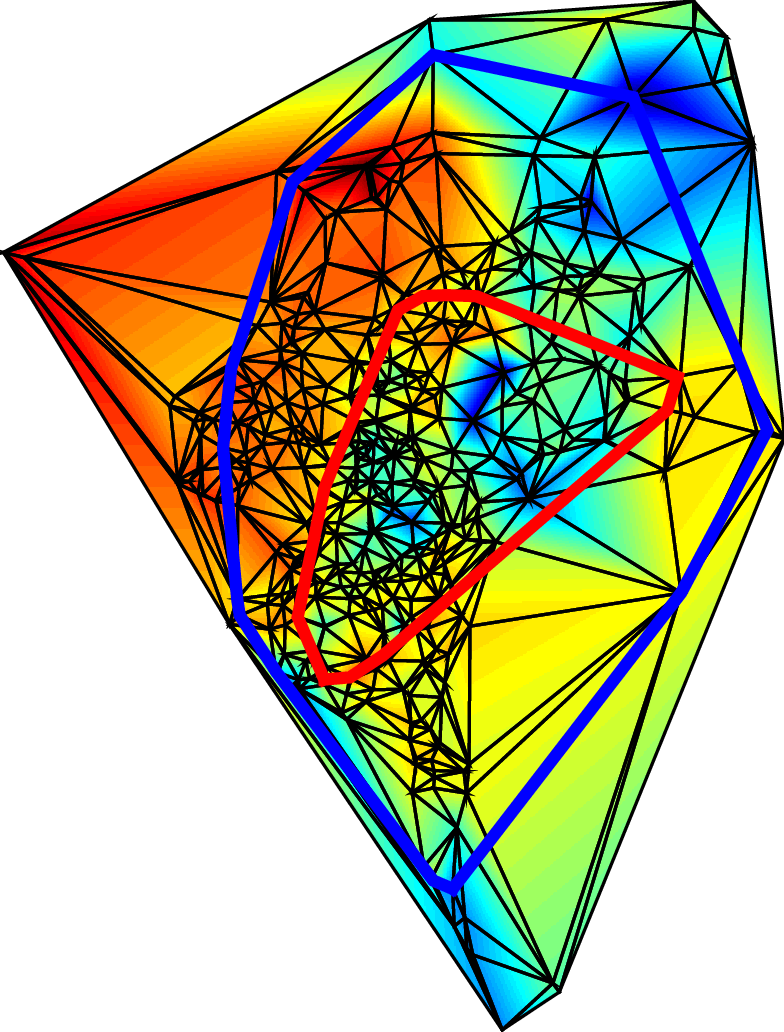}}
\caption{Data from the Greek Referendum of 01/07/2015 in the Athens metropolitan area. The graph numbers 443 vertices, the positions of the polling places.
The color of the surface plot varies from blue to red, with values closer to red indicating a higher percentage of ``NO''. The thin black edges correspond to
the graph edges. The position of the initial contour is shown with a thick blue line. The algorithm located the cluster surrounded by the thick red contour.}
\label{fig:referendum}
\end{figure}

Additionally, we will explore how the proposed Geodesic Active Regions model performs on non-regular graphs, by testing it in sub-sampled versions of the images from the GrabCut dataset. Specifically, let $I$ be an image from the above dataset. For a positive number $N$ we create a Delaunay graph $G$ with $N$ vertices that lie inside the unit square. Then for each vertex $v_i = (x_i,y_i)$ we define $I_G(v_i) = I(x_i,y_i)$. Our purpose is to provide a segmentation on $G$ equipped with the feature function $I_G$. Usually $N$ is significantly smaller than the total number of pixels in an image and this subsampling can help us obtain a fast estimation of the segmentation regions in cases we can afford slightly reduced accuracy. As we can see in Table \ref{table:grabcut_subsampling}, there is a noticeable performance drop as the graph size decreases, but for the most part this can be attributed to the loss of image detail as we move to lower resolutions. An additional reason for the performance degradation is the fact that active contour methods are continuous methods modeled with PDEs and thus their accuracy may suffer when using a coarse discretization. In this series of experiments, the vertices of the Delaunay graphs are irregularly spaced and are created using a variant of the method described in \cite{persson_strang}. Another possible option for the creation of the graphs that is used in \cite{bampis} is to perform watershed segmentation in the original image and choose the vertices as the centroids of the oversegmented regions.

\begin{table}[!h]
\centering
\caption{Performance of the GAR method on the GrabCut Dataset for different subsampling values}
\begin{tabular}{|c|c c c c c|}
\hline
\hline
\textbf{Method} & \textbf{RI} ($\uparrow$) & \textbf{IoU} ($\uparrow$) &\textbf{VoI} ($\downarrow$)  &\textbf{Error} ($\downarrow$)&\\    \hline
64$\times$ & 0.4920 & 0.7147 &  1.1019& 14.368 \%&\\ \hline
32$\times$ & 0.5265 & 0.7418 &  1.0438& 12.771 \%&\\ \hline
16$\times$ & 0.5624 & 0.7622 &  0.9832& 11.236 \%&\\ \hline
8$\times$ & 0.5917 & 0.7782 &  0.9375& 10.173 \%&\\ \hline
Full size & 0.7268 & 0.8519 &  0.6704& 7.793 \%&\\ \hline
\end{tabular}
\label{table:grabcut_subsampling}
\end{table}

\section{Conclusion}\label{section:concl}
In this paper we discussed the problem of active contours on graphs. With the use of the Finite Element method we generalized active contour models on graphs developed a novel computational framework to solve the corresponding levelset equations. Our method can be implemented in arbitrary graphs, however we focused on the family of Delaunay graphs that under certain conditions ensures good convergence properties for the solution of PDEs. One of the main advantages of our method is that it can give more accurate results in small graphs where Finite Difference approaches struggle to approximate efficiently the differential operators involved. Next, we extended the proposed framework to solve locally constrained active contour models and presented a generalization of narrow band levelset methods on graphs that allows to perform fast contour evolution on large graphs. This extension allowed us to reduce the computational complexity by an order of magnitude. Subsequently, we presented several applications to image and graph segmentation. In order to demonstrate the effectiveness of our framework, we developed a supervised extension of the Geodesic Active Contours that uses statistical region information. The proposed method achieves results that are within state-of-the-art. In addition, our method is applicable both on images and irregularly spaced graphs, while being supported by a solid theoretical model and an efficient algorithm.

\bibliographystyle{IEEEtranS}
\bibliography{IEEEabrv,paper.bib}

\end{document}